\newcommand{\cS}{\mathcal{S}}
\newcommand{\cA}{\mathcal{A}}
\newcommand{\RR}{\mathbb{R}}
\newcommand{\cF}{\mathcal{F}}
\newcommand{\cD}{\mathcal{D}}
\newcommand{\cZ}{\mathcal{Z}}
\newcommand{\cV}{\mathcal{V}}
\newcommand{\cW}{\mathcal{W}}
\newcommand{\cE}{\mathcal{E}}
\newcommand{\EE}{\mathbb{E}}
\newcommand{\Var}{\mathrm{Var}}
\newcommand{\ds}{\mathrm{dim}}
\newcommand{\wt}[1]{\widetilde{#1}}
\newcommand{\wh}[1]{\widehat{#1}}
\newcommand{\poly}{\mathrm{poly}}
\newcommand{\reg}{\mathrm{Reg}}
\newcommand{\filt}{\mathbb{F}}
\newcommand{\pairs}{\mathcal{Z}}
\newcommand{\funclass}{\mathcal{F}}
\newcommand{\ucbclass}{\mathcal{W}}
\newcommand{\states}{\mathcal{S}}
\newcommand{\actions}{\mathcal{A}}
\newtheorem{assum}{Assumption}
\newtheorem{thm}{Theorem}
\newtheorem{lem}{Lemma}
\newtheorem{prop}{Proposition}
\newtheorem{defn}{Definition}
\newtheorem{remark}{Remark}
\newcommand{\sen}{\mathsf{sensitivity}}
\newcommand{\cover}{\mathcal{C}}
\newcommand{\coversize}{\mathcal{N}}
\DeclareMathOperator*{\argmin}{arg\,min}
\DeclareMathOperator*{\argmax}{arg\,max}
\patchcmd{\@maketitle}{\begin{center}}{\begin{adjustwidth}{-0.2in}{-0.2in}\begin{center}}{}{}
\patchcmd{\@maketitle}{\end{center}}{\end{center}\end{adjustwidth}}{}{}
\title{Reinforcement Learning with General Value Function Approximation:\\
Provably Efficient Approach via Bounded Eluder Dimension}
\author{Ruosong Wang \\
	Carnegie Mellon University \\
	\texttt{ruosongw@andrew.cmu.edu}
	\and Ruslan Salakhutdinov \\
	Carnegie Mellon University \\
	\texttt{rsalakhu@cs.cmu.edu}
	\and Lin F. Yang\\
	University of California, Los Angles\\
	\texttt{linyang@ee.ucla.edu}}
\date{}
\begin{document}
\maketitle

\begin{abstract}
Value function approximation has demonstrated phenomenal empirical success in reinforcement learning (RL). Nevertheless, despite a handful of recent progress on developing theory for RL with linear function approximation, the understanding of \emph{general} function approximation schemes largely remains missing. In this paper, we establish a provably efficient RL algorithm with general value function approximation. We show that if the value functions admit an approximation with a function class $\mathcal{F}$, our algorithm achieves a regret bound of $\widetilde{O}(\mathrm{poly}(dH)\sqrt{T})$ where $d$ is a complexity measure of $\mathcal{F}$ that depends on the eluder dimension~[Russo and Van Roy, 2013] and log-covering numbers, $H$ is the planning horizon, and $T$ is the number interactions with the environment. Our theory generalizes recent progress on RL with linear value function approximation and does not make explicit assumptions on the model of the environment. Moreover, our algorithm is model-free and provides a framework to justify the effectiveness of algorithms used in practice.
\end{abstract}

\section{Introduction}

In reinforcement learning (RL), we study how an agent maximizes the cumulative reward by interacting with an unknown environment. 
RL finds enormous applications in a wide variety of domains, e.g., robotics~\citep{kober2013reinforcement}, education~\citep{koedinger2013new}, gaming-AI~\citep{shalev2016safe}, etc. 
The unknown environment in RL is often  modeled as a Markov decision process (MDP), in which there is a set of states $\cS$ that describes all possible status of the environment. At a state $s\in \cS$, an agent interacts with the environment by taking an action $a$ from an action space $\cA$. The environment then transits to another state $s'\in \cS$ which is drawn from some unknown transition distribution,
and the agent also receives an immediate reward.
The agent interacts with the environment episodically, where each episode consists of $H$ steps. 
The goal of the agent is to interact with the environment strategically such that after a certain number of interactions, sufficient information is collected so that the agent can act nearly optimally afterward.
The performance of an agent is measured by the \emph{regret}, which is defined as the difference between the total rewards collected by the agent and those a best possible agent would collect. 

Without additional assumptions on the structure of the MDP, the best possible algorithm achieves a regret bound of $\wt{\Theta}(\sqrt{H|\cS||\cA|T})$\footnote{Throughout the paper, we use $\wt{O}(\cdot)$ to suppress logarithmic factors. }~\citep{azar2017minimax},  where $T$ is the total number of steps the agent interacts with the environment.
In other words, the algorithm learns to interact with the environment nearly as well as an optimal agent after roughly $H|\cS||\cA|$ steps.
This regret bound, however, can be unacceptably large in practice.
E.g., the game of Go has a state space with size $3^{361}$, and the state space of certain robotics applications can even be continuous.
Practitioners apply function approximation schemes to tackle this issue, i.e., the value of a state-action pair is approximated by a function which is able to predict the value of unseen state-action pairs given a few training samples.
The most commonly used function approximators are deep neural networks (DNN) which have achieved remarkable success in playing video games \citep{mnih2015human}, the game of Go \citep{silver2017mastering}, and controlling robots \citep{akkaya2019solving}.
Nevertheless, despite the outstanding achievements in solving real-world problems, no convincing  theoretical guarantees were known about RL with general value function approximators like DNNs.

Recently, there is a line of research trying to understand RL with simple function approximators, e.g. linear functions.
For instance, given a feature extractor which maps state-action pairs to $d$-dimensional feature vectors, \citep{yang2019sample, yang2019reinforcement, jin2019provably, cai2019provably, modi2019sample, jia2019feature,zanette2019limiting, du2019provably, wang2019optimism, zanette2020learning, du2020agnostic} developed algorithms with regret bound proportional to $\poly(dH)\sqrt{T}$ which is independent of the size of $\cS\times \cA$.
Although being much more efficient than algorithms for the tabular setting, these algorithms require a well-designed feature extractor and also make restricted assumptions on the transition model.
 This severely limits the scope that these approaches can be applied to, since obtaining a good feature extractor is by no means easy and successful algorithms used in practice usually specify a function class (e.g. DNNs with a specific architecture) rather than a feature extractor. 
To our knowledge, the following fundamental question about RL with general function approximation remains unanswered at large:
\begin{center}
\emph{
Does RL with general function approximation learn to interact with an unknown environment provably efficiently?
}
\end{center}
In this paper, we address the above question by developing a provably efficient (both computationally and statistically) $Q$-learning algorithm that works with general value function approximators. 
To run the algorithm, we are only required to specify a value function class, \emph{without} the need for feature extractors or explicit assumptions on the transition model. Since this is the same requirement as algorithms used in practice like deep $Q$-learning~\citep{mnih2015human}, our theoretical guarantees on the algorithm provide a justification of why practical algorithms work so well.
Furthermore, 
we show that our algorithm enjoys a regret bound of $\wt{O}(\poly(dH) \sqrt{T})$ where $d$ is a complexity measure of the function class that depends on the \emph{eluder dimension}~\citep{russo2013eluder} and log-covering numbers.
Our theory also generalizes existing algorithms for linear and generalized linear function approximation and provides comparable regret bounds when applied to those settings.

\subsection{Related Work}
\paragraph{Tabular RL.}
There is a long line of research on the sample complexity and regret bound for RL in the tabular setting. See, e.g., 
\citep{kearns2002near, kakade2003sample, strehl2006pac, strehl2009reinforcement, jaksch2010near,szita2010model,azar2013minimax,lattimore2014near,  dann2015sample, osband2016lower, osband2017deep, agrawal2017optimistic,azar2017minimax,sidford2018near, dann2018policy,jin2018q, zanette2019tighter, zhang2020almost, wang2020long} and references therein. 
In particular, \cite{jaksch2010near} proved a tight regret lower bound $\Omega(\sqrt{H |\states| |\actions| T})$ and \cite{azar2017minimax} showed the first asymptotically tight regret upper bound $\wt{O}(\sqrt{H |\states| |\actions| T})$.
Although these algorithms achieve asymptotically tight regret bounds, they can not be applied in problems with huge state space due to the linear dependency on $\sqrt{|\states|}$ in the regret bound. 
Moreover, the regret lower bound $\Omega(\sqrt{H |\states| |\actions| T})$ demonstrates that without further assumptions, RL with huge state space is information-theoretically hard to solve. 
In this paper, we exploit the structure that the value functions lie in a function class with bounded complexity and devise an algorithm whose regret bound scales polynomially in the complexity of the function class instead of the number of states. 

\paragraph{Bandits.}
Another line of research studies bandits problems with linear function approximation~\citep{dani2008stochastic, abbasi2011improved, li2019nearly}.
These algorithms are later generalized to the generalized linear model~\citep{filippi2010parametric, li2017provably}.
A novel work of~\citet{russo2013eluder} studies bandits problems with general function approximation and proves that UCB-type algorithms and Thompson sampling achieve 
a regret bound of $\wt{O}(\sqrt{\dim_E \cdot \log(\coversize) T})$ where $\dim_E$ is the {\em eluder dimension} of the function class and $\coversize$ is the covering number of the function class. 
In this paper we study the RL setting with general value function approximation, and the regret bound of our algorithm also depends on the eluder dimension and the log-covering number of the function class. 
However, we would like to stress that the RL setting is much more complicated than the bandits setting, since the bandits setting is a special case of the RL setting with planning horizon $H = 1$ and thus there is no state transition in the bandits setting. 

\paragraph{RL with Linear Function Approximation.}
Recently there has been great interest in designing and analyzing algorithms for RL with linear function approximation.
See, e.g., \citep{yang2019sample, yang2019reinforcement, jin2019provably, cai2019provably, modi2019sample, jia2019feature,zanette2019limiting, du2019provably, wang2019optimism, Du2020Is, zanette2020learning, du2020agnostic}.
These papers design provably efficient algorithms under the assumption that there is a well-designed feature extractor available to the agent and the value function or the model can be approximated by a linear function or a generalized linear function of the feature vectors. 
Moreover, the algorithm in~\citep{zanette2020learning} requires solving the Planning Optimization Program which could be computationally intractable. 
In this paper, we study RL with general function approximation in which case a feature extractor may not even be available, and our goal is to develop an efficient (both computationally and statistically) algorithm with provable regret bounds without making explicit assumptions on the model. 

\paragraph{RL with General Function Approximation.}
It has been shown empirically that combining RL algorithms with neural network function approximators could lead to superior performance on various tasks~\citep{mnih2013playing, schaul2015prioritized, wang2015dueling, van2016deep, silver2017mastering, akkaya2019solving}. 
Theoretically, \citet{osband2014model} analyzed the regret bound of Thompson sampling when applied to RL with general function approximation. 
Compared to our result, \citet{osband2014model} makes explicit model-based assumptions (the transition operator and the reward function lie in a function class)  and their regret bound depends on the global Lipschitz constant.
In contrast, in this paper we focus on UCB-type algorithms with value-based assumptions, and our regret bound does not depend on the global Lipschitz constant.
Recently, \citet{ayoub2020model} proposed an algorithm for model-based RL with general function approximation based on value-targeted regression, and the regret bound of their algorithm also depends on the eluder dimension.
On the contrary, in this paper we focus on value-based RL algorithms. 

Recent theoretical progress has produced provably sample efficient algorithms for RL with general value function approximation, but many of these algorithms are relatively impractical.
In particular, \citep{jiang2017contextual, sun2019model, dong2019sqrt} devised algorithms whose sample complexity or regret bound can be upper bounded in terms of the Bellman rank or the witness rank. 
However, these algorithms are not computationally efficient.
The algorithm in~\citep{du2020agnostic} can also be applied in RL with general function approximation.
However, their algorithms require the transition of the MDP to be deterministic.
There is also a line of research analyzing Approximate Dynamic Programming (ADP) in RL with general function approximation~\citep{bertsekas1996neuro, munos2003error, szepesvari2005finite, antos2008learning, munos2008finite, chen2019information}.
These papers focus on the batch RL setting, and there is no exploration components in the algorithms.
The sample complexity of these algorithms usually depends on the concentrability coefficient and is thus incomparable to our results.   \section{Preliminaries}

Throughout the paper, for a positive integer $N$, we use $[N]$ to denote the set $\{1, 2, \ldots, N\}$.

\paragraph{Episodic Markov Decision Process.}
Let $M =\left(\states, \actions, P ,r, H, \mu\right)$ be a \emph{Markov decision process} (MDP)
where $\states$ is the state space, 
$\actions$ is the action space with bounded size, 
$P: \states \times \actions \rightarrow \Delta\left(\states\right)$ is the transition operator which takes a state-action pair and returns a distribution over states, 
$r : \states \times \actions \rightarrow [0, 1]$ is the deterministic reward function\footnote{We assume the reward function is deterministic only for notational convience. Our results can be readily generalized to the case that rewards are stochastic.},
$H \in \mathbb{Z}_+$ is the planning horizon  (episode length),
and $\mu \in \Delta\left(\states\right)$ is the initial state distribution. 

A policy $\pi$ chooses an action $a \in \actions$ based on the current state $s \in \states$ and the time step $h \in [H]$. 
Formally, $\pi = \{\pi_h\}_{h = 1}^H$ where for each $h \in [H]$, $\pi_h : \states \to \actions$ maps a given state to an action.
The policy $\pi$ induces a trajectory \[s_1,a_1,r_1,s_2,a_2,r_2,\ldots,s_{H},a_{H},r_{H},\]
where $s_1 \sim \mu$, $a_1 = \pi_1(s_1)$, $r_1 = r(s_1,a_1)$, $s_2 \sim P(s_1,a_1)$, $a_2 = \pi_2(s_2)$, etc.

An important concept in RL is the $Q$-function.
Given a policy $\pi$, a level $h \in [H]$ and a state-action pair
$(s,a) \in \states \times \actions$, the $Q$-function is defined as
\[
Q_h^\pi(s,a) = \EE\left[\sum_{h' = h}^{H}r_{h'}\mid s_h =s, a_h = a, \pi\right].
\]
Similarly, the value function of a given state
$s \in \states$ is defined as
\[
V_h^\pi(s)=\EE\left[\sum_{h' = h}^{H}r_{h'}\mid s_h =s,
  \pi\right].
\]
We use $\pi^*$ to denote an optimal policy, i.e., $\pi^*$ is a policy that maximizes \[\EE\left[\sum_{h = 1}^H r_h \mid \pi\right].\]
We also denote $Q_h^*(s,a) = Q_h^{\pi^*}(s,a)$
and $V_h^*(s) = V_h^{\pi^*}(s)$.

In the episodic MDP setting, the agent aims to learn the optimal policy by interacting with the environment during a number of episodes. 
For each $k \in [K]$, at the beginning of the $k$-th episode, the agent chooses a policy $\pi^k$ which induces a trajectory, based on which the agent chooses policies for later episodes. 
We assume $K$ is fixed and known to the agent, though our algorithm and analysis can be readily generalized to the case that $K$ is unknown in advance.
Throughout the paper, we define $T := KH$ to be the total number of steps that the agent interacts with the environment.

We adopt the following regret definition in this paper.
\begin{defn}
	The regret of an algorithm $\cA$ after $K$ episodes is defined as 
	\[
	\reg(K) = \sum_{k=1}^{K}
	V_1^{*}\left(s_1^k\right) - V_1^{\pi^{k}}\left(s_1^k\right)
\]
	where $\pi^k$ is the policy played by algorithm $\mathcal{A}$ at the $k$-th episode. 
\end{defn}

\paragraph{Additional Notations.}
For a function $f: \states \times \actions \to \RR$, define \[\|f\|_{\infty} = \max_{(s, a) \in \states \times \actions} |f(s, a)|.\]
Similarly, for a function $v : \states \to \RR$, define \[\|v\|_{\infty} = \max_{s \in \states} |v(s)|.\]
Given a dataset $\cD=\left\{(s_i, a_i, q_i)\right\}_{i=1}^{|\cD|} \subseteq \states \times \actions \times \RR$, for a function $f: \states \times \actions \to \RR$, define 
\[
\|f\|_{\cD}= \left(\sum_{t=1}^{|\cD|} \left(f(s_t, a_t) - q_t\right)^2\right)^{1/2}.
\]
For a set of state-action pairs $\pairs\subseteq \cS\times \cA$, for a function $f: \states \times \actions \to \RR$, define 
\[
\|f\|_{\pairs} = \left(\sum_{(s,a)\in \pairs} \left(f(s,a)\right)^2\right)^{1/2}.
\]
For a set of functions $\funclass \subseteq \{f:\cS\times\cA\rightarrow \RR\}$, we define the width function of a state-action pair $(s, a)$ as
\[
w(\funclass,s,a) = \max_{f,f'\in \funclass'} f(s,a) - f'(s,a).
\]

\paragraph{Our Assumptions.} 
Our algorithm (Algorithm~\ref{alg:main}) receives a function class $\funclass\subseteq\{f:\cS\times\cA\rightarrow [0, H + 1]\}$ as input.
We make the following assumption on the $Q$-functions throughout the paper.
\begin{assum}\label{assum:elude}
For any $V:\cS\rightarrow [0, H]$, there exists $f_V \in \funclass$ which satisfies
\begin{equation}\label{eqn:complete}
f_V(s, a) = r(s, a) + \sum_{s' \in \states}P(s' \mid s, a)V(s') \quad \forall(s, a)  \in \states \times \actions.
\end{equation}
\end{assum} 
Intuitively, Assumption~\ref{assum:elude} requires that for any $V:\cS\rightarrow [0, H]$, after applying the Bellman backup operator, the resulting function lies in the function class $\funclass$.
We note that Assumption~\ref{assum:elude} is very general and includes many previous assumptions as special cases.
For instance, for the tabular RL setting, $\funclass$ can be the entire function space of $\states \times \actions \rightarrow [0, H + 1]$.
For linear MDPs~\citep{yang2019sample, yang2019reinforcement, jin2019provably, wang2019optimism} where both the reward function 
$r : \states \times \actions \rightarrow [0, 1]$ and the transition operator $P: \states \times \actions \rightarrow \Delta\left(\states\right)$
are linear functions of a given feature extractor $\phi : \states \times \actions \to \mathbb{R}^{d}$, $\funclass$ can be defined as the class of linear functions with respect to $\phi$.
In practice, when $\funclass$ is a function class with sufficient expressive power (e.g. deep neural networks), Assumption~\ref{assum:elude} (approximately) holds.
In Section~\ref{sec:misspecify}, we consider a misspecified setting where~\eqref{eqn:complete} only holds approximately, and we show that our algorithm still achieves provable regret bounds in the misspecified setting.

The complexity of $\funclass$ determines the learning complexity of the RL problem under consideration.
To characterize the complexity of $\funclass$, we use the following definition of eluder dimension which was first introduced in~\citep{russo2013eluder} to characterize the complexity of different function classes in bandits problems. 

\begin{defn}[Eluder dimension]
Let $\varepsilon\ge 0$ and $\pairs =\{(s_i, a_i)\}_{i=1}^n\subseteq \cS\times\cA$ be a sequence of state-action pairs.
\begin{itemize}
	\item A state-action pair $(s,a)\in \cS\times \cA$ is \emph{$\varepsilon$-dependent} on $\pairs$ with respect to $\funclass$ if any $f, f'\in \funclass$ satisfying 
	$\|f - f'\|_{\pairs} \le \varepsilon$ also satisfies $|f(s,a) - f'(s,a)|\le \varepsilon$.
	\item An $(s,a)$ is \emph{$\varepsilon$-independent} of $\pairs$ with respect to $\funclass$ if $(s,a)$ is not $\varepsilon$-dependent on $\pairs$.
	\item The \emph{$\varepsilon$-eluder dimension} $\ds_E(\funclass,\varepsilon)$ of a function class $\funclass$ is the length of the longest sequence of elements in $\states \times \actions$ such that, for some $\varepsilon' \ge \varepsilon$, every element is $\varepsilon'$-independent of its predecessors.
\end{itemize}
\end{defn}

It has been shown in~\citep{russo2013eluder} that $\dim_E(\funclass, \varepsilon) \le |\states| |\actions|$ when $\states$ and $\actions$ are finite. 
When $\funclass$ is the class of linear functions, i.e., $f_{\theta}(s, a) = \theta^{\top} \phi(s, a)$ for a given feature extractor $\phi : \states \times \actions \to \mathbb{R}^d$, $\dim_E(\funclass, \varepsilon) = O(d \log(1 / \varepsilon))$.
When $\funclass$ is the class generalized linear functions of the form $f_{\theta}(s, a) = g(\theta^{\top} \phi(s, a))$ where $g$ is an increasing continuously differentiable function, $\dim_E(\funclass, \varepsilon) = O(d r^2 \log(\overline{h} / \varepsilon))$ where
\[
r = \frac{\sup_{\theta, (s, a) \in \states \times \actions} g'(\theta^{\top} \phi(s, a))}{\inf_{\theta, (s, a) \in \states \times \actions} g'(\theta^{\top} \phi(s, a))}
\]
and
\[
\overline{h} = \sup_{\theta, (s, a) \in \states \times \actions} g'(\theta^{\top} \phi(s, a)).
\]
In~\citep{osband2014model}, it has been shown that when $\funclass$ is the class of quadratic functions, i.e., $f_{\Lambda}(s, a) =  \phi(s, a)^{\top}  \Lambda  \phi(s, a)$ where $\Lambda \in \mathbb{R}^{d \times d}$, $\dim_E(\funclass, \varepsilon) = O(d^2 \log(1 / \varepsilon))$.

We further assume the function class $\funclass$ and the state-action pairs $\states \times \actions$ have bounded complexity in the following sense.
\begin{assum}\label{assum:cover}
For any $\varepsilon >0$, the following holds:
\begin{enumerate}
\item there exists an $\varepsilon$-cover $\cover(\funclass, \varepsilon) \subseteq \funclass$ with size $|\cover(\funclass, \varepsilon)| \le \coversize(\funclass, \varepsilon)$, such that for any $f\in \funclass$, there exists $f'\in \cover(\funclass,\varepsilon)$ with $\|f-f'\|_{\infty}\le \varepsilon$;
\item there  exists an $\varepsilon$-cover $\cover(\cS\times \cA, \varepsilon)$ with size $|\cover(\cS\times \cA, \varepsilon)| \le \coversize(\cS\times\cA, \varepsilon)$, such that for any $(s,a)\in \cS\times \cA$, there exists $(s',a')\in \cover(\cS\times\cA,\varepsilon)$ with $\max_{f\in \funclass} |f(s,a)-f(s', a')|\le \varepsilon$.
\end{enumerate}
\end{assum}

Assumption~\ref{assum:cover} requires both the function class $\funclass$ and the state-action pairs $\states \times \actions$ have bounded covering numbers.
Since our regret bound depends {\em logarithmically} on $\coversize(\funclass, \cdot)$ and $\coversize(\states \times \actions, \cdot)$, it is acceptable for the covers to have exponential size.
In particular, when $\states$ and $\actions$ are finite, it is clear that $\log \coversize(\funclass, \varepsilon) = \wt{O}(|\states| |\actions|)$ and $\log \coversize(\states \times \actions, \varepsilon) = \log(|\states||\actions|)$.
For the case of $d$-dimensional linear functions and generalized linear functions, $\log \coversize(\funclass, \varepsilon) = \wt{O}(d)$ and $\log \coversize(\states \times \actions, \varepsilon) = \wt{O}(d)$.
For quadratic functions,  $\log \coversize(\funclass, \varepsilon) = \wt{O}(d^2)$ and $\log \coversize(\states \times \actions, \varepsilon) = \wt{O}(d)$.

 \section{Algorithm}
\paragraph{Overview.}
The full algorithm is formally presented in Algorithm~\ref{alg:main}.
From a high-level point of view, our algorithm resembles least-square value iteration (LSVI) and falls in a similar framework as the algorithm  in~\citep{jin2019provably, wang2019optimism}.
At  the beginning of each episode $k \in [K]$,
we maintain a replay buffer
$\{(s_h^{\tau}, a_h^{\tau}, r_h^{\tau})\}_{(h, \tau) \in [H] \times [k - 1]}$
which contains all existing samples.
We set $Q^{k}_{H+1} = 0$, and calculate $Q^k_{H}, Q^k_{H - 1}, \ldots, Q^k_{1}$ iteratively as follows.
For each $h=H, H-1, \ldots, 1$,
\begin{align}
\label{eqn:opt-backup}
f_h^k(\cdot, \cdot)\gets \argmin_{f\in \cF}  
\sum_{\tau=1}^{k-1}\sum_{h'=1}^{H}\left(f(s^\tau_{h'}, a^\tau_{h'}) - \left(r^{\tau}_{h'} +\max_{a\in \cA}  Q^k_{h+1}(s_{h'+1}^\tau, a)\right)\right)^2
\end{align}
and define 
\[Q^k_{h}(\cdot, \cdot) = \min\left\{f_{h}^k(\cdot, \cdot)+ b^k_h(\cdot, \cdot), H\right\}.\]
Here, $b^k_h(\cdot,\cdot)$ is a bonus function to be defined shortly.
The above equation optimizes a least squares objective 
to estimate the next step value.
We then play the greedy policy with respect to $Q^{k}_{h}$ to collect data for the $k$-th episode.
The above procedure is repeated until all the $K$ episodes are completed.

\paragraph{Stable Upper-Confidence Bonus Function.}
With more collected data, the least squares predictor is expected to return a better approximate the true $Q$-function. 
To encourage exploration, we carefully design a bonus function $b^k_h(\cdot, \cdot)$ which guarantees that, with high probability, $Q^k_{h+1}(s,a)$ is an overestimate of the one-step backup. 
The bonus function $b^k_h(\cdot,\cdot)$ is guaranteed to tightly  characterize the estimation error of the one-step backup \[r(\cdot, \cdot)+\sum_{s' \in \states} P(s' \mid \cdot, \cdot)V^{k}_{h+1}(s'),\] where \[V^{k}_{h+1}(\cdot) = \max_{a \in \actions}Q^k_{h + 1}(\cdot, a)\] is the value function of the next step.
The bonus function $b^k_h(\cdot,\cdot)$ is designed by carefully prioritizing important data and hence is \emph{stable} even when the replay buffer has large cardinality. 
A  detailed explanation and implementation of $b^k_h(\cdot,\cdot)$ is provided in Section~\ref{sec:upper-confidence}.
\begin{algorithm}[t!]
	\caption{\texttt{$\cF$-LSVI$(\delta)$}\label{alg:main}}
	\begin{algorithmic}[1]
		\State \textbf{Input}: failure probability $\delta \in (0, 1)$ and number of episodes $K$
\For{episode $k=1,2, \ldots, K$}
		\State Receive initial state $s_1^k \sim \mu$
		\State $Q^k_{H+1}(\cdot, \cdot) \gets 0$ and $V^k_{H + 1}(\cdot) \gets 0$
			\State ${\pairs}^k\gets  \left\{(s^{\tau}_{h'}, a^{\tau}_{h'})\right\}_{(\tau, h') \in [k-1] \times [H]}$ \label{line:state-action-pair}
			\For{$h=H, \ldots, 1$}
				\State $\cD^{k}_{h} \gets \left\{\left(s^{\tau}_{h'}, a^{\tau}_{h'}, r_{h'}^{\tau} + V^k_{h+1}(s_{h'+1}^\tau, a)\right)\right\}_{(\tau, h') \in [k-1] \times [H]}$
				\State $f^k_{h}\gets \argmin_{f\in \cF}\|f\|_{\cD^{k}_h}^2$\label{line:opt}
				\State $b^k_h(\cdot, \cdot)\gets$ 
				\texttt{Bonus}($\cF$, $f_h^k$, $\pairs^k$, $\delta$) (Algorithm~\ref{alg:bonus})
\State $Q^k_{h}(\cdot, \cdot) \gets \min\{f^k_h(\cdot, \cdot) +  b_h^k(\cdot,\cdot), H\}$ and $V^k_h(\cdot) = \max_{a \in \actions}Q^k_h(\cdot, a)$
				\State $\pi^k_h(\cdot) \gets \argmax_{a \in \actions}Q^k_h(\cdot, a)$
			\EndFor
		\For{$h=1,2, \ldots, H$}
			\State Take action $a^k_h\gets  \pi^k_h(s^k_h)$ and observe $s_{h+1}^k \sim P(\cdot \mid s^{k}_h, a^{k}_h)$ and $r_h^{k} = r(s^{k}_h, a^{k}_h)$
		\EndFor
		
		\EndFor
		
	\end{algorithmic}
\end{algorithm}
 \subsection{Stable UCB via Importance Sampling}
\label{sec:upper-confidence}
In this section, we formally define the bonus function $b^k_h(\cdot, \cdot)$ used in Algorithm~\ref{alg:main}. 
The bonus function is designed to estimate the confidence interval of our estimate of the $Q$-function. 
In our algorithm, we define the bonus function to be the width function $b^k_h(\cdot, \cdot) = w(\cF^{k}_h, \cdot, \cdot)$
where the confidence region $\cF^{k}_h$ is defined so that \[r(\cdot, \cdot)+\sum_{s' \in \states} P(s' \mid \cdot, \cdot)V^{k}_{h+1}(s') \in \cF^{k}_h\] with high probability.
By definition of the width function, $b^k_h(\cdot, \cdot)$ gives an upper bound on the confidence interval of the estimate of the $Q$-function, since the width function {\em maximizes} the difference between all pairs of $Q$-functions that lie in the confidence region. 
We note that similar ideas have been applied in the bandit literature~\citep{russo2013eluder}, in reinforcement learning with linear function approximation~\citep{du2019provably} and in reinforcement learning with general function apprximation in deterministic systems~\citep{du2020agnostic}.

To define the confidence region $\cF^{k}_h$, a natural definition would be 
\[
\cF^{k}_h=\left\{f\in \cF\mid\|f-f_h^k\|_{{\cZ}^k}^2
 \le \beta \right\}
\]
where $\beta$ is defined so that \[r(\cdot, \cdot)+\sum_{s' \in \states} P(s' \mid \cdot, \cdot)V^{k}_{h+1}(s') \in \cF^{k}_h\] with high probability, and recall that $\cZ^k= \left\{(s^{\tau}_{h'}, a^{\tau}_{h'})\right\}_{(\tau, h') \in [k-1] \times [H]}$ is the set of state-action pairs defined in Line~\ref{line:state-action-pair}.
However, as one can observe, the complexity of such a bonus function could be extremely high as it is defined by a dataset ${\cZ}^k$ whose size can be as large as $T = KH$.
A high-complexity bonus function could potentially introduce \emph{instability} issues in the algorithm.
Technically, we require a stable bonus function to allow for highly concentrated estimate of the one-step backup so that the confidence region $\cF^{k}_h$ is accurate even for bounded $\beta$.
Our strategy to ``stabilize'' the bonus function is to reduce the size of the dataset by importance sampling, so that
 only important state-action pairs are kept and those unimportant ones (which potentially induce instability) are ignored.
Another benefit of reducing the size of the dataset is that it leads to superior computational complexity when evaluating the bonus function in practice. 
In later part of this section, we introduce an approach to estimate the importance of each state-action pair and a corresponding sampling method based on that.
Finally, we note that importance sampling has also been applied in practical RL systems.
For instance, in prioritized experience replay~\citep{schaul2015prioritized}, the importance is measured by the TD error.

\paragraph{Sensitivity Sampling.}
Here we present a framework to subsample a given dataset, so that the confidence region is approximately preserved while the size of the dataset is greatly reduced.
Our framework is built upon the {\em sensitivity sampling} technique introduced in~\citep{langberg2010universal, feldman2011unified, feldman2013turning}.

\begin{defn}\label{def:sensitivity}
For a given set of state-action pairs $\pairs \subseteq \states \times \actions$ and a function class $\funclass$, for each $z \in \pairs$, define the $\lambda$-sensitivity of $(s, a)$ with respect to $\pairs$ and $\funclass$ to be
\[
\sen_{\pairs, \funclass, \lambda}(s, a) = \max_{\substack{f, f' \in F \\ \|f - f'\|_{\pairs}^2 \ge \lambda}} \frac{(f(s, a) - f'(s, a))^2}{\|f - f'\|_{\pairs}^2 }.
\]
\end{defn}
Sensitivity measures the importance of each data point $z$ in $\pairs$ by considering the pair of functions $f, f' \in \funclass$ such that $z$ contributes the most to $\|f - f'\|_{\pairs}^2$.
In Algorithm~\ref{alg:sampling}, we define a procedure to sample each state-action pair with sampling probability proportional to the sensitivity.
In this analysis, we show that after applying Algorithm~\ref{alg:sampling} on the input dataset $\cZ$, with high probability, the confidence region $\left\{f\in \cF\mid\|f-f_h^k\|_{{\cZ}}^2 \le \beta \right\}$ is approximately preserved, while the size of the subsampled dataset is upper bounded by the eluder dimension of $\funclass$ times the log-covering number of $\funclass$.
\begin{algorithm}[t]
	\caption{\texttt{Sensitivity-Sampling}($\cF$, $\cZ$, $\lambda$, $\varepsilon$, $\delta$)\label{alg:sampling}}
	\begin{algorithmic}[1]
	\State \textbf{Input}: function class $\cF$, set of state-action pairs $\cZ \subseteq \cS\times\cA$, accuracy parameters $\lambda, \varepsilon > 0$ and failure probability $\delta\in (0,1)$
	\State Initialize  $\cZ'\gets\{\}$
	\State For each $z\in \cZ$, let $p_z$ to be smallest real number such that $1 / p_z$ is an integer and
	\begin{align}
	\label{eqn:sensitivity} 
	p_z \ge \min\{1, \sen_{\pairs, \funclass, \lambda}(z) \cdot 72\ln (4 \coversize(\funclass, \varepsilon / 72 \cdot \sqrt{\lambda  \delta / (|\pairs|)})/ \delta ) / \varepsilon^2\}
	\end{align}
	\State For each $z\in \cZ$, independently add $1/p_z$ copies of $z$ into $\cZ'$ with probability $p_z$
	\State \Return $\cZ'$
	\end{algorithmic}
\end{algorithm}

\paragraph{The Stable Bonus Function.} With the above sampling procedure, we are now ready to obtain a stable bonus function which is formally defined in Algorithm~\ref{alg:bonus}.
In Algorithm~\ref{alg:bonus}, we first subsample the given dataset $\cZ$ and then round the reference function $\bar{f}$ and all data points in the subsampled dataset $\overline{\pairs}$ to their nearest neighbors in a $1 / (8 \sqrt{4 T /\delta})$-cover. 
We discard the subsampled dataset if its size is too large (which happens with low probability as guaranteed by our analysis), and then define the confidence region using the new dataset and the rounded reference function.
 
We remark that in Algorithm~\ref{alg:bonus},  we round the reference function $\bar{f}$ and the state-action pairs in $\cZ$ mainly for the purpose of theoretical analysis.
In practice, the reference function and the state-action pairs are always stored with bounded precision, in which case explicit rounding is unnecessary.
Moreover, when applying Algorithm~\ref{alg:bonus} in practice, if the eluder dimension of the function class is unknown in advance, one may treat $\beta(\funclass, \delta)$ in \eqref{eqn:beta} as a tunable parameter.  
\begin{algorithm}[t]
	\caption{		
		\texttt{Bonus}($\cF$, $\bar{f}$, $\cZ$, $\delta$)\label{alg:bonus}}
	\begin{algorithmic}[1]
		\State \textbf{Input}: function class $\cF$,  reference function $\bar{f}\in\cF$, state-action pairs $\cZ \subseteq \cS\times\cA$ and failure probability $\delta\in (0,1)$
		\State $\pairs'\gets 
		\texttt{Sensitivity-Sampling}\big(\cF, \cZ, \delta / (16 T), 1/2, \delta\big)
		$
		\Comment{Subsample the dataset}
		\State  $\pairs'\gets \{\}$ if $|\pairs'| \ge 4T / \delta$ or the number of distinct elements in $\pairs'$ exceeds
		\[
		6912  \ds_E(\funclass, \delta/(16T^2))\log(64H^2T^2/\delta) \ln T \ln (4 \coversize(\funclass, \delta / (566T))/ \delta)
		\]
		\State 
		Let $\wh{f}  \in \cover(\funclass, 1 / (8 \sqrt{4 T /\delta}))$ be such that $\|\bar{f} - \wh{f}\|_{\infty} \le  1 / (8 \sqrt{4 T /\delta})$
		\Comment{Round $\bar{f}$}
		\State $\wh{\pairs}\gets \{\}$
		\For{$z \in \pairs'$}		\Comment{Round state-action pairs}
		\State Let $\wh{z} \in \cover(\states \times \actions, 1 / (8 \sqrt{4 T /\delta}))$ be such that $\sup_{f,f'\in \cF}|f(z) - f'(z))| \le 1 / (8 \sqrt{4 T /\delta})$
		\State $\wh{\pairs}\gets \wh{\pairs}\cup \{\wh{z}\}$
		\EndFor
		\State 
		\Return 
		 $\wh{w}(\cdot, \cdot) := w(\wh{\funclass}, \cdot, \cdot)$, where
		$\wh{\funclass}=\left\{f\in \cF\mid\|f-\wh{f}\|_{\wh{\pairs}}^2  \le 3\beta(\funclass, \delta)+ 2\right\}$
and
				\begin{equation}
				\label{eqn:beta}
				\beta(\cF, \delta) = c'H^2\cdot\log^2(T/\delta)\cdot
				\ds_E(\funclass, \delta/T^3)\cdot  \ln ( \coversize(\funclass, \delta/T^2)/ \delta )\cdot \log\left(\coversize(\states \times \actions, \delta/T)) \cdot T / \delta\right)
				\end{equation}
				for some absolute constants $c'>0$.
	\end{algorithmic}
\end{algorithm}

\subsection{Computational Efficiency}
Finally, we discuss how to implement our algorithm computationally efficiently.
To implement Algorithm~\ref{alg:main}, in Line~\ref{line:opt}, one needs to solve an empirical risk minimization (ERM) problem which can often be efficiently solved using appropriate optimization methods.
To implement Algorithm~\ref{alg:bonus}, one needs to evaluate the width function $w(\wh{\funclass}, \cdot, \cdot)$ for a confidence region $\wh{\funclass}$ of the form
\[\wh{\funclass}=\left\{f\in \cF\mid\|f-\wh{f}\|_{\pairs}^2
 \le \beta\right\},\] which is a constrained optimization problem.
 When $\funclass$ is the class of linear functions, there is a closed-form formula for the width function and thus the width function can be efficiently evaluated in this case. 
  To implement Algorithm~\ref{alg:sampling}, one needs to efficiently estimate $\lambda$-sensitivity of all state-action pairs in a given set $\pairs$.
 When $\funclass$ is the class of linear functions, sensitivity is equivalent to leverage score~\citep{drineas2006sampling} which can be efficiently estimated~\citep{cohen2015uniform, clarkson2017low}.
 For a general function class $\funclass$, we give an algorithm to estimate the $\lambda$-sensitivity in Appendix~\ref{sec:estimate} which is computationally efficient if one can efficiently judge whether a given state-action pair $z$ is $\varepsilon$-independent of a set of state-actions pairs $\pairs$ with respect to $\funclass$ for a given $\varepsilon > 0$.

 \section{Theoretical Guarantee}

In this section we provide the theoretical guarantee of Algorithm~\ref{alg:main}, which is stated in Theorem~\ref{thm:main}.
\begin{thm}
	\label{thm:main}
	Under Assumption~\ref{assum:elude}, after interacting with the environment for $T=KH$ steps, with probability $1-\delta$, Algorithm~\ref{alg:main} achieves a a regret bound of
	\[
	\reg(K)
	\le 
\sqrt{
		\iota\cdot H^2 \cdot T},
\]
	where
	\[
	\iota
	\le  C\cdot \log^2\left(T/\delta\right)\cdot
		\ds_E^2\left(\funclass, \delta/T^3\right)\cdot  \ln \left(\coversize\left(\funclass, \delta/T^2\right) / \delta \right)\cdot \log\left(\coversize\left(\states \times \actions, \delta/T\right) \cdot T / \delta\right)
	\]
	for some constant $C>0$.
\end{thm}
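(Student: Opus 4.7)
The plan is to follow the optimism-based LSVI-UCB template, generalized to arbitrary function classes. I would split the proof into four stages: (i) establishing that the Bellman backup of $V^k_{h+1}$ lies in the confidence region $\widehat{\funclass}^k_h$ implicit in the bonus with high probability, (ii) deducing optimism $Q^k_h \geq Q^*_h$ by backward induction, (iii) the standard regret decomposition bounding $\reg(K)$ by a sum of bonuses plus a martingale, and (iv) bounding the cumulative bonus using the eluder dimension pigeonhole argument of Russo and Van Roy.

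For step (i), by Assumption~\ref{assum:elude} the true backup $(\mathcal{T}V^k_{h+1})(s,a) := r(s,a) + \sum_{s'} P(s'\mid s,a)V^k_{h+1}(s')$ lies in $\funclass$, so least-squares regression of $f^k_h$ against $V^k_{h+1}$-labels gives $\|f^k_h - \mathcal{T}V^k_{h+1}\|_{\cZ^k}^2 \lesssim \beta(\funclass,\delta)$ via a Freedman/self-normalized concentration bound, provided we can take a union bound over the possible realizations of $V^k_{h+1}$. I would then invoke the approximation guarantee of Algorithm~\ref{alg:sampling}: with high probability the subsampled dataset $\wh{\pairs}$ preserves the quadratic form $\|\cdot\|_{\cZ^k}^2$ up to a constant factor for every $f \in \funclass$, so that $\mathcal{T}V^k_{h+1}$ remains in the inflated ellipsoid $\wh{\funclass}^k_h = \{f : \|f-\wh{f}\|^2_{\wh{\pairs}} \leq 3\beta+2\}$ (the additive $2$ absorbs the rounding errors in $\wh{f}$ and $\wh{z}$). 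Optimism (step (ii)) then follows by backward induction: since both $f^k_h$ and $\mathcal{T}V^k_{h+1}$ lie in $\wh{\funclass}^k_h$, the width function satisfies $b^k_h(s,a) = w(\wh{\funclass}^k_h,s,a) \geq \mathcal{T}V^k_{h+1}(s,a) - f^k_h(s,a)$, so $Q^k_h(s,a) \geq \min\{\mathcal{T}V^k_{h+1}(s,a), H\} \geq Q^*_h(s,a)$ by the inductive hypothesis $V^k_{h+1} \geq V^*_{h+1}$.

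Step (iii) is the routine unrolling: $V^k_h(s^k_h) - V^{\pi^k}_h(s^k_h) \leq 2 b^k_h(s^k_h, a^k_h) + (V^k_{h+1}(s^k_{h+1}) - V^{\pi^k}_{h+1}(s^k_{h+1})) + \xi^k_h$, where $\xi^k_h$ is a martingale difference controlled by Azuma--Hoeffding, yielding
\[
\reg(K) \leq 2 \sum_{k=1}^K \sum_{h=1}^H b^k_h(s^k_h, a^k_h) + O\bigl(H\sqrt{T \log(1/\delta)}\bigr).
\]
For step (iv), I would use the eluder pigeonhole: whenever $b^k_h(s^k_h, a^k_h) > \varepsilon$, the pair $(s^k_h,a^k_h)$ is $\varepsilon$-independent with respect to $\funclass$ of an appropriately-chosen subset of previously observed pairs at level $h$, and the definition of $\dim_E$ caps the number of such events at $O(\beta(\funclass,\delta)/\varepsilon^2 \cdot \dim_E(\funclass,\varepsilon))$. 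Summing over the $O(\log T)$ dyadic thresholds and applying Cauchy--Schwarz over $h \in [H]$ gives $\sum_{k,h} b^k_h(s^k_h,a^k_h) \lesssim H \sqrt{\dim_E(\funclass,\cdot)\cdot \beta(\funclass,\delta)\cdot T}$, which yields $\reg(K) \leq \sqrt{\iota\cdot H^2 \cdot T}$ with the claimed $\iota$ (the two factors of $\dim_E$ arise as one inside $\beta$ and one from the pigeonhole).

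The main obstacle is step (i), specifically carrying out the union bound. The label $V^k_{h+1}$ in the regression defining $f^k_h$ depends on bonuses $b^k_{h'}$ for $h' > h$, and each such bonus is parameterized by the entire replay buffer of size up to $T$, so a naive covering argument blows up. This is precisely why sensitivity sampling is built into Algorithm~\ref{alg:bonus}: after subsampling, the parameterizing dataset has size at most $\widetilde{O}(\dim_E(\funclass,\cdot) \cdot \log\coversize(\funclass,\cdot))$, and after rounding $\wh{f}$ and each $\wh{z}$ to the $1/(8\sqrt{4T/\delta})$-covers the effective family of bonus functions has log-cardinality $\widetilde{O}(\dim_E \cdot \log\coversize(\funclass,\cdot) \cdot \log\coversize(\cS\times\cA,\cdot))$. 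Verifying (a) that sensitivity sampling preserves the ellipsoid up to constants uniformly over $\funclass$ with high probability via a Bernstein bound tailored to the sensitivity scores, (b) that the rounding perturbs the width function by only $O(1/\sqrt{T/\delta})$, and (c) that the truncation in Algorithm~\ref{alg:bonus} discarding oversized subsamples occurs with negligible probability, are the technical steps I expect to absorb the bulk of the effort.
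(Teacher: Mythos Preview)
Your proposal is correct and follows essentially the same route as the paper: concentration of the least-squares fit via a union bound over the finite family of possible bonus functions (whose log-cardinality is controlled by the sensitivity-sampling size bound together with the rounding to covers), optimism by backward induction, the standard regret decomposition into bonuses plus a martingale, and the Russo--Van Roy eluder pigeonhole for the cumulative bonus. One small discrepancy worth flagging in step~(iv): you run the eluder argument separately at each level $h$ and then Cauchy--Schwarz over $h\in[H]$, whereas the paper treats all $(k,h)$ jointly, exploiting that $\pairs^k$ already pools state-action pairs across all levels; the joint argument (Lemma~\ref{lem:sum-num-wk} and Lemma~\ref{lem:sum-b}) yields $\sum_{k,h} b^k_h \lesssim \sqrt{\dim_E\cdot \beta\cdot T}$ without the leading $H$ you wrote, and this saved factor is needed to land exactly on the stated $\iota$.
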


\begin{remark}
For the tabular setting, we may set $\funclass$ to be the entire function space of $\states \times \actions \rightarrow [0, H + 1]$.
Recall that when $\states$ and $\actions$ are finite, for any $\varepsilon > 0$, $\ds_E(\funclass, \varepsilon) \le |\states| |\actions|$, $\log( \coversize(\funclass, \varepsilon)) = \widetilde{O}(|\states||\actions|)$ and $\log( \coversize(\states \times \actions, \varepsilon)) = O(\log(|\states||\actions|))$, and thus the regret bound in Theorem~\ref{thm:main} is $\widetilde{O}(\sqrt{|\states|^3|\actions|^3H^2T})$ which is worse than the near-optimal bound in~\citep{azar2017minimax}.
However, when applied to the tabular setting, our algorithm is similar to the algorithm in~\citep{azar2017minimax}.
By a more refined analysis specialized to the tabular setting, the regret bound of our algorithm can be improved using techniques in~\citep{azar2017minimax}.
We would like to stress that our algorithm and analysis tackle a much more general setting and recovering the optimal regret bound for the tabular setting is not the focus of the current paper. 
\end{remark}

\begin{remark}
When $\funclass$ is the class of $d$-dimensional linear functions, we have $\ds_E(\funclass, \varepsilon)  = \widetilde{O}(d)$, $\log( \coversize(\funclass, \varepsilon)) = \widetilde{O}(d)$ and $\log( \coversize(\states \times \actions, \varepsilon)) = \widetilde{O}(d)$ and thus the regret bound in Theorem~\ref{thm:main} is $\widetilde{O}(\sqrt{d^4H^2T})$, which is worse by a $\wt{O}(\sqrt{d})$ factor when compared to the bound in~\citep{jin2019provably, wang2019optimism},
and is worse by a $\wt{O}(d)$ factor when compared to the bound in~\citep{zanette2020learning}.
Note that for our algorithm, a regret bound of $\widetilde{O}(\sqrt{d^3H^2T})$ is achievable using a more refined analysis (see Remark~\ref{rmk:linear}).
Moreover, unlike our algorithm, the algorithm in~\citep{zanette2020learning} requires solving the Planning Optimization Program and is thus computationally intractable. 
Finally, we would like to stress that  our algorithm and analysis tackle the case that $\funclass$ is a general function class which contains the linear case studied in~\citep{jin2019provably, wang2019optimism, zanette2020learning} as a special case. 
\end{remark}
Here we provide an overview of the proof to highlight the technical novelties in the analysis.

\paragraph{The Stable Bonus Function.} Similar to the analysis in~\citep{jin2019provably, wang2019optimism}, to account for the dependency structure in the data sequence, we need to bound the complexity of the bonus function $b_h^k(\cdot, \cdot)$.
When $\funclass$ is the class of $d$-dimensional linear functions (as in~\citep{jin2019provably, wang2019optimism}), $b(\cdot, \cdot) = \|\phi(\cdot, \cdot)\|_{\Lambda^{-1}}$ for a covariance matrix $\Lambda \in \mathbb{\RR}^{d \times d}$, whose complexity is upper bounded by $d^2$ which is the number of entries in the covariance matrix $\Lambda$.
However, such simple complexity upper bound is no longer available for the class of general functions considered in this paper.
Instead, we bound the complexity of the bonus function by relying on the fact that the subsampled dataset has bounded size.
Scrutinizing the sampling algorithm (Algorithm~\ref{alg:sampling}), it can be seen that the size of the subsampled dataset is upper bounded by the sum of the sensitivity of the data points in the given dataset times the log-convering number of the function class $\funclass$. 
To upper bound the sum of the sensitivity of the data points in the given dataset, we rely on a novel combinatorial argument which establishes a surprising connection between the sum of the sensitivity and the eluder dimension of the function class $\funclass$. 
We show that the sum of the sensitivity of data points is upper bounded by the eluder dimension of the dataset up to logarithm factors. 
Hence, the complexity of the subsampled dataset, and therefore, the complexity of the bonus function, is upper bound by the log-covering number of $\states \times \actions$ (the complexity of each state-action pair) times the product of the eluder dimension of the function class and the log-covering number of the function class (the number of data points in the subsampled dataset).

In order to show that the confidence region is approximately preserved when using the subsampled dataset $\pairs'$, we show that for any $f, f' \in \funclass$, $\|f - f'\|_{\pairs'}^2$ is a good approximation to $\|f - f'\|_{\pairs}^2$.
To show this, we apply a union bound over all pairs of functions on the cover of $\funclass$ which allows us to consider fixed $f, f' \in \funclass$.
For fixed $f, f' \in \funclass$, note that $\|f - f'\|_{\pairs'}^2$ is an unbiased estimate of $\|f - f'\|_{\pairs}^2$, and importance sampling proportinal to the sensitivity implies an upper bound on the variance of the estimator which allows us to apply concentration bounds to prove the desired result.
We note that the sensitivity sampling framework used here is very crucial to the theoreical guarantee of the algorithm. 
If one replaces sensitivity sampling with more na\"ive sampling approaches (e.g. uniform sampling), then the required sampling size would be much larger, which does not give any meaningful reduction on the size of the dataset and also leads to a high complexity bonus function.

\begin{remark}\label{rmk:linear}
When $\funclass$ is the class of $d$-dimensional linear functions, our upper bound on the size of the subsampled dataset is $\widetilde{O}(d^2)$.
However, in this case, our sampling algorithm (Algorithm~\ref{alg:sampling}) is equivalent to the leverage score sampling~\citep{drineas2006sampling} and therefore the sample complexity can be further improved to $\widetilde{O}(d)$ using a more refined analysis~\citep{spielman2011graph}. 
Therefore, our regret bound can be improved to $\wt{O}(\sqrt{d^3H^2T})$, which matches the bounds in~\citep{jin2019provably, wang2019optimism}.
However, the $\widetilde{O}(d)$ sample bound is specialized to the linear case and heavily relies on the matrix Chernoff bound which is unavailable for the class of general functions considered in this paper. 
This also explains why our regret bound in Theorem~\ref{thm:main}, when applied to the linear case, is larger by a $\sqrt{d}$ factor when compared to those in~\citep{jin2019provably, wang2019optimism}.
We leave it as an open question to obtain more refined bound on the size of the subsampled dataset and improve the overall regret bound of our algorithm.
\end{remark}

\paragraph{The Confidence Region.}
Our algorithm applies the principle of optimism in the face of uncertainty (OFU) to balance exploration and exploitation.
Note that $V^k_{h+1}$ is the value function estimated at step $h+1$.
In our analysis, we require the $Q$-function $Q^{k}_{h}$ estimated at level $h$
 to satisfy \[Q^{k}_{h}(\cdot, \cdot)\ge r(\cdot, \cdot) + \sum_{s'\in \cS}P(s'|\cdot,\cdot)V_{h+1}^k(s')\] with high probability.
To achieve this,
we optimize the least squares objective to find a solution $f^{k}_h\in \cF$ using collected data.
We then show that ${f}_h^{k}$ is close to $r(\cdot, \cdot) + \sum_{s'\in \cS}P(s'|\cdot,\cdot)V_{h+1}^k(s')$.
This would follow from standard analysis if the collected samples were independent of $V_{h+1}^k$. 
However, $V_{h + 1}^k$ is calculated using the collected samples and thus they are subtly dependent on each other.
To tackle this issue, we notice that $V_{h+1}^k$ is computed by using $f_{h+1}^k$ and the bonus function $b_{h+1}^k$, and both $f_{h + 1}^k$ and the bonus function $b_{h + 1}^k$ have bounded complexity, thanks to the design of bonus function. 
Hence, we can construct a $1/T$-cover to approximate $V_{h+1}^k$.
By doing so, we can now bound the fitting error of 
${f}_h^{k}$ by replacing $V_{h+1}^k$ with its closest neighbor in the $1 / T$-cover which is independent of the dataset.
By a union bound over all functions in the $1 / T$-cover, it follows that with high probability, 
\[
 r(\cdot, \cdot) + \sum_{s'\in \cS}P(s'|\cdot,\cdot)V_{h+1}^k(s')\in \left\{f\in \cF\mid \|f-f^{k}_h \|_{\cZ^k}^2\le\beta\right\}
\]
for some $\beta$ that depends only on the complexity of the bonus function and the function class $\funclass$.

\paragraph{Regret Decomposition and the Eluder Dimension.}	
By standard regret decomposition for optimistic algorithms, the total regret is upper bounded by the summation of the bonus function $\sum_{k = 1}^K \sum_{h=1}^{H}  b^{k}_{h}\left(s_h^k,a_h^k\right)$.
To bound the summation of the bonus function, we use an argument similar to that in~\citep{russo2013eluder}, which shows that the summation of the bonus function can be upper bounded in terms of the eluder dimension of the function class $\funclass$, if the confidence region is defined using the original dataset.
In the formal analysis, we adapt the argument in~\citep{russo2013eluder} to show that even if the confidence region is defined using the subsampled dataset, the summation of the bonus function can be bounded in a similar manner.
 \subsection{Analysis of the Stable Bonus Function}\label{sec:analysis_bonus}
Our first lemma gives an upper bound on the sum of the sensitivity in terms of the eluder dimension of the function class $\funclass$.
\begin{lem}\label{lem:sum_sensitivity}
	For a given set of state-action pairs $\pairs$, 
	\[
	\sum_{z \in \pairs} \sen_{\pairs, \funclass, \lambda}(z) \le 4 \ds_E(\funclass, \lambda / |\pairs|)\log((H + 1)^2|\pairs|/\lambda) \ln|\pairs|.
	\]
\end{lem}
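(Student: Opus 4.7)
The strategy is a two-level dyadic decomposition that reduces the sum of sensitivities to a standard sequential eluder-dimension bound.

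First, since $\sen_{\pairs, \funclass, \lambda}(z) \in [0, 1]$ for every $z \in \pairs$, set $B_\gamma = \{z \in \pairs : \sen_{\pairs, \funclass, \lambda}(z) > \gamma\}$ and use the layer-cake identity $\sum_{z \in \pairs} \sen_{\pairs, \funclass, \lambda}(z) \le \int_0^1 |B_\gamma|\,d\gamma$. Points with sensitivity below $\lambda/((H+1)^2|\pairs|)$ contribute at most $\lambda/(H+1)^2$ in total, which is absorbed into the final bound. For the remaining range I take dyadic levels $\gamma = 2^{-j-1}$ with $j = 0, 1, \ldots, J$ and $J = O(\log((H+1)^2 |\pairs|/\lambda))$, so it suffices to prove $|B_\gamma| = O(\ds_E(\funclass, \lambda/|\pairs|) \cdot \ln|\pairs|/\gamma)$ at each level; substituting gives $O(\ds_E \ln|\pairs|)$ per level, and summing over the $J+1$ levels produces the target bound.

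To bound $|B_\gamma|$, enumerate $B_\gamma = \{w_1, w_2, \ldots\}$ arbitrarily and, for each $w_i$, fix witnesses $(f_i, f_i') \in \funclass$ that attain the maximum in the definition of $\sen$, so $\|f_i - f_i'\|_{\pairs}^2 \ge \lambda$ and $(f_i(w_i) - f_i'(w_i))^2 \ge \gamma\|f_i - f_i'\|_{\pairs}^2$. Since $\{w_1,\ldots,w_{i-1}\} \subseteq \pairs$, the second inequality yields the prefix bound $\|f_i - f_i'\|_{\{w_1,\ldots,w_{i-1}\}}^2 \le (f_i(w_i) - f_i'(w_i))^2/\gamma$, alongside the lower bound $|f_i(w_i) - f_i'(w_i)| \ge \sqrt{\gamma\lambda}$. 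To feed this into a sequential eluder lemma that demands a uniform pair $(\beta,\varepsilon)$, perform a secondary dyadic decomposition on $|f_i(w_i) - f_i'(w_i)| \in [\sqrt{\gamma\lambda}, H+1]$ into $O(\log((H+1)^2/(\gamma\lambda)))$ sub-buckets; within each sub-bucket the parameters satisfy $(\beta/\varepsilon)^2 = O(1/\gamma)$. The classical sequential eluder lemma (Proposition~3 of Russo and Van Roy, 2013) then bounds each sub-bucket by $O((\beta/\varepsilon)^2 \cdot \ds_E(\funclass, \varepsilon)) = O(\ds_E(\funclass, \sqrt{\gamma\lambda})/\gamma)$. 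Because $\ds_E$ is non-increasing in its scale argument and $\gamma \ge \lambda/((H+1)^2|\pairs|)$ forces $\sqrt{\gamma\lambda} \ge \lambda/|\pairs|$ in the relevant range, this simplifies to $O(\ds_E(\funclass, \lambda/|\pairs|)/\gamma)$, and summing across the $O(\log)$ sub-buckets delivers the required bound on $|B_\gamma|$.

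The main technical hurdle is the sequential eluder lemma: its proof uses a phase/potential argument, partitioning the index set into phases along which the prefix squared error doubles and, within each phase, extracting a genuinely $\varepsilon$-independent subsequence of length at most $\ds_E(\funclass, \varepsilon)$. A secondary care point is checking that the two dyadic layers (on $\gamma$ and on $|f_i(w_i) - f_i'(w_i)|$) contribute only logarithmic overhead and that the scale-argument reduction $\ds_E(\funclass, \sqrt{\gamma\lambda}) \le \ds_E(\funclass, \lambda/|\pairs|)$ holds throughout the range of $\gamma$ considered, which is where the $(H+1)^2|\pairs|/\lambda$ factor inside the logarithm enters the final bound.
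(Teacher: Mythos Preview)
Your route differs from the paper's and the core idea is sound, but two bookkeeping claims fail as written, so what you actually derive is an $O(\dim_E\cdot\log^2((H+1)^2|\pairs|/\lambda))$ bound rather than the stated $\log\cdot\ln|\pairs|$ product.

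The paper does a \emph{single} dyadic decomposition, on $L(z):=(f(z)-f'(z))^2$ evaluated at the maximizing pair, not on the sensitivity. Within each bucket $\pairs^\alpha=\{z:L(z)\in((H+1)^2 2^{-\alpha-1},(H+1)^2 2^{-\alpha}]\}$ it runs the greedy eluder partition directly: each $z_i$ receives a rank $j(z_i)$ (the index of the last subset of which it is still independent), and one shows $\sen(z_i)\le 2/j(z_i)$ by lower-bounding $\|f-f'\|_\pairs^2$ via the $j(z_i)-1$ subsets on which $z_i$ is dependent. The harmonic sum over ranks gives $\ln|\pairs|$; summing over the $\log((H+1)^2|\pairs|/\lambda)$ buckets gives the other factor. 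You instead layer-cake on sensitivity, then inside each $B_\gamma$ bucket $|f_i(w_i)-f_i'(w_i)|$ dyadically to reduce to the Russo--Van Roy counting lemma. Both rest on the same greedy mechanism; the paper applies it once per $L$-bucket and extracts sensitivity pointwise, while you call it as a black box inside two nested decompositions.

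The slips: (i) your secondary dyadic has $O(\log((H+1)/\sqrt{\gamma\lambda}))$ sub-buckets, not $O(\ln|\pairs|)$, so you get $|B_\gamma|=O(\dim_E\cdot\log((H+1)^2/(\gamma\lambda))/\gamma)$ and hence a $\log^2$ total after the outer sum; (ii) the reduction $\sqrt{\gamma\lambda}\ge\lambda/|\pairs|$ at your smallest level $\gamma=\lambda/((H+1)^2|\pairs|)$ needs $|\pairs|\ge(H+1)^2$, which is not assumed. Both are repaired by cutting off at $\gamma\ge 1/|\pairs|$ instead (points below contribute at most $1$, mirroring the paper's $\pairs^\infty$): then the outer dyadic has $\ln|\pairs|$ levels and the inner one at most $\tfrac12\log((H+1)^2|\pairs|/\lambda)$ levels uniformly in $\gamma$, and their product matches the lemma up to constants.
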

\begin{proof}
	For each $z \in \pairs$, let $f, f' \in F$ be an arbitrary pair of functions such that $\|f - f'\|_{\pairs}^2 \ge \lambda$ and \[\frac{(f(z) - f'(z))^2}{\|f - f'\|_{\pairs}^2}\] is maximized, and we define $L(z) = (f(z) - f'(z))^2$ for such $f$ and $f'$.
	Note that $0 \le L(z) \le (H + 1)^2$.
	Let $\pairs = \bigcup_{\alpha = 0}^{\log((H + 1)^2|\pairs|/\lambda) - 1} \pairs^{\alpha} \cup \pairs^{\infty}$ be a dyadic decomposition with respect to $L(\cdot)$,
	where for each $0 \le \alpha < \log((H + 1)^2|\pairs|/\lambda)$, define
	\[
	\pairs^{\alpha} = \{z \in \pairs \mid L(z) \in ((H + 1)^2 \cdot 2^{-\alpha - 1}, (H + 1)^2 \cdot 2^{-\alpha}]\}
	\]
	and
	\[
	\pairs^{\infty} = \{z \in \pairs \mid L(z) \le \lambda / |\pairs|\}.
	\]
	Clearly, for any $z \in \pairs^{\infty}$, $\sen_{\pairs, \funclass, \lambda}(z) \le 1 / |\pairs|$ and thus 
	\[
	\sum_{z \in \pairs^{\infty}} \sen_{\pairs, \funclass, \lambda}(z) \le 1.
	\]
	
	Now we bound $\sum_{z \in \pairs^{\alpha}} \sen_{\pairs, \funclass, \lambda}(z) $ for each $0\le \alpha < \log((H + 1)^2|\pairs|/\lambda)$ separately. 
	For each $\alpha$, let \[N_{\alpha} = |\pairs^{\alpha}| / \ds_E(\funclass, (H + 1)^2 \cdot 2^{-\alpha - 1})\] and we decompose $\pairs^{\alpha}$ into $N_{\alpha} + 1$ disjoint subsets, i.e., $\pairs^{\alpha} = \bigcup_{j = 1}^{N_{\alpha} + 1}\pairs^{\alpha}_j $, by using the following procedure.
	Let $\pairs^{\alpha} = \{z_1, z_2, \ldots, z_{|\pairs^{\alpha}|}\}$ and we consider each $z_i$ sequentially. 
	Initially $\pairs^{\alpha}_j = \{\}$ for all $j$.
	Then, for each $z_i$, we find the largest $1 \le j \le N_{\alpha}$ such that $z_i$ is $(H + 1)^2 \cdot 2^{-\alpha - 1}$-independent of $\pairs^{\alpha}_j$ with respect to $\funclass$.
	We set $j = N_{\alpha} + 1$ if such $j$ does not exist, and use $j(z_i) \in [N_{\alpha} + 1]$ to denote the choice of $j$ for $z_i$.
	By the design of the algorithm, for each $z_i$, it is clear that $z_i$ is dependent on each of $\pairs^{\alpha}_1, \pairs^{\alpha}_2, \ldots, \pairs^{\alpha}_{j(z_i) - 1}$.
	
	Now we show that for each $z_i \in \pairs^{\alpha}$, 
	\[
	\sen_{\pairs, \funclass, \lambda}(z_i) \le 2 / j(z_i).
	\]
For any $z_i \in \pairs^{\alpha}$, we use $f, f' \in F$ to denote the pair of functions in $\funclass$ such that $\|f - f'\|_{\pairs}^2 \ge \lambda$ and \[\frac{(f(z_i) - f'(z_i))^2}{\|f - f'\|_{\pairs}^2}\] is maximized.
	Since $z_i \in \pairs^{\alpha}$, we must have $(f(z_i) - f'(z_i))^2 > (H + 1)^2 \cdot 2^{-\alpha - 1}$.
	Since $z_i$ is dependent on each of $\pairs^{\alpha}_1, \pairs^{\alpha}_2, \ldots, \pairs^{\alpha}_{j(z_i) - 1}$,
	for each $1 \le k < j(z_i)$, we have
	\[
	\|f-f'\|_{\pairs^{\alpha}_k} \ge (H + 1)^2 \cdot 2^{-\alpha - 1},
	\]
	which implies
	\begin{align*}
	&\sen_{\pairs, \funclass, \lambda}(z_i) =  \frac{(f(z_i) - f'(z_i))^2}{\|f - f'\|_{\pairs}^2 } \le \frac{(H + 1)^2 \cdot 2^{-\alpha}}{\|f - f'\|_{\pairs}^2 }\\
	\le &\frac{(H + 1)^2 \cdot 2^{-\alpha}}{\sum_{k = 1}^{j(z_i) - 1} \|f-f'\|_{\pairs^{\alpha}_k} + (f(z_i) - f'(z_i))^2} \le 2 / j(z_i).
	\end{align*}
	Moreover, by the definition of $(H + 1)^2 \cdot 2^{-\alpha - 1}$-independence, we have $|\pairs^{\alpha}_j| \le \ds_E(\funclass, (H + 1)^2 \cdot 2^{-\alpha - 1})$ for all $1 \le j \le N_{\alpha}$.
	Therefore, 
	\begin{align*}
	&\sum_{z \in \pairs^{\alpha}} \sen_{\pairs, \funclass, \lambda}(z) \le \sum_{1 \le j \le N_{\alpha}} |\pairs^{\alpha}_j| \cdot 2/j + \sum_{z \in \pairs^{\alpha}_{N_{\alpha} + 1}} 2/N_{\alpha} \\
	\le &2 \ds_E(\funclass, (H + 1)^2 \cdot 2^{-\alpha - 1}) \ln(N_{\alpha}) + |\pairs^{\alpha}| \cdot \frac{2\ds_E(\funclass, (H + 1)^2 \cdot 2^{-\alpha - 1})}{|\pairs^{\alpha}|}\\
	\le & 3 \ds_E(\funclass, (H + 1)^2 \cdot 2^{-\alpha - 1}) \ln(|\pairs|).
	\end{align*}
	By the monotonicity of eluder dimension, it follows that
	\begin{align*}
	&\sum_{z \in \pairs} \sen_{\pairs, \funclass, \lambda}(z) \\
	\le &\sum_{\alpha = 0}^{\log((H + 1)^2|\pairs|/\lambda) - 1} \sum_{z \in \pairs^{\alpha}} \sen_{\pairs, \funclass, \lambda}(z)
	+  \sum_{z \in \pairs^{\infty}} \sen_{\pairs, \funclass, \lambda}(z)\\
	\le & 3 \log((H + 1)^2|\pairs|/\lambda)\ds_E(\funclass, \lambda / |\pairs|) \ln(|\pairs|) + 1 \\
	\le & 4 \log((H + 1)^2|\pairs|/\lambda)\ds_E(\funclass, \lambda / |\pairs|) \ln(|\pairs|).
	\end{align*}
	
\end{proof}

Using Lemma~\ref{lem:sum_sensitivity}, we can prove an upper bound on the number of distinct elements in $\pairs'$ returned by the sampling algorithm (Algorithm~\ref{alg:sampling}).
\begin{lem}\label{lem:sample_size}
	With probability at least $1 - \delta / 4$, the number of distinct elements in $\pairs'$ returned by Algorithm~\ref{alg:sampling} is at most 
	\[
	1728  \ds_E(\funclass, \lambda / |\pairs|)\log((H + 1)^2|\pairs|/\lambda) \ln(|\pairs|) \ln (4 \coversize(\funclass, \varepsilon / 72 \cdot \sqrt{\lambda  \delta / (|\pairs|)})/ \delta ) / \varepsilon^2.
	\]
\end{lem}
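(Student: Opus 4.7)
The plan is to observe that the number of distinct elements in $\pairs'$ is upper bounded by $X := \sum_{z\in\pairs} X_z$, where $X_z$ is the independent Bernoulli indicator that $z$ is sampled in the loop of Algorithm~\ref{alg:sampling} (with success probability $p_z$), and then to combine an expectation bound (obtained from Lemma~\ref{lem:sum_sensitivity}) with a multiplicative Chernoff inequality.

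First I would bound $\EE[X]=\sum_{z\in\pairs}p_z$. Let $p_z^{\star}$ denote the right-hand side of~\eqref{eqn:sensitivity}. Since $p_z$ is the smallest value with $1/p_z\in\mathbb{Z}_{+}$ and $p_z\ge p_z^{\star}$, we have $p_z=1/\lfloor 1/p_z^{\star}\rfloor\le 2p_z^{\star}$: when $p_z^{\star}\le 1/2$ this uses $\lfloor 1/p_z^{\star}\rfloor\ge 1/(2p_z^{\star})$, and when $p_z^{\star}>1/2$ it is immediate since $p_z\le 1$. Summing over $z$ and invoking Lemma~\ref{lem:sum_sensitivity} with the logarithmic factor
\[
C\;:=\;\ln\bigl(4\coversize(\funclass,\varepsilon/72\cdot\sqrt{\lambda\delta/|\pairs|})/\delta\bigr),
\]
I obtain
\[
\EE[X]\;\le\;\frac{144\,C}{\varepsilon^{2}}\sum_{z\in\pairs}\sen_{\pairs,\funclass,\lambda}(z)\;\le\;\frac{576\,C\cdot \ds_E(\funclass,\lambda/|\pairs|)\log((H+1)^{2}|\pairs|/\lambda)\ln|\pairs|}{\varepsilon^{2}},
\]
which is exactly one third of the bound stated in the lemma.

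Next I would apply a multiplicative Chernoff bound to the independent Bernoullis $X_z$. Let $M$ denote the bound in the lemma, so $\EE[X]\le M/3$. The standard tail bound $\Pr[X\ge(1+\eta)\EE[X]]\le\exp(-\eta^{2}\EE[X]/(2+\eta))$ with $\eta=2$ yields $\Pr[X\ge M]\le\Pr[X\ge 3\EE[X]]\le\exp(-\EE[X])$. Since $C\ge\ln(4/\delta)$ and each remaining factor in the expectation bound is at least one, we have $\EE[X]\ge\ln(4/\delta)$, so this probability is at most $\delta/4$, as desired.

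The argument is essentially routine once Lemma~\ref{lem:sum_sensitivity} is in hand: the only care needed is tracking constants through the rounding of $p_z$ (the factor $2$) and through the Chernoff slack (the factor $3$), which together compound the base constant $4\cdot 72=288$ into $1728$. The real conceptual obstacle has already been absorbed by Lemma~\ref{lem:sum_sensitivity}, which converts the combinatorial sensitivity sum into an eluder-dimension bound; edge regimes where $\EE[X]$ is too small for Chernoff to bite are handled by the deterministic inequality $X\le|\pairs|$, which is dominated by $M$ under the same lemma's hypotheses.
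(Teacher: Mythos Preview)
Your approach mirrors the paper's proof exactly: bound $\EE[X]=\sum_z p_z$ via $p_z\le 2p_z^{\star}$ and Lemma~\ref{lem:sum_sensitivity}, obtaining $\EE[X]\le M/3$, then invoke a multiplicative Chernoff bound. The expectation calculation and the constant tracking are correct.

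There is, however, a slip in the Chernoff step. You write ``Since $C\ge\ln(4/\delta)$ and each remaining factor in the expectation bound is at least one, we have $\EE[X]\ge\ln(4/\delta)$,'' but the displayed inequality is an \emph{upper} bound on $\EE[X]$; nothing you have shown lower-bounds it. Your fallback $X\le|\pairs|\le M$ also fails in general---the whole point of the lemma is that typically $M\ll|\pairs|$. The clean fix is to apply Chernoff in the form $\Pr[X\ge t]\le(e\mu/t)^{t}$ valid for $t\ge\mu$: with $t=M$ and $\mu\le M/3$ this gives $\Pr[X\ge M]\le(e/3)^{M}=\exp\bigl(-(\ln 3-1)M\bigr)$, and now it is $M$---not $\EE[X]$---that contains the factor $C\ge\ln(4/\delta)$ (the remaining factors being at least $1$ when $\varepsilon\le 1$), which delivers the $\delta/4$ bound. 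The paper's own proof glosses over this with a bare ``By Chernoff bound,'' so the issue is one of precision in the tail-bound bookkeeping rather than a difference in strategy.
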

\begin{proof}
	Note that \[p_z \le \min\{1, 2 \cdot \sen_{\pairs, \funclass, \lambda}(z) \cdot 72\ln (4 \coversize(\funclass, \varepsilon / 72 \cdot \sqrt{\lambda  \delta / (|\pairs|)})/ \delta ) / \varepsilon^2\},\]
	since for any real number $x < 1$, there always exists $\wh{x} \in [x, 2x]$ such that $1 / \wh{x}$ is an integer. 
	Let $X_z$ be a random variable defined as
	\[
	X_z = \begin{cases}
	1 & \text{$z \in Z'$}\\
	0 & \text{$z \notin Z'$}
	\end{cases}.
	\]
	Clearly, the number of distinct elements in $\pairs'$ is upper bounded by $\sum_{z \in \pairs} X_z$ and $\EE[X_z] = p_z$.
	By Lemma~\ref{lem:sum_sensitivity}, 
	\[
	\sum_{z \in \pairs} \EE[X_z] \le 576  \ds_E(\funclass, \lambda / |\pairs|)\log((H + 1)^2|\pairs|/\lambda) \ln(|\pairs|) \ln (4 \coversize(\funclass, \varepsilon / 72 \cdot \sqrt{\lambda  \delta / (|\pairs|)})/ \delta ) / \varepsilon^2.
	\]
	By Chernoff bound, with probability at least $1 - \delta / 4$, we have
	\[
	\sum_{z \in \pairs} X_z \ge 1728  \ds_E(\funclass, \lambda / |\pairs|)\log((H + 1)^2|\pairs|/\lambda) \ln(|\pairs|) \ln (4 \coversize(\funclass, \varepsilon / 72 \cdot \sqrt{\lambda  \delta / (|\pairs|)})/ \delta ) / \varepsilon^2.
	\]
\end{proof}

Our second lemma upper bounds the number of elements in $\pairs'$ returned by Algorithm~\ref{alg:sampling}.
\begin{lem}\label{lem:ub_on_new_size}
	With probability at least $1 - \delta / 4$, $|\pairs'| \le 4|\pairs| / \delta$.
\end{lem}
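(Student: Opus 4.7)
The plan is to use Markov's inequality after computing $\mathbb{E}[|\pairs'|]$. The key observation is that Algorithm~\ref{alg:sampling} is designed so that the sampling is an unbiased estimator in terms of cardinality: for each $z \in \pairs$, with probability $p_z$ we insert $1/p_z$ copies of $z$ into $\pairs'$, and with probability $1 - p_z$ we insert nothing. Thus the contribution of $z$ to $|\pairs'|$ is a random variable $Y_z$ taking value $1/p_z$ with probability $p_z$ and $0$ otherwise, so $\mathbb{E}[Y_z] = p_z \cdot (1/p_z) = 1$. (Recall that $1/p_z$ is forced to be an integer by construction, and the case $p_z = 1$ simply contributes $1$ deterministically.)

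Summing over $z \in \pairs$ and using linearity of expectation, I would conclude
\[
\mathbb{E}[|\pairs'|] \;=\; \sum_{z \in \pairs} \mathbb{E}[Y_z] \;=\; |\pairs|.
\]
Since $|\pairs'|$ is a nonnegative random variable, Markov's inequality then gives
\[
\Pr\!\left[\,|\pairs'| \ge \tfrac{4|\pairs|}{\delta}\,\right] \;\le\; \frac{\mathbb{E}[|\pairs'|]}{4|\pairs|/\delta} \;=\; \frac{\delta}{4},
\]
which yields the claimed bound with probability at least $1 - \delta/4$.

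There is essentially no technical obstacle here; the only subtlety worth flagging is the integer-rounding step defining $p_z$ as \emph{the smallest real number such that $1/p_z$ is an integer} and satisfying the lower bound in \eqref{eqn:sensitivity}. This rounding is what guarantees that $1/p_z$ copies is a well-defined integer insertion, and it does not affect the expectation calculation. The lemma is purely a Markov-type tail bound complementing the earlier Chernoff-type bound on the number of \emph{distinct} elements in Lemma~\ref{lem:sample_size}, and together these two bounds justify the discard step in Algorithm~\ref{alg:bonus} that sets $\pairs' \gets \{\}$ when either cardinality threshold is exceeded.
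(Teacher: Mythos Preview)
Your proof is correct and follows essentially the same approach as the paper: define the per-element contribution $Y_z$ (the paper calls it $X_z$), observe $\EE[Y_z]=1$ so $\EE[|\pairs'|]=|\pairs|$, and apply Markov's inequality to get the tail bound $\Pr[|\pairs'|\ge 4|\pairs|/\delta]\le \delta/4$.
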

\begin{proof}
	Let $X_z$ be the random variable which is defined as
	\[
	X_z =  \begin{cases}
	1 / p_z & \text{$z$ is added into $\pairs'$}\\
	0 & \text{otherwise}
	\end{cases}.
	\]
	Note that $|\pairs'| = \sum_{z \in \pairs} X_z$ and $\EE[X_z] = 1$.
	By Markov inequality, with probability $1 - \delta / 4$, $|\pairs'| \le 4|\pairs| / \delta$.
\end{proof}
Our third lemma shows that for the given set of state-action pairs $\pairs$ and function class $\funclass$, Algorithm~\ref{alg:sampling} returns a set of state-action pairs $\pairs'$ so that $\|f - f'\|_{\pairs}^2$ is approximately preserved for all $f, f' \in \funclass$.

\begin{lem}\label{lem:approximation}
	With probability at least $1 - \delta / 2$, for any $f, f' \in \funclass$,
	\[
	(1 - \varepsilon)\|f - f'\|_{\pairs}^2 - 2\lambda \le \|f - f'\|_{\pairs'}^2 \le (1 + \varepsilon)\|f - f'\|_{\pairs}^2 + 8|\pairs|\lambda/\delta.
	\]
\end{lem}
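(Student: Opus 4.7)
The plan is to combine a Bernstein-type concentration inequality on the discrepancy $\|f-f'\|_{\pairs'}^2 - \|f-f'\|_{\pairs}^2$ for a fixed pair of functions with a standard covering argument to obtain a uniform bound over all $f, f' \in \funclass$, separating pairs whose $\pairs$-norm exceeds $\lambda$ from those whose $\pairs$-norm is below $\lambda$.

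First, I condition on the event $|\pairs'| \le 4|\pairs|/\delta$ supplied by Lemma~\ref{lem:ub_on_new_size}, which costs $\delta/4$ in failure probability. This deterministic bound on $|\pairs'|$ will be essential for absorbing the contribution of pairs with small $\pairs$-norm in the second case below. Next, for a fixed pair $(\bar f,\bar f')$ with $\|\bar f - \bar f'\|_\pairs^2 \ge \lambda$, I write $\|\bar f - \bar f'\|_{\pairs'}^2 = \sum_z Y_z$ with $Y_z$ equal to $(1/p_z)(\bar f(z) - \bar f'(z))^2$ when $z$ is sampled and $0$ otherwise, so that $\EE[\sum_z Y_z] = \|\bar f - \bar f'\|_\pairs^2$. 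The definition of sensitivity combined with the rule $p_z \ge \sen_{\pairs,\funclass,\lambda}(z) \cdot 72\ln(4\coversize(\funclass,\eta)/\delta)/\varepsilon^2$, where $\eta := \varepsilon/72 \cdot \sqrt{\lambda\delta/|\pairs|}$, yields, whenever $p_z<1$,
\[
\frac{(\bar f(z) - \bar f'(z))^2}{p_z} \;\le\; \frac{\varepsilon^2\,\|\bar f - \bar f'\|_\pairs^2}{72\ln(4\coversize(\funclass,\eta)/\delta)}.
\]
Hence both the range parameter $\max_z|Y_z - \EE Y_z|$ and the variance sum $\sum_z\Var(Y_z)$ are controlled by (a constant multiple of) $\varepsilon^2\|\bar f - \bar f'\|_\pairs^4/\ln(\coversize/\delta)$. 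Bernstein's inequality with deviation threshold $\varepsilon\|\bar f - \bar f'\|_\pairs^2$ then gives
\[
\Pr\!\left(\bigl|\|\bar f-\bar f'\|_{\pairs'}^2 - \|\bar f-\bar f'\|_\pairs^2\bigr| > \varepsilon\|\bar f-\bar f'\|_\pairs^2\right) \;\le\; \frac{\delta}{4\,\coversize(\funclass,\eta)^2}.
\]

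Third, I take the $\eta$-cover $\cover(\funclass,\eta)$ promised by Assumption~\ref{assum:cover} and union bound the above tail estimate over every pair $(\bar f,\bar f') \in \cover(\funclass,\eta)^2$ satisfying $\|\bar f - \bar f'\|_\pairs^2 \ge \lambda$; this costs a further $\delta/4$ in failure probability, bringing the total to $\delta/2$. Finally, I extend from the cover to all of $\funclass$: for arbitrary $f,f'\in \funclass$, pick nearest neighbors $\bar f,\bar f'\in \cover(\funclass,\eta)$ with $\|f-\bar f\|_\infty,\|f'-\bar f'\|_\infty \le \eta$, and apply the $\ell_2$-triangle inequality to obtain
$\bigl|\|f-f'\|_\pairs - \|\bar f-\bar f'\|_\pairs\bigr| \le 2\eta\sqrt{|\pairs|}$ and $\bigl|\|f-f'\|_{\pairs'} - \|\bar f-\bar f'\|_{\pairs'}\bigr| \le 2\eta\sqrt{|\pairs'|}$. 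With the chosen $\eta$ and the event $|\pairs'|\le 4|\pairs|/\delta$, these perturbations are $O(\varepsilon\sqrt{\lambda\delta})$ and $O(\varepsilon\sqrt{\lambda})$ respectively, so after squaring they are absorbed by the additive slack $2\lambda$ on the left and $8|\pairs|\lambda/\delta$ on the right. I split into two cases: if $\|\bar f-\bar f'\|_\pairs^2 \ge \lambda$, the multiplicative bound on the cover transfers, with the additive slack absorbing perturbations; otherwise $\|f-f'\|_\pairs^2$ is at most a small constant multiple of $\lambda$, whence $\max_z(f(z)-f'(z))^2 \le 2\lambda$, so the deterministic chain $\|f-f'\|_{\pairs'}^2 \le |\pairs'|\cdot\max_z(f(z)-f'(z))^2 \le 8|\pairs|\lambda/\delta$ delivers the upper bound while the lower bound $(1-\varepsilon)\|f-f'\|_\pairs^2 - 2\lambda$ is non-positive and is satisfied trivially.

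The main obstacle is this final extension step: cleanly bookkeeping the perturbation errors so that the multiplicative $(1\pm\varepsilon)$ estimate on the cover transfers to a multiplicative estimate on $\funclass$, and so that pairs $(f,f')$ falling right near the threshold $\|f-f'\|_\pairs^2 \approx \lambda$ are handled correctly by one of the two cases. This delicate handling is exactly what forces the asymmetric additive slack $2\lambda$ versus $8|\pairs|\lambda/\delta$ appearing in the statement, with the large right-hand slack absorbing the deterministic worst-case contribution from the Markov-style bound on $|\pairs'|$.
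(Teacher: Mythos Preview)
Your plan is essentially the paper's: Bernstein for a fixed pair with $\|\cdot\|_\pairs^2\ge\lambda$, union bound over the $\eta$-cover, deterministic handling of the small-norm case via the event $|\pairs'|\le 4|\pairs|/\delta$, and a triangle-inequality transfer from cover to $\funclass$. One bookkeeping slip to fix: in the large-norm case the cross term arising from $(\|\bar f-\bar f'\|_{\pairs'}+2\eta\sqrt{|\pairs'|})^2$ is of order $\eta\sqrt{|\pairs'|}\,\|\bar f-\bar f'\|_{\pairs'}$, which scales with $\|f-f'\|_\pairs$ and is \emph{not} bounded by the fixed additive slack $8|\pairs|\lambda/\delta$; the paper instead runs Bernstein with $\varepsilon/4$ and uses $\sqrt{\lambda}\le\|f-f'\|_\pairs$ to absorb these perturbations into the multiplicative $(1\pm\varepsilon)$ factor, while the additive slack is used only in the small-norm case. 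Relatedly, the paper splits cases on $\|f-f'\|_\pairs^2\lessgtr 2\lambda$ (original pair, threshold $2\lambda$) rather than on the cover neighbor at threshold $\lambda$, which is what guarantees the cover neighbor lands above $\lambda$ so the concentration event applies.
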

\begin{proof}
	In our proof, we separately consider two cases: $\|f - f'\|_{\pairs}^2 < 2\lambda$ and $\|f - f'\|_{\pairs}^2 \ge 2\lambda$.
	
	\paragraph{Case I: $\|f - f'\|_{\pairs}^2 < 2\lambda$.}
	Consider $f, f' \in \funclass$ with $\|f - f'\|_{\pairs}^2 < 2\lambda$.
	Conditioned on the event defined in Lemma~\ref{lem:ub_on_new_size} which holds with probability at least $1 - \delta / 4$, we have $\|f - f'\|_{\pairs'}^2 \le |\pairs'| \cdot \|f - f'\|_{\pairs}^2 \le  8|\pairs|\lambda/\delta$.
	Moreover, we always have $\|f - f'\|_{\pairs'} \ge 0$.
	In summary, we have
	\[
	\|f - f'\|_{\pairs}^2 - 2\lambda \le \|f - f'\|_{\pairs'}^2 \le \|f - f'\|_{\pairs}^2 + 8|\pairs|\lambda/\delta.
	\]
	
	\paragraph{Case II: $\|f - f'\|_{\pairs}^2 \ge 2\lambda$.}
	We first show that for any fixed $f, f' \in \funclass$ with $\|f - f'\|_{\pairs}^2 \ge \lambda$, with probability at least $1 - \delta / (4 \coversize(\funclass, \varepsilon / 72 \cdot \sqrt{\lambda  \delta / (|\pairs|)}))$, we have 
	\[
	(1 - \varepsilon / 4) \|f - f'\|_{\pairs}^2 \le  \|f - f'\|_{\pairs'}^2 \le (1 + \varepsilon / 4) \|f - f'\|_{\pairs}^2 .
	\]
	To prove this, for each $z \in \pairs$, define
	\[
	X_z = \begin{cases}
	\frac{1}{p_z}  (f(z) - f'(z))^2 & \text{$z$ is added into $\pairs'$ for $1 / p_z$ times}\\
	0 & \text{otherwise} 
	\end{cases}.
	\]
	Clearly, $\|f - f'\|_{\pairs'} = \sum_{z \in \pairs} X_z$ and $\EE[X_z] =  (f(z) - f'(z))^2$.
	Moreover, since $\|f - f'\|_{\pairs}^2 \ge \lambda$, by \eqref{eqn:sensitivity} and Definition~\ref{def:sensitivity}, we have
	\[
	\max_{z \in \pairs} X_z \le \|f - f'\|_{\pairs}^2 \cdot \varepsilon^2 /(72 \ln (4 \coversize(\funclass, \varepsilon / 72 \cdot \sqrt{\lambda  \delta / (|\pairs|)}) / \delta).
	\]
	Moreover, $\EE[X_z^2] \le (f(z) - f'(z))^4 / p_z$.
	Therefore, by H\"older's inequality, 
	\begin{align*}
	&\sum_{z \in \pairs} \Var[X_z] \le  \sum_{z \in \pairs} \EE[X_z^2] \le \sum_{z \in \pairs}(f(z) - f'(z))^2 \cdot \max_{z \in \pairs}  (f(z) - f'(z))^2 / p_z \\
	\le& \|f - f'\|_{\pairs}^4 \cdot \varepsilon^2 /(72 \ln (4 \coversize(\funclass, \varepsilon / 72 \cdot \sqrt{\lambda  \delta / (|\pairs|)}) / \delta). 
	\end{align*}
	Therefore, by Bernstein inequality, 
	\begin{align*}
	&\Pr\left[|\|f - f'\|_{\pairs}^2 - \|f - f'\|_{\pairs'}^2| \ge \varepsilon  / 4 \cdot \|f - f'\|_{\pairs}^2\right]  \\
	= & \Pr\left[\left|\sum_{z \in \pairs} \EE[X_z] - \sum_{z \in \pairs} X_z \right| \ge \varepsilon / 4 \cdot \|f - f'\|_{\pairs}^2 \right]\\
	\le & 2\exp\left(- \frac{\varepsilon^2/16 \cdot \|f - f'\|_{\pairs}^4}{2 \sum_{z \in \pairs} \Var[X_z]+ 2\max_{z \in \pairs} X_z \cdot \varepsilon / 4 \cdot \|f - f'\|_{\pairs}^2  / 3}\right) \\
	\le& (\delta / 4) / \left(\coversize(\funclass, \varepsilon / 72 \cdot \sqrt{\lambda  \delta / (|\pairs|)})\right)^2.
	\end{align*}
	
	By union bound, the above inequality implies that with probability at least $1 - \delta / 4$, for any $(f, f') \in \cover(F, \varepsilon / 72 \cdot \sqrt{\lambda  \delta / (|\pairs|)}) \times \cover(F, \varepsilon / 72 \cdot \sqrt{\lambda  \delta / (|\pairs|)})$ with $\|f - f'\|_{\pairs}^2 \ge \lambda$, 
	\[
	(1 - \varepsilon/4) \|f - f'\|_{\pairs}^2   \le \|f - f'\|_{\pairs'}^2 \le (1 + \varepsilon/4)\|f - f'\|_{\pairs'}^2 .
	\]
	
	Now we condition on the event defined above and the event defined in Lemma~\ref{lem:ub_on_new_size}.
	Consider $f, f' \in \funclass$ with $\|f - f'\|_{\pairs}^2 \ge 2\lambda$.
	Recall that there exists \[(\wh{f}, \wh{f'}) \in \cover(F, \varepsilon / 72 \cdot \sqrt{\lambda  \delta / (|\pairs|)}) \times \cover(F, \varepsilon / 72 \cdot \sqrt{\lambda  \delta / (|\pairs|)})\]
	such that $\|f - \wh{f}\|_{\infty} \le \sqrt{\lambda / (25|\pairs|)}$ and $\|f' - \wh{f'}\|_{\infty} \le \sqrt{\lambda / (25|\pairs|)}$.
	Therefore,
	\begin{align*}
	& \|\wh{f} - \wh{f'}\|_{\pairs}^2 = \sum_{z \in \pairs} (\wh{f}(z) - \wh{f'}(z))^2\\
	= & \sum_{z \in \pairs} (f(z) - f'(z) + (\wh{f}(z) - f(z)) + (f'(z) - \wh{f'}(z)))^2\\
	\ge & \left(\|f - f'\|_{\pairs} - \|\wh{f} - f\|_{\pairs} - \|f' - \wh{f'}\|_{\pairs}\right)^2\\
	\ge & \left(\sqrt{2 \lambda} - 2 \sqrt{\lambda / 25}\right)^2 \ge \lambda.
	\end{align*}
	Therefore, conditioned on the event defined above, we have 
	\[
	(1 - \varepsilon / 4) \|\wh{f} - \wh{f'}\|_{\pairs}^2   \le \|\wh{f} - \wh{f'}\|_{\pairs'}^2 \le (1 + \varepsilon / 4)\|\wh{f} - \wh{f'}\|_{\pairs'}^2.
	\]
	Conditioned on the event defined in Lemma~\ref{lem:ub_on_new_size} which holds with probability at least $1 - \delta / 4$, we have
	\begin{align*}
	&\|f - f'\|_{\pairs'}^2 \le \left( \|\wh{f} - \wh{f'}\|_{\pairs'} +\|f - \wh{f}\|_{\pairs'} + \|f'- \wh{f'}\|_{\pairs'} \right)^2\\
	\le& \left( \|\wh{f} - \wh{f'}\|_{\pairs'} + 2\sqrt{|\pairs'|} \cdot \varepsilon / 72 \cdot \sqrt{\lambda  \delta / (|\pairs|)}\right)^2 \\
	\le& \left((1 + \varepsilon/6) \|\wh{f} - \wh{f'}\|_{\pairs}  + 2\sqrt{|\pairs'|} \cdot \varepsilon / 72 \cdot \sqrt{\lambda  \delta / (|\pairs|)}
	\right)^2 \\
	\le & \left(
	(1 + \varepsilon/6) \|f - f\|_{\pairs}  
	+ 2\sqrt{|\pairs'|} \cdot \varepsilon / 72 \cdot \sqrt{\lambda  \delta / (|\pairs|)}
	+ 4\sqrt{|\pairs|} \cdot \varepsilon / 72 \cdot \sqrt{\lambda  \delta / (|\pairs|)}
	\right)^2\\
	\le&(1 + \varepsilon) \|f - f\|_{\pairs}^2,
	\end{align*}
	where the last inequality holds since $\|f - f\|_{\pairs} \ge \sqrt{\lambda}$.
	
	Similarly, 
	\begin{align*}
	&\|f - f'\|_{\pairs'}^2 \ge \left( \|\wh{f} - \wh{f'}\|_{\pairs'} -\|f - \wh{f}\|_{\pairs'} - \|f'- \wh{f'}\|_{\pairs'} \right)^2\\
	\ge& \left( \|\wh{f} - \wh{f'}\|_{\pairs'} - 2\sqrt{|\pairs'|} \cdot \varepsilon / 72 \cdot \sqrt{\lambda  \delta / (|\pairs|)}\right)^2 \\
	\ge& \left((1 - \varepsilon/6) \|\wh{f} - \wh{f'}\|_{\pairs}  - 2\sqrt{|\pairs'|} \cdot \varepsilon / 72 \cdot \sqrt{\lambda  \delta / (|\pairs|)}
	\right)^2 \\
	\ge & \left((1 - \varepsilon/6) \|f - f\|_{\pairs}  - 2\sqrt{|\pairs'|} \cdot \varepsilon / 72 \cdot \sqrt{\lambda  \delta / (|\pairs|)} - 2\sqrt{|\pairs|} \cdot \varepsilon / 72 \cdot \sqrt{\lambda  \delta / (|\pairs|)}
	\right)^2 \\
	\ge& (1 - \varepsilon) \|f - f\|_{\pairs}^2.
	\end{align*}
\end{proof}

Combining Lemma~\ref{lem:sample_size}, Lemma~\ref{lem:ub_on_new_size} and Lemma~\ref{lem:approximation} with a union bound, we have the following proposition.
\begin{prop}\label{thm:sample}
	With probability at least $1 - \delta$, the size of $\pairs'$ returned by Algorithm~\ref{alg:sampling} satisfies $|\pairs'| \le 4|\pairs| / \delta$, the number of distinct elements in $\pairs$ is at most
	\[
	1728  \ds_E(\funclass, \lambda / |\pairs|)\log((H + 1)^2|\pairs|/\lambda) \ln(|\pairs|) \ln (4 \coversize(\funclass, \varepsilon / 72 \cdot \sqrt{\lambda  \delta / (|\pairs|)})/ \delta ) / \varepsilon^2,
	\]
	and for any $f, f' \in \funclass$,
	\[
	(1 - \varepsilon)\|f - f'\|_{\pairs}^2 - 2\lambda \le \|f - f'\|_{\pairs'}^2 \le (1 + \varepsilon)\|f - f'\|_{\pairs}^2 + 8|\pairs|\lambda/\delta.
	\]
\end{prop}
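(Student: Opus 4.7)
The statement is a direct consolidation of the three preceding lemmas, so the plan is essentially a bookkeeping union bound. Let me sketch how I would carry it out.

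First, I would observe that Proposition~\ref{thm:sample} asserts three simultaneous guarantees on the random set $\pairs'$ output by Algorithm~\ref{alg:sampling}: (i) a crude cardinality bound $|\pairs'| \le 4|\pairs|/\delta$, (ii) a finer bound on the number of \emph{distinct} elements expressed in terms of $\ds_E(\funclass,\lambda/|\pairs|)$ and $\log\coversize(\funclass,\cdot)$, and (iii) a two-sided approximation of $\|f-f'\|_{\pairs}^2$ by $\|f-f'\|_{\pairs'}^2$ uniformly over $f,f'\in\funclass$. These are exactly the conclusions of Lemma~\ref{lem:ub_on_new_size}, Lemma~\ref{lem:sample_size}, and Lemma~\ref{lem:approximation}, respectively.

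Next, I would invoke each lemma with the appropriate failure probability budget and combine via a union bound. Lemma~\ref{lem:sample_size} fails with probability at most $\delta/4$, Lemma~\ref{lem:ub_on_new_size} fails with probability at most $\delta/4$, and Lemma~\ref{lem:approximation} fails with probability at most $\delta/2$. Hence with probability at least $1 - (\delta/4 + \delta/4 + \delta/2) = 1-\delta$, all three events hold simultaneously, and the three conclusions of the proposition follow verbatim from the corresponding lemma statements.

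The only subtlety worth flagging is that the proofs of Lemma~\ref{lem:approximation} already implicitly condition on the event of Lemma~\ref{lem:ub_on_new_size} in order to control $\|f-f'\|_{\pairs'}$ when $\|f-f'\|_{\pairs}^2$ is small or when bounding the contribution of the cover approximation errors using $\sqrt{|\pairs'|}$. One should therefore be careful that the overlap of conditioning does not inflate the union bound; since Lemma~\ref{lem:approximation} was stated to fail with probability at most $\delta/2$ \emph{after} absorbing that conditioning, no extra care is needed here and the $\delta/4 + \delta/4 + \delta/2 = \delta$ accounting is tight. There is no hard technical step: the proposition is obtained by quoting the three lemmas and writing the one-line union-bound inequality, and no further calculation is required.
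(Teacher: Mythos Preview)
Your proposal is correct and matches the paper's own proof, which is literally the single sentence ``Combining Lemma~\ref{lem:sample_size}, Lemma~\ref{lem:ub_on_new_size} and Lemma~\ref{lem:approximation} with a union bound, we have the following proposition.'' Your explicit accounting of the $\delta/4 + \delta/4 + \delta/2$ budget and your remark about the overlap with Lemma~\ref{lem:ub_on_new_size} inside Lemma~\ref{lem:approximation} are in fact more detailed than what the paper writes.
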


\begin{prop}
	\label{prop:wbound}
	For Algorithm~\ref{alg:bonus}, suppose $|\pairs| \le KH = T$, the following holds.
	\begin{enumerate}
		\item
		With probability at least $1 - \delta / (16T)$, 
		\[
		w(\underline{\funclass}, s, a) \le \wh{w}(s, a) \le w(\overline{\funclass}, s, a)
		\]
		where 
		 $\underline{\funclass} = \{f \in \funclass \mid \|f-\bar{f}\|_{\pairs}^2  \le \beta(\funclass, \delta)\}$,
		 and 
		$\overline{\funclass} = \{f \in \funclass \mid \|f-\bar{f}\|_{\pairs}^2  \le 9\beta(\funclass, \delta) + 12\}$.
		\item $\wh{w}(\cdot, \cdot) \in \ucbclass$ for a function set $\ucbclass$ with
		\begin{align*}
		\log |\ucbclass| &\le 6912  \ds_E(\funclass, \delta/(16T^2))\log(16(H + 1)^2T^2/\delta) \ln T \ln (4 \coversize(\funclass, \delta / (566T))/ \delta )\\
		&\qquad\cdot \log\left(\coversize(\states \times \actions, 1 / (8 \sqrt{4 T /\delta})) \cdot 4T / \delta\right) +\log(\coversize(\funclass, 1 / (8 \sqrt{4 T /\delta})))\\
		&\le C\cdot 
		\ds_E(\funclass, \delta/T^3)\cdot\log(H^2T^2/\delta)\cdot \ln T \cdot \ln ( \coversize(\funclass, \delta / T^2)/ \delta )\\
		&\qquad\cdot \log\left(\coversize(\states \times \actions, \delta /T)) \cdot T / \delta\right),
		\end{align*}
		for some absolute constant $C>0$ if $T$ is sufficiently large. 
	\end{enumerate}
\end{prop}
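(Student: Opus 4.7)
The plan has two parts, mirroring the statement. For part~1 I would apply Proposition~\ref{thm:sample} with the parameters used by \texttt{Bonus}, namely $\lambda=\delta/(16T)$ and $\varepsilon=1/2$, which on an event of probability at least $1-\delta/(16T)$ produces
\[
\tfrac{1}{2}\|g\|_{\pairs}^2-2\lambda\;\le\;\|g\|_{\pairs'}^2\;\le\;\tfrac{3}{2}\|g\|_{\pairs}^2+\tfrac{1}{2}
\]
for every $g=f-f'$ with $f,f'\in\funclass$ (using $|\pairs|\le T$ so that $8|\pairs|\lambda/\delta=1/2$), and simultaneously certifies both the size cap $|\pairs'|\le 4T/\delta$ and the distinct-element cap; in particular the reset branch of \texttt{Bonus} does not fire and the output of the algorithm genuinely equals $w(\wh{\funclass},\cdot,\cdot)$ with $\wh{\funclass}$ built from the nontrivial $\wh\pairs$. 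I would then absorb the rounding of $\bar f\to\wh f$ and $z\to\wh z$ by two triangle-inequality passes, using $\|\bar f-\wh f\|_{\infty}\le\eta$ and $|f(z)-f(\wh z)|\le\eta$ with $\eta=1/(8\sqrt{4T/\delta})$; this costs only $\sqrt{|\pairs|}\,\eta\le 1/16$ on the $\pairs$-side and $\sqrt{|\pairs'|}\,\eta\le 1/8$ on the $\wh\pairs$-side.

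Chaining these four moves in the forward direction shows that $\|f-\bar f\|_{\pairs}^2\le\beta$ forces $\|f-\wh f\|_{\wh\pairs}^2\le 3\beta+2$, i.e.\ $\underline{\funclass}\subseteq\wh{\funclass}$; running the same chain in reverse, where the factor $1/2$ in the lower bound of Proposition~\ref{thm:sample} effectively doubles the leading coefficient, shows $\wh{\funclass}\subseteq\overline{\funclass}$ with the slack $9\beta+12$. The desired width sandwich $w(\underline{\funclass},s,a)\le\wh w(s,a)\le w(\overline{\funclass},s,a)$ is then immediate because a larger admissible class can only give a larger width.

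For part~2 I would exploit that $\wh w$ is completely determined by the pair $(\wh f,\wh\pairs)$ entering $\wh{\funclass}$. The rounded reference $\wh f$ contributes at most $\coversize(\funclass,\eta)$ choices. The multiset $\wh\pairs$ has at most $N$ distinct elements drawn from $\cover(\states\times\actions,\eta)$, where $N$ is the distinct-element cap taken from Lemma~\ref{lem:sample_size}, and each distinct element carries an integer multiplicity in $[1,4T/\delta]$ because the total cardinality is capped and the sampling probabilities satisfy $1/p_z\in\mathbb{Z}$. Counting gives at most $\coversize(\funclass,\eta)\cdot\coversize(\states\times\actions,\eta)^{N}\cdot(4T/\delta)^{N}$ possible pairs, whose logarithm is exactly the first displayed bound on $\log|\ucbclass|$; substituting the numerical value of $\eta$ and collapsing polylogarithmic factors into a single absolute constant $C$ yields the second form.

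The main obstacle I foresee is the constant bookkeeping inside the inclusions of part~1: each application of the asymmetric bound in Proposition~\ref{thm:sample} multiplies the coefficient in front of $\beta$ by up to $2$, each triangle inequality on squared norms invokes an $(a+b)^2\le 2a^2+2b^2$ expansion, and the radii $3\beta+2$ in the algorithm and $9\beta+12$ in $\overline{\funclass}$ are calibrated so that these losses exactly fit. A more minor subtlety is that the whole argument presumes the reset branch of \texttt{Bonus} is inactive, so the high-probability statements about $|\pairs'|$ and the distinct-element count from Lemma~\ref{lem:ub_on_new_size} and Lemma~\ref{lem:sample_size} must be invoked on the same good event as the distortion inequality of Lemma~\ref{lem:approximation}; otherwise the upper bound $\wh w(s,a)\le w(\overline{\funclass},s,a)$ can fail because an emptied $\wh\pairs$ would yield $\wh{\funclass}=\funclass$.
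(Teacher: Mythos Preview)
Your proposal is correct and follows essentially the same approach as the paper's proof: for part~1 you condition on the good event of Proposition~\ref{thm:sample}, pass to $\pairs'$ via the $(1\pm\varepsilon)$ distortion bound, then absorb the two roundings $\bar f\to\wh f$ and $z\to\wh z$ by triangle inequalities to obtain the inclusions $\underline{\funclass}\subseteq\wh{\funclass}\subseteq\overline{\funclass}$; for part~2 you enumerate the possible pairs $(\wh f,\wh\pairs)$ exactly as the paper does. One small addendum: part~2 is a deterministic statement, so $\ucbclass$ must also contain the single function $w(\funclass,\cdot,\cdot)$ arising when the reset branch fires; the paper handles this explicitly, and it adds only one element so the log bound is unaffected.
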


\begin{proof}
	For the first part, conditioned on the event defined in Proposition~\ref{thm:sample}, for any $f \in \funclass$, we have
	\begin{align*}
	\|f - \bar{f}\|_{\overline{\pairs}}^2/2 - 1/2 \le \|f - \bar{f}\|_{{\pairs}}^2 \le 3\|f - \bar{f}\|_{\overline{\pairs}}^2/2 + 1/2.
	\end{align*}
	Therefore, we have
	\begin{align*}
	&\|f-\wh{f}\|_{\wh{\pairs}}^2 
	\le (\|f-\wh{f}\|_{\overline{\pairs}} + \sqrt{4T/\delta}/ (8 \sqrt{4 T /\delta}))^2\\
	\le &(\|f-\bar{f}\|_{\overline{\pairs}} + \sqrt{4T/\delta}  /(8 \sqrt{4 T /\delta}) + \sqrt{4T / \delta}/ (8 \sqrt{4 T /\delta}) )^2\\
	\le &2\|f-\bar{f}\|_{\overline{\pairs}}^2 + 2(\sqrt{4T/\delta} / (8 \sqrt{4 T /\delta}) + \sqrt{4T / \delta} / (8 \sqrt{4 T /\delta}) )^2 \le3\|f-\bar{f}\|_{\pairs}^2  + 2
	\end{align*}
	and
	\begin{align*}
	&\|f-\wh{f}\|_{\wh{\pairs}}^2 
	\ge (\|f-\wh{f}\|_{\overline{\pairs}} - \sqrt{4T/\delta} / (8 \sqrt{4 T /\delta}))^2\\
	\ge &(\|f-\bar{f}\|_{\overline{\pairs}} - \sqrt{4T/\delta} / (8 \sqrt{4 T /\delta}) - \sqrt{4T / \delta} / (8 \sqrt{4 T /\delta}) )^2\\
	\ge &\|f-\bar{f}\|_{\overline{\pairs}}^2/2 - (\sqrt{4T/\delta} / (8 \sqrt{4 T /\delta}) + \sqrt{4T / \delta} / (8 \sqrt{4 T /\delta}) )^2 \ge \|f-\bar{f}\|_{\pairs}^2/3  - 2.
	\end{align*}
	Therefore, for any $f \in \underline{\funclass}$, we have $\|f-\bar{f}\|_{\pairs}^2  \le \beta(\funclass, \delta)$, which implies $\|f-\wh{f}\|_{\wh{\pairs}}^2  \le 3\beta(\funclass, \delta)+ 2$ and thus $f \in \wh{\funclass}$.
	Moreover, for any $f \in \wh{\funclass}$, we have
	$
	\|f-\wh{f}\|_{\wh{\pairs}}^2  \le 3\beta(\funclass, \delta)+ 2
	$, which implies
	$\|f-\bar{f}\|_{\pairs}^2  \le 9\beta(\funclass, \delta) + 12$.
	
	For the second part, note that $\wh{w}(\cdot, \cdot)$ is uniquely defined by $\wh{\funclass}$.
	When $|\overline{\pairs}| \ge 4T / \delta$ or the number of distinct elements in $\overline{\pairs}$ exceeds
	\[
	6912  \ds_E(\funclass, \delta/(16T^2))\log(16(H + 1)^2T^2/\delta) \ln T \ln (4 \coversize(\funclass, \delta / (566T))/ \delta ),
	\]
	we have $|\wh{\pairs}| = 0$ and thus $\wh{\funclass} = \funclass$.
	Otherwise, $\wh{\funclass}$ is defined by $\wh{f}$ and $\wh{\pairs}$.
	Since $\wh{f}  \in \cover(\funclass, 1 / (8 \sqrt{4 T /\delta}))$, the total number of distinct $\wh{f}$ is upper bounded by $\coversize(\funclass, 1 / (8 \sqrt{4 T /\delta}))$.
	Since there are at most 
	\[
	6912  \ds_E(\funclass, \delta/(16T^2))\log(16(H + 1)^2T^2/\delta) \ln T \ln (4 \coversize(\funclass, \delta / (566T))/ \delta )
	\]
	distinct elements in $\wh{\pairs}$, while each of them belongs to $\cover(\states \times \actions, 1 / (8 \sqrt{4 T /\delta}))$ and $|\wh{\pairs}| \le 4T / \delta$,
	the total number of distinct $\wh{\pairs}$ is upper bounded by
	\[
	\left(\coversize(\states \times \actions, 1 / (8 \sqrt{4 T /\delta})) \cdot 4T / \delta\right)^{6912  \ds_E(\funclass, \delta/(16T^2))\log(16(H + 1)^2T^2/\delta) \ln T \ln (4 \coversize(\funclass, \delta / (566T))/ \delta )
	}.
	\]
\end{proof} \subsection{Analysis of the Algorithm}

We are now ready to prove the regret bound of Algorithm~\ref{alg:main}.
The next lemma establishes a bound on the estimate of a single backup.

\begin{lem}[Single Step Optimization Error]
	\label{lem:single-step-opt}
Consider a fixed $k\in[K]$.
Let \[\pairs^k = \left\{\left(s^{\tau}_{h'}, a^{\tau}_{h'}\right)\right\}_{(\tau, h') \in [k-1] \times [H]}\] as defined in Line~\ref{line:state-action-pair} in Algorithm~\ref{alg:main}.
For any $V:\cS\rightarrow [0,H]$,
define \[\cD^{k}_V:= \left\{\left(s^{\tau}_{h'}, a^{\tau}_{h'}, r_{h'}^\tau + V(s_{h'+1}^\tau)\right)\right\}_{(\tau, h') \in [k - 1] \times [H]}\]
and
\[
\wh{f}_{V}:=\arg\min_{f\in \cF}\|f\|_{\cD^k_V}^2. \]
For any $V:\cS\rightarrow [0,H]$ and $\delta\in(0,1)$, there is an event $\cE_{V,\delta}$ which holds with probability at least $1 - \delta$, 
such that conditioned on $\cE_{V,\delta}$, 
for any $V':\cS\rightarrow[0,H]$ with $ \|V'-V\|_{\infty}\le 1 / T$, we have
\[
\left\|\wh{f}_{V'}(\cdot, \cdot) - r(\cdot, \cdot) - \sum_{s' \in \states}P(s' \mid \cdot, \cdot) V'(s') \right\|_{\pairs^k} \le c'\cdot \left(H\sqrt{\log(2/\delta) +  \log\coversize(\cF, 1/T)}\right)
\]
for some absolute constant $c'>0$.
\end{lem}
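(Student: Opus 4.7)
The goal is a standard least-squares concentration bound adapted to the regression problem solved in Algorithm~\ref{alg:main}, with an extra layer that transfers the bound from a fixed $V$ to any nearby $V'$ without re-paying a probabilistic union bound. I would proceed as follows.

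\textbf{Step 1: Set up the regression target and noise for the reference $V$.} By Assumption~\ref{assum:elude}, the true one-step backup $f_V^{\star}(\cdot,\cdot) := r(\cdot,\cdot) + \sum_{s'} P(s'\mid\cdot,\cdot)V(s')$ lies in $\cF$. Index the transitions in $\pairs^k$ by $i=1,\dots,N$ with $N\le T$, and define the martingale differences $\xi_i := r_i + V(s_{h'+1}^{\tau}) - f_V^{\star}(s_i,a_i)$, which are bounded in $[-H,H]$ and mean zero conditional on the natural filtration (they depend only on $V$, not on $V'$).

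\textbf{Step 2: Define the good event $\cE_{V,\delta}$ using only $V$.} Let $\cover(\cF, 1/T)$ be the covering promised by Assumption~\ref{assum:cover}. For every $f \in \cover(\cF, 1/T)$, applying a Freedman/Bernstein-type inequality to the martingale $\sum_i (f - f_V^{\star})(s_i,a_i)\xi_i$ yields, with failure probability $\delta/|\cover(\cF,1/T)|$,
\[
\Bigl|\sum_{i=1}^N (f - f_V^{\star})(s_i,a_i)\xi_i\Bigr| \le c_1\bigl(\|f - f_V^{\star}\|_{\pairs^k}\, H\sqrt{\iota} + H^2\iota\bigr),\qquad \iota := \log(\coversize(\cF,1/T)/\delta).
\]
Union-bounding over the cover defines $\cE_{V,\delta}$, which holds with probability at least $1-\delta$ and \emph{depends only on $V$ and on the random transitions}. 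A $1/T$-discretization argument with sup-norm control $\|f - f^{(r)}\|_\infty \le 1/T$ lifts the inequality to all $f \in \cF$ at a cost of an additive $O(H)$.

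\textbf{Step 3: Use least-squares optimality for $\wh{f}_{V'}$ and split the noise.} Fix any $V'$ with $\|V'-V\|_\infty \le 1/T$. Since $f_{V'}^{\star}\in\cF$ by Assumption~\ref{assum:elude}, optimality of $\wh{f}_{V'}$ on $\cD^k_{V'}$ gives the usual ``basic inequality''
\[
\|\wh{f}_{V'} - f_{V'}^{\star}\|_{\pairs^k}^2 \le 2\sum_{i=1}^N \Delta_i\,(\xi_i + \eta_i),
\]
where $\Delta_i := (\wh{f}_{V'} - f_{V'}^{\star})(s_i,a_i)$ and $\eta_i := (V'-V)(s_{h'+1}^\tau) - P(V'-V)(s_i,a_i)$ satisfies $|\eta_i|\le 2/T$.

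\textbf{Step 4: Bound both noise contributions.} For the $\xi_i$ piece, apply Step~2 to $f = \wh{f}_{V'} \in \cF$; the cost of replacing $f_V^{\star}$ by $f_{V'}^{\star}$ inside the inner product is at most $\|\Delta\|_{\pairs^k}\cdot\|f_V^{\star}-f_{V'}^{\star}\|_{\pairs^k}$, which is $\le \|\Delta\|_{\pairs^k}\cdot \sqrt{T}/T \le \|\Delta\|_{\pairs^k}$. For the $\eta_i$ piece, Cauchy--Schwarz yields $|\sum_i \Delta_i \eta_i| \le \|\Delta\|_{\pairs^k}\cdot \sqrt{N}\cdot 2/T \le 2\|\Delta\|_{\pairs^k}/\sqrt{T}$. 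Combining,
\[
\|\Delta\|_{\pairs^k}^2 \le c_2 H\sqrt{\iota}\,\|\Delta\|_{\pairs^k} + c_3 H^2\iota,
\]
and solving this quadratic in $\|\Delta\|_{\pairs^k}$ (standard AM--GM) gives $\|\Delta\|_{\pairs^k} \le c'\, H\sqrt{\log(2/\delta) + \log\coversize(\cF,1/T)}$, which is exactly the claim.

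\textbf{Expected main obstacle.} The delicate point is that $\wh{f}_{V'}$ depends on the transitions, so the martingale bound in Step~2 cannot simply be applied to a single fixed function; we must route through a cover of $\cF$ under the event $\cE_{V,\delta}$ that is defined only in terms of $V$. Keeping $\cE_{V,\delta}$ independent of $V'$ is what lets a single union bound serve \emph{all} admissible $V'$ simultaneously and is the crux of the statement. Apart from this, everything reduces to bookkeeping of $O(1/T)$ perturbation terms and a standard Freedman-plus-covering argument.
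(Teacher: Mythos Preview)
Your proposal is correct and follows essentially the same route as the paper: define the good event $\cE_{V,\delta}$ via a martingale concentration bound (the paper uses Azuma--Hoeffding rather than Freedman, but either works) applied uniformly over the cover $\cover(\cF,1/T)$, extend to all $f\in\cF$ by the $1/T$-covering approximation, derive the basic inequality from least-squares optimality of $\wh f_{V'}$ against $f_{V'}^\star\in\cF$, absorb the $V\to V'$ perturbation as $O(1/T)$ sup-norm terms, and solve the resulting quadratic in $\|\wh f_{V'}-f_{V'}^\star\|_{\pairs^k}$. The only imprecision is your bound on ``the cost of replacing $f_V^\star$ by $f_{V'}^\star$ inside the inner product'': the relevant correction is $\sum_i (f_V^\star-f_{V'}^\star)(s_i,a_i)\,\xi_i$, which is controlled by $\|f_V^\star-f_{V'}^\star\|_\infty\cdot H\cdot N\le H$ (not by $\|\Delta\|_{\pairs^k}\cdot\|f_V^\star-f_{V'}^\star\|_{\pairs^k}$), but this is harmless for the final inequality.
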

\begin{proof}
In our proof, we consider a fixed $V:\cS\to[0,H]$, and define
\[f_V(\cdot,\cdot)  := r(\cdot, \cdot)+\sum_{s'\in\cS}P(s'\mid\cdot, \cdot)V(s').\]
For any $f \in \funclass$, we consider $\sum_{(\tau, h) \in [k-1] \times [H]}\xi_{h}^{\tau}(f)$ where 
\[\xi_{h}^\tau(f) := 2(f(s_{h}^\tau, a^\tau_{h}) - f_V(s_{h}^\tau, a^\tau_{h}))\cdot (f_V(s_{h}^\tau, a^\tau_{h}) -r_{h}^\tau- V(s_{h+1}^\tau)).\]
For any $(\tau, h) \in [k - 1] \times [H]$, define $\filt_h^{\tau}$ as the filtration  induced by the sequence \[\{(s_{h'}^t, a_{h'}^t)\}_{(t, h')\in[\tau - 1] \times [H]}\cup\{(s_{1}^\tau, a_1^\tau), (s_{2}^\tau, a_2^\tau), \ldots, (s_{h-1}^\tau, a_{h-1}^\tau)\}.\]
Then 
$
\EE\left[\xi^\tau_h(f) \mid \filt_h^\tau\right] = 0
$
and
\[
|\xi^\tau_{h}(f)|\le 2(H + 1)\left|f(s^\tau_{h},a^\tau_{h}) - f_V(s^\tau_{h}, a^\tau_{h})\right|.
\]
By Azuma-Hoeffding inequality, we have
\[
\Pr\left[\left|\sum_{(\tau, h) \in [k - 1] \times [H]} \xi^\tau_{h}(f)\right|\ge \varepsilon\right]
\le 2\exp\left(-\frac{\varepsilon^2}{8(H + 1)^2\|f-f_V\|_{\pairs^k}^2}\right).
\]
Let 
\begin{align*}
\varepsilon &= \left(8(H + 1)^2\log\left(\frac{2\coversize(\funclass, 1 / T)}{ \delta}\right)\cdot \|f-f_V\|_{\pairs^k}^2\right)^{1/2}\\
&\le 4(H + 1)\|f-f_V\|_{\pairs^k}\cdot \sqrt{\log(2/\delta) +  \log\coversize(\cF, 1/T)}.
\end{align*}
We have, with probability at least $1-\delta$,
for all $f\in \cover(\funclass, 1 / T)$,
\[
\left|\sum_{(\tau, h) \in [k - 1] \times [H]} \xi^\tau_{h}(f)\right|
\le 4(H + 1)\|f-f_V\|_{\pairs^k}\cdot  \sqrt{\log(2/\delta) +  \log\coversize(\cF, 1/T)}.
\]
We define the above event to be $\cE_{V, \delta}$, and we condition on this event for the rest of the proof.

For all $f\in \cF$, 
there exists $g\in \cover(\cF, 1/T)$,
such that $\|f-g\|_{\infty}\le 1/T$, and
we have
\begin{align*}
\left|\sum_{(\tau, h) \in [k - 1] \times [H]} \xi^\tau_{h}(f)\right|
&\le 
\left|\sum_{(\tau, h) \in [k - 1] \times [H]} \xi^\tau_{h}(g)\right|
+ 2(H + 1)\\
&\le 4(H + 1)\|g-f_V\|_{\pairs^k}\cdot  \sqrt{\log(2/\delta) +  \log\coversize(\cF, 1/T)} + 2( H + 1)\\
&\le 4(H + 1)(\|f-f_V\|_{\pairs^k} + 1)\cdot  \sqrt{\log(2/\delta) +  \log\coversize(\cF, 1/T)} + 2(H + 1)
.
\end{align*}

Consider $V':\cS\rightarrow [0,H]$ with $\|V'-V\|_{\infty }\le 1/T$. 
We have
\[
\|{f_{V'}} - f_V\|_{\infty}
\le 
 \|V'-V\|_{\infty} \le 1/T.
\]
For any $f \in \funclass$, 
\begin{align*}
\|f\|_{\cD^k_{V'}}^2 -\|f_{V'}\|_{\cD^k_{V'}}^2
=&\|f-f_{V'}\|_{\pairs^k}^2 + 2\sum_{(s^{\tau}_{h'}, a^{\tau}_{h'})\in \pairs^k}(f(s^{\tau}_{h'}, a^{\tau}_{h'}) - {f_{V'}}(s^{\tau}_{h'}, a^{\tau}_{h'}))\cdot ({f_{V'}}(s^{\tau}_{h'}, a^{\tau}_{h'}) -  r_{h'}^{\tau}- V'(s_{h'+1}^\tau)).
\end{align*}
For the second term, we have,
\begin{align*}
&2\sum_{(s^{\tau}_{h'}, a^{\tau}_{h'})\in \pairs^k}(f(s^{\tau}_{h'}, a^{\tau}_{h'}) - {f_{V'}}(s^{\tau}_{h'}, a^{\tau}_{h'}))\cdot ({f_{V'}}(s^{\tau}_{h'}, a^{\tau}_{h'}) - r_{h'}^{\tau} - V'(s_{h'+1}^\tau))\\
\ge& 
2\sum_{(s^{\tau}_{h'}, a^{\tau}_{h'})\in \pairs^k}(f(s^{\tau}_{h'}, a^{\tau}_{h'}) - f_V(s^{\tau}_{h'}, a^{\tau}_{h'}))\cdot (f_V(s^{\tau}_{h'}, a^{\tau}_{h'}) -  r_{h'}^{\tau}- V(s_{h'+1}^\tau)) - 4(H + 1)\cdot\|V'-V\|_{\infty}\cdot|\pairs^k|\\
=& 
\sum_{(\tau, h) \in [k - 1] \times [H]} \xi^\tau_{h}(f) - 4(H + 1)\cdot\|V'-V\|_{\infty}\cdot|\pairs^k|\\
\ge&
 -4(H + 1)(\|f-f_V\|_{\pairs^k} + 1)\cdot  \sqrt{\log(2/\delta) +  \log\coversize(\cF, 1/T)} - 2(H + 1) - 4(H + 1)\cdot\|V'-V\|_{\infty}\cdot|\pairs^k|\\
\ge&
-4(H + 1)(\|f-f_{V'}\|_{\pairs^k}+2)\cdot  \sqrt{\log(2/\delta) +  \log\coversize(\cF, 1/T)} - 6(H + 1).
\end{align*}
Recall that $\wh{f}_{V'} =\arg\min_{f\in \cF}\|f\|_{\cD^k_{V'}}^2$.
We have $\|\wh{f}_{V'}\|_{\cD^k_{V'}}^2 - \|f_{V'}\|_{\cD^k_{V'}}^2\le 0$, which implies, 
\begin{align*}
0&\ge \|\wh{f}_{V'}\|_{\cD^k_{V'}}^2 - \|f_{V'}\|_{\cD^k_{V'}}^2\\
&=\|\wh{f}_{V'}-f_{V'}\|_{\pairs^k}^2 + 2\sum_{(s^{\tau}_{h'}, a^{\tau}_{h'})\in \pairs^k}(\wh{f}(s^{\tau}_{h'}, a^{\tau}_{h'}) - {f_{V'}}(s^{\tau}_{h'}, a^{\tau}_{h'}))\cdot ({f_{V'}}(s^{\tau}_{h'}, a^{\tau}_{h'}) -  r_{h'}^{\tau}- V'(s_{h'+1}^\tau))\\
&\ge 
\|\wh{f}_{V'}-f_{V'}\|_{\pairs^k}^2 -4 ( H + 1)(\|\wh{f}_{V'}-f_{V'}\|_{\pairs^k}+2)\cdot  \sqrt{\log(2/\delta) +  \log\coversize(\cF, 1/T)} - 6(H + 1).
\end{align*}
Solving the above inequality, we have,
\[
\|\wh{f}_{V'}-f_{V'}\|_{\pairs^k}
\le c'\cdot\big(H\cdot\sqrt{\log\delta^{-1} + \log\coversize(\cF, 1/T)}\big)
\]
for an absolute constant $c'>0$.
\end{proof}
\begin{lem}[Confidence Region]
	\label{lem:confidence-ball}
In Algorithm~\ref{alg:main}, let $\cF_h^k$ be a confidence region defined as
\[
\cF^{k}_h=\Big\{f\in \cF \mid  \|f-f_h^k\|_{\pairs^k}^2 \le \beta(\cF, \delta)\Big\}.
\]
Then with probability at least $1-\delta/8$, for all $k, h\in [K]\times [H]$, 
\[
r(\cdot, \cdot)+\sum_{s'\in \cS}P(s'\mid \cdot, \cdot) V_{h+1}^k(s')\in \cF_{h}^k,
\]
provided
\[
\beta(\cF, \delta) 
\ge c'\cdot\left( H\sqrt{\log(T/\delta) + \log(|\cW|) + \log\coversize(\cF, 1/T)}\right)^2
\]
for some absolute constant $c'>0$.
Here $\cW$ is given as in Propostion~\ref{prop:wbound}.
\end{lem}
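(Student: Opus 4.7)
}
The plan is to combine Lemma~\ref{lem:single-step-opt} with a careful covering argument over the function class from which $V_{h+1}^k$ can come, so as to circumvent the dependence of $V_{h+1}^k$ on the replay buffer. First, I would build a master class $\cV$ that contains every $V_{h+1}^k$ the algorithm can produce. By construction,
\[
V_{h+1}^k(\cdot) \;=\; \max_{a\in\actions}\min\bigl\{f_{h+1}^k(\cdot,a) + b_{h+1}^k(\cdot,a),\, H\bigr\},
\]
where $f_{h+1}^k\in\funclass$ and $b_{h+1}^k=\wh w \in \ucbclass$, with $\ucbclass$ the set from Proposition~\ref{prop:wbound}. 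Define
\[
\cV \;=\; \Bigl\{\, s\mapsto \max_{a\in\actions}\min\{f(s,a)+w(s,a),\,H\} \;:\; f\in\funclass,\ w\in\ucbclass\,\Bigr\}.
\]

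Second, I would construct a $(1/T)$-cover of $\cV$ in sup-norm. Since $|\min\{x,H\}-\min\{y,H\}|\le|x-y|$ and $|\max_a x_a - \max_a y_a|\le \max_a|x_a-y_a|$, replacing $f$ by an element of $\cover(\funclass,1/T)$ and $w$ by an element of $\ucbclass$ changes the resulting $V$ by at most $1/T$ in $\ell_\infty$. Thus there is a $(1/T)$-cover $\cover(\cV,1/T)$ with
\[
\log|\cover(\cV,1/T)| \;\le\; \log\coversize(\funclass,1/T) + \log|\ucbclass|.
\]

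Third, I would invoke Lemma~\ref{lem:single-step-opt} with failure probability $\delta/(8KH|\cover(\cV,1/T)|)$ for each fixed $V\in\cover(\cV,1/T)$ and each fixed $(k,h)\in[K]\times[H]$, then take a union bound over both indices and the cover. On the resulting event, for every $(k,h)$ and every $V\in\cover(\cV,1/T)$,
\[
\bigl\|\wh f_{V'} - (r + PV')\bigr\|_{\pairs^k} \;\le\; c'\,H\sqrt{\log(T/\delta) + \log|\cover(\cV,1/T)| + \log\coversize(\funclass,1/T)}
\]
for every $V':\cS\to[0,H]$ with $\|V'-V\|_\infty \le 1/T$. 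Now given the actual $V_{h+1}^k\in\cV$ produced by the algorithm, pick its nearest neighbor $V^\star\in\cover(\cV,1/T)$ (so $\|V_{h+1}^k-V^\star\|_\infty\le 1/T$), and apply the second clause with $V=V^\star$, $V'=V_{h+1}^k$; note that $\wh f_{V_{h+1}^k}=f_h^k$ by definition of the least-squares step in Algorithm~\ref{alg:main}. This yields $\|f_h^k - (r + PV_{h+1}^k)\|_{\pairs^k}^2 \le \beta(\funclass,\delta)$ for the stated choice of $\beta$, i.e.\ $r+PV_{h+1}^k\in\cF_h^k$.

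The only delicate step is the covering construction for $\cV$, since $\wh w$ is itself defined through a subsampled dataset whose entries depend on all the data; the key is that Proposition~\ref{prop:wbound} already bundles this complexity into the explicit finite class $\ucbclass$ with the $\log|\ucbclass|$ bound stated there, so the cover of $\cV$ inherits a bound that is purely in terms of $\ds_E$, $\coversize(\funclass,\cdot)$, and $\coversize(\states\times\actions,\cdot)$. Once that is in hand, the rest is a routine union bound plus the built-in $(1/T)$-robustness clause of Lemma~\ref{lem:single-step-opt}, and the $\log(T/\delta)$ factor absorbs the union bound over $(k,h)\in[K]\times[H]$, yielding the advertised form of $\beta(\funclass,\delta)$.
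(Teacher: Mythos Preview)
Your plan is essentially the paper's proof: build a finite $(1/T)$-cover $\cV$ of the value functions produced by the algorithm using $\cover(\funclass,1/T)\times\ucbclass$, invoke Lemma~\ref{lem:single-step-opt} at the reduced confidence level, and union-bound over the cover and over $(k,h)$. One small omission: your class $\cV$ need not contain (or be $1/T$-close to) the zero function, but $V_{H+1}^k\equiv 0$, so the case $h=H$ is not covered; the paper handles this by explicitly adjoining $\{0\}$ to the $Q$-cover, which costs only an additive $+1$ in $\log|\cV|$.
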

\begin{proof}
For all $(k, h)\in [K]\times[H]$, the bonus function $b_h^{k}(\cdot, \cdot)\in \cW$.
Note that
\[
\mathcal{Q}:=\left\{\min\left\{f(\cdot,\cdot)+w(\cdot,\cdot), H\right\}\mid  w\in \cW,  f\in \cover(\cF,1/T)\right\} \cup \{0\}
\]
is a $(1/T)$-cover of \[
Q_{h+1}^k(\cdot, \cdot)=
\begin{cases}
\min\left\{f_{h + 1}^k(\cdot,\cdot) + b^{k}_{h + 1}(\cdot,\cdot), H\right\} & h < H \\
0 & h = H
\end{cases}.
\]
I.e., there exists $q\in \mathcal{Q}$ such that $\|q-Q_{h+1}^k\|_{\infty}\le 1/T$.
This implies 
\[
\cV:=\left\{\max_{a\in \cA}q(\cdot, a)\mid q\in \mathcal{Q}\right\}
\]
is a $(1/T)$-cover of $V_{h+1}^k$ with $\log(|\cV|)\le  
\log|\cW| + \log\coversize(\cF, 1/T) + 1$. 
For each $V\in \cV$, let $\cE_{V,\delta/(8|\cV|T)}$ be the event defined in Lemma~\ref{lem:single-step-opt}.
By Lemma~\ref{lem:single-step-opt}, we have
$\Pr\left[\bigcap_{V\in \cV} \cE_{V,\delta/(8|\cV|T)}
\right]\ge 1-\delta/(8T)$.
We condition on $\bigcap_{V\in \cV} \cE_{V,\delta/(8|\cV|T)}$ in the rest part of the proof.

Recall that $f_{h}^{k}$ is the solution of the optimization problem in Line~\ref{line:opt} of Algorithm~\ref{alg:main}, i.e., 
$f^k_{h}= \argmin_{f\in \cF}\|f\|_{\cD^{k}_h}^2$.
Let $V\in \cV$ such that $\|V - V^{k}_{h+1}\|_{\infty} \le 1/T$.
Thus, by Lemma~\ref{lem:single-step-opt}, we have
\begin{align*}
&\left\|f_{h}^k(\cdot,\cdot) - \left(r(\cdot, \cdot)+\sum_{s'\in \cS}P(s'\mid \cdot, \cdot) V_{h+1}^k(s')\right)\right\|_{\pairs^k}
\le
c'\cdot\left(H\sqrt{\log(T/\delta) + \log\coversize(\cF, 1/T) +  
 	\log|\cW|}\right)
\end{align*}
for some absolute constant $c'$.
Therefore,  by a union bound, for all $(k,h)\in [K]\times[H]$, we have $f_{h}^k(\cdot,\cdot) - \left(r(\cdot, \cdot)+\sum_{s'\in \cS}P(s'\mid \cdot, \cdot) V_{h+1}^k(s')\right) \in \cF_{h}^k$ with probability at least $1-\delta/8$.
\end{proof}

The above lemma guarantees that, with high probability, $r(\cdot, \cdot) + \sum_{s' \in \states}P(s' \mid \cdot, \cdot)V_{h + 1}^{k}(\cdot, \cdot)$ lies in the confidence region. 
With this, it is guaranteed that $\left\{Q_h^k\right\}_{(h, k) \in [H] \times [K]}$ are all optimistic, with high probability. 
This is formally presented in the next lemma.
\begin{lem}\label{lem:Q_bound}
With probability at least $1-\delta/4$, for all $(k,h)\in [K]\times [H]$, for all $(s, a) \in \states \times \actions$,
\[
Q_{h}^*(s, a)\le Q_{h}^k(s, a) \le
r(s,a) + \sum_{s' \in \states} P(s'|s,a) V_{h+1}^{k}(s')
+ 2b_h^{k}(s,a).
\]
\end{lem}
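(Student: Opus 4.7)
The plan is to combine Lemma~\ref{lem:confidence-ball}, which puts the true one-step Bellman backup in the confidence region $\cF_h^k$, with Proposition~\ref{prop:wbound}, which ensures the bonus $b_h^k$ upper bounds the width of this region. These two ingredients already give pointwise closeness between $f_h^k$ and the backup, from which both the upper bound and optimism follow, the latter by a short backward induction on $h$.

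First I would set up the good event. Let $\cE_1$ be the event from Lemma~\ref{lem:confidence-ball} (probability $\ge 1-\delta/8$), under which $r(\cdot,\cdot)+\sum_{s'}P(s'\mid\cdot,\cdot)V_{h+1}^k(s')\in\cF_h^k$ for every $(k,h)\in[K]\times[H]$. For each call to \texttt{Bonus}, Proposition~\ref{prop:wbound} gives $b_h^k(s,a)=\wh{w}(s,a)\ge w(\underline{\funclass},s,a)$ with failure probability $\delta/(16T)$, where $\underline{\funclass}$ is exactly $\cF_h^k$ (same $\bar f=f_h^k$ and same threshold $\beta(\cF,\delta)$). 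Union-bounding over the at most $KH\le T$ invocations gives an event $\cE_2$ of probability $\ge 1-\delta/16$, and hence $\cE_1\cap\cE_2$ holds with probability at least $1-\delta/4$. I condition on $\cE_1\cap\cE_2$ for the remainder.

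The central pointwise estimate is now immediate: since $f_h^k\in\cF_h^k$ trivially, and $r+PV_{h+1}^k\in\cF_h^k$ by $\cE_1$, the definition of the width function combined with $\cE_2$ yields
\[
\Bigl| f_h^k(s,a)-\Bigl(r(s,a)+\sum_{s'}P(s'\mid s,a)V_{h+1}^k(s')\Bigr)\Bigr|\le w(\cF_h^k,s,a)\le b_h^k(s,a).
\]
The upper bound then drops out from the clip $Q_h^k=\min\{f_h^k+b_h^k,H\}\le f_h^k+b_h^k$, which is at most $r+PV_{h+1}^k+2b_h^k$. For optimism I run a backward induction on $h$. The base case $Q_{H+1}^k=0=Q_{H+1}^*$ is trivial. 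Assuming $Q_{h+1}^k\ge Q_{h+1}^*$ pointwise gives $V_{h+1}^k\ge V_{h+1}^*$, so $r+PV_{h+1}^k\ge r+PV_{h+1}^*=Q_h^*$. Combining with the pointwise estimate, $f_h^k(s,a)+b_h^k(s,a)\ge r(s,a)+\sum_{s'}P(s'\mid s,a)V_{h+1}^k(s')\ge Q_h^*(s,a)$, and since $Q_h^*\le H$ we obtain $Q_h^k=\min\{f_h^k+b_h^k,H\}\ge Q_h^*$.

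The only place that requires attention is the probability bookkeeping: Lemma~\ref{lem:confidence-ball}'s guarantee is already uniform over $(k,h)$ with budget $\delta/8$, whereas Proposition~\ref{prop:wbound}'s $\delta/(16T)$ must be inflated by a union bound to cover all $KH\le T$ calls, yielding a total of at most $\delta/8+\delta/16\le\delta/4$. Everything else is a direct use of the width function definition and a two-line induction, so I do not expect a substantive obstacle beyond identifying $\underline{\funclass}$ with $\cF_h^k$ so that Proposition~\ref{prop:wbound} and Lemma~\ref{lem:confidence-ball} are talking about the same set.
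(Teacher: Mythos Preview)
Your proposal is correct and follows essentially the same approach as the paper: condition on the confidence-region event from Lemma~\ref{lem:confidence-ball} and on the width-domination event from Proposition~\ref{prop:wbound} (union-bounded over the $KH\le T$ calls), derive the pointwise bound $|f_h^k-(r+PV_{h+1}^k)|\le b_h^k$, read off the upper inequality from the clip, and obtain optimism by backward induction on $h$. Your identification of $\underline{\funclass}$ with $\cF_h^k$ and your probability accounting are both right (indeed slightly sharper than needed).
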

\begin{proof}
For each $(k, h) \in [K] \times [H]$, define
\[
\cF^{k}_h=\left\{f\in \cF \mid  \|f-f_h^k\|_{\pairs^k}^2 \le \beta(\cF, \delta)\right\}.
\]
Let $\cE$ be the event that for all $(k, h) \in [K] \times [H]$,
$r(\cdot, \cdot)+\sum_{s' \in \states} P(s' \mid \cdot, \cdot)V_{h+1}^k(s')\in \cF_h^k$.
By Lemma~\ref{lem:confidence-ball}, 
$\Pr[\cE]\ge 1-\delta/8$.
Let $\cE'$ be the event that for all $(k, h) \in [K] \times [H]$ and $(s, a) \in \states \times \actions$, $b_h^k(s, a) \ge w(\funclass^k_h, s, a)$.
By Proposition~\ref{prop:wbound} and union bound, $\cE'$ holds failure probability at most $\delta/8$.
In the rest part of the proof we condition on $\cE$ and $\cE'$.

Note that
\[
\max_{f\in \cF_h^{k}}|f(s,a) - f^k_h(s,a)| \le 
w(\cF_h^{k}, s, a) \le 
b_h^{k}(s,a).
\]

Since 
\[
r(\cdot, \cdot)+\sum_{s' \in \states} P(s' \mid \cdot, \cdot)V_{h+1}^k(s')\in \cF_h^k,
\]
for any $(s, a) \in \states \times \actions$ we have
\[
\left|r(s, a)+\sum_{s' \in \states} P(s' \mid s, a)V_{h+1}^k(s') - f_h^k(s, a)\right| \le b_h^k(s, a).
\]
Hence,
\[
Q_{h}^k(s, a) \le f_h^k(s, a) + b_h^k(s, a) \le 
r(s,a) + \sum_{s' \in \states} P(s'|s,a) V_{h+1}^{k}(s')
+ 2b_h^{k}(s,a).
\]

Now we prove $Q_{h}^*(s, a)\le Q_{h}^k(s, a)$ by induction on $h$.
When $h = H + 1$, the desired inequality clearly holds. 
Now we assume $Q_{h+1}^*(\cdot, \cdot)\le Q^{k}_{h+1}(\cdot, \cdot) $ for some $h\in [H]$.
Clearly we have $V_{h+1}^*(\cdot)\le  V^{k}_{h+1}(\cdot)$.
Therefore, for all $(s,a) \in \states \times \actions$,
\begin{align*}
Q_{h}^*(s,a) &= r(s,a) +\sum_{s' \in \states} P(s'|s,a) V_{h+1}^*(s')\\
&\le \min\left\{H, r(s,a) + \sum_{s' \in \states} P(s'|s,a) V^{k}_{h+1}(s')\right\}\\
& \le \min\left\{H, f_h^k(s, a) + b_h^k(s, a)\right\}\\
& = Q_{h}^{k}(s,a).
\end{align*}
\end{proof}
The next lemma upper bounds the regret of the algorithm by the sum of $b^k_h(\cdot, \cdot)$.
\begin{lem}
	\label{lem:sum-reg}
With probability at least $1-\delta/2$,
\[
\reg(K)\le 2\sum_{k=1}^{K}\sum_{h=1}^{H}  b^{k}_{h}\left(s_h^k,a_h^k\right) + 4H\sqrt{KH\cdot\log(8/\delta)}. \]
\end{lem}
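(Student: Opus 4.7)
The plan is the classical optimism-based regret decomposition, combined with Azuma--Hoeffding for the martingale pieces coming from the transition noise. The combinatorial content of the lemma is the per-step comparison $Q_h^k \ge Q_h^*$ and $Q_h^k(s_h^k,a_h^k) \le r + P V_{h+1}^k + 2b_h^k$, both already supplied by Lemma~\ref{lem:Q_bound}; what remains is a clean telescoping argument.

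First I would invoke Lemma~\ref{lem:Q_bound}, on whose success event (probability $\ge 1-\delta/4$) we have $V_1^*(s_1^k) \le V_1^k(s_1^k)$ for every $k$, so that
\[
\reg(K) \;\le\; \sum_{k=1}^{K}\bigl(V_1^k(s_1^k) - V_1^{\pi^k}(s_1^k)\bigr).
\]
Define $\Delta_h^k := V_h^k(s_h^k) - V_h^{\pi^k}(s_h^k)$. Because $\pi^k$ is greedy w.r.t.\ $Q^k$, we have $V_h^k(s_h^k) = Q_h^k(s_h^k,a_h^k)$ and by definition $V_h^{\pi^k}(s_h^k) = Q_h^{\pi^k}(s_h^k,a_h^k)$. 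Plugging in the upper bound from Lemma~\ref{lem:Q_bound} and the Bellman identity $Q_h^{\pi^k}(s_h^k,a_h^k) = r(s_h^k,a_h^k) + \sum_{s'} P(s'\mid s_h^k,a_h^k) V_{h+1}^{\pi^k}(s')$, the reward terms cancel and I obtain
\[
\Delta_h^k \;\le\; \sum_{s'} P(s'\mid s_h^k,a_h^k)\bigl(V_{h+1}^k(s') - V_{h+1}^{\pi^k}(s')\bigr) + 2\,b_h^k(s_h^k,a_h^k).
\]

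Next, I would split the expected next-step gap into the realized value plus a martingale difference: with
\[
\zeta_h^k \;:=\; \sum_{s'} P(s'\mid s_h^k,a_h^k)\bigl(V_{h+1}^k(s') - V_{h+1}^{\pi^k}(s')\bigr) - \bigl(V_{h+1}^k(s_{h+1}^k) - V_{h+1}^{\pi^k}(s_{h+1}^k)\bigr),
\]
we have $\Delta_h^k \le \Delta_{h+1}^k + \zeta_h^k + 2b_h^k(s_h^k,a_h^k)$. Telescoping over $h=1,\dots,H$ and using $\Delta_{H+1}^k = 0$ yields $\Delta_1^k \le \sum_{h=1}^{H}\zeta_h^k + 2\sum_{h=1}^{H} b_h^k(s_h^k,a_h^k)$, and summing over $k$ gives the desired bound up to the martingale remainder $\sum_{k,h}\zeta_h^k$.

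Finally, the $\zeta_h^k$ form a martingale difference sequence with respect to the natural filtration $\filt_h^k$ generated by the trajectories up to step $h$ of episode $k$ (note that $V_{h+1}^k$ and $V_{h+1}^{\pi^k}$ are $\filt_h^k$-measurable since they depend only on data from previous episodes and on $\pi^k$, which is fixed at the start of episode $k$). Since $V_{h+1}^k, V_{h+1}^{\pi^k} \in [0,H]$ we have $|\zeta_h^k| \le 2H$, and Azuma--Hoeffding applied to the $KH$ terms gives
\[
\Bigl|\sum_{k=1}^{K}\sum_{h=1}^{H}\zeta_h^k\Bigr| \;\le\; 4H\sqrt{KH\cdot\log(8/\delta)}
\]
with probability at least $1-\delta/4$. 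A union bound with the event of Lemma~\ref{lem:Q_bound} yields the claim with overall failure probability $\delta/2$. The only mildly subtle point is confirming that $\pi^k$, and hence $V_{h+1}^k - V_{h+1}^{\pi^k}$, is measurable with respect to $\filt_h^k$ so that $\EE[\zeta_h^k\mid\filt_h^k]=0$; this is immediate because $\pi^k$ is computed from the replay buffer at the start of episode $k$, before $s_{h+1}^k$ is sampled.
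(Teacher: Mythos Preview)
Your proposal is correct and follows essentially the same approach as the paper: both define the same martingale differences $\zeta_h^k$ (the paper calls them $\xi_h^k$), invoke Lemma~\ref{lem:Q_bound} for optimism and the one-step upper bound, telescope over $h$, and finish with Azuma--Hoeffding and a union bound. The only cosmetic difference is that the paper indexes the martingale sum over $h=1,\dots,H-1$ whereas you sum over $h=1,\dots,H$, but $\zeta_H^k=0$ since $V_{H+1}^k=V_{H+1}^{\pi^k}=0$, so the two coincide.
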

\begin{proof}
In our proof, for any $(k, h) \in [K] \times [H - 1]$ define
\[\xi_h^k = 
 \sum_{s' \in \states} P(s' \mid s_h^k,a_h^k) \left(V_{h+1}^k(s') - V_{h+1}^{\pi_k}(s')\right)
 -\left(V_{h+1}^k(s_{h+1}^k) - V_{h+1}^{\pi_k}(s_{h+1}^k)\right)
\]
and define $\filt_h^{k}$ as the filtration induced by the sequence \[\{(s_{h'}^\tau, a_{h'}^\tau)\}_{(\tau, h')\in[k - 1] \times [H]}\cup\{(s_{1}^k, a_1^k), (s_{2}^k, a_2^k), \ldots, (s_{h}^k, a_{h}^k)\}.\]
Then \[
\EE\left[\xi_{h}^{k} \mid \filt_h^{k}\right] = 0 
\text{ and }  |\xi_{h}^{k}|\le 2H.
\]
By Azuma-Hoeffding inequality, with probability at least $1-\delta/4$,
\[
\sum_{k=1}^K \sum_{h=1}^{H-1}\xi_h^k 
\le 4H\sqrt{KH\cdot\log(8/\delta)}.
\]
We condition on the above event in the rest of the proof.
We also condition on the event defined in Lemma~\ref{lem:Q_bound} which holds with probability $1 - \delta / 4$.

Recall that
\begin{align*}
\reg(K)
&=\sum_{k=1}^K \left(V_1^{*}(s_1^k) - V^{\pi_k}_1(s_1^k)\right)\le 
\sum_{k=1}^K V_1^{k}(s_1^k) - V^{\pi_k}_1(s_1^k).
\end{align*}
We have
\begin{align*}
\reg(K)
&\le 
\sum_{k=1}^K \left(r(s_1^k, a_1^k) + \sum_{s' \in \states}P(s'  \mid s_1^k,a_1^k) V_{2}^k(s')  + 2b_{1}^{k}( s_1^k,a_1^k) - r(s_1^k, a_1^k) - \sum_{s' \in \states}P(s' \mid s_1^k,a_1^k)V_{2}^{\pi_k}(s')\right)\\
&= 
 \sum_{k=1}^K \sum_{s' \in \states} P(s' \mid s_1^k,a_1^k) (V_{2}^k(s') -V_{2}^{\pi_k}(s')) 
 + 2b_{1}^{k}(s_1^k,a_1^k)\\
&=\sum_{k=1}^KV_2^k(s_2^k) - V_{2}^{\pi_k}(s_2^k)
+ \xi_1^k
+ 2b_{1}^{k}(s_1^k,a_1^k) \\
&\le\sum_{k=1}^KV_3^k(s_3^k) - V_{3}^{\pi_k}(s_3^k)
+\xi_1^k + \xi_2^k
+ 2b_{1}^{k}(s_1^k,a_1^k) + 2b_{2}^{k}(s_2^k,a_2^k) \\
&\le \sum_{k=1}^K \sum_{h=1}^{H - 1}\xi_h^k +  \sum_{k=1}^K \sum_{h=1}^{H}2b_{h}^{k}( s_h^k,a_h^k).
\end{align*}

Therefore,
\[
\reg(K) \le 2\sum_{k=1}^K \sum_{h=1}^{H}  b_{h}^{k}( s_h^k,a_h^k)
+4H\sqrt{KH\cdot\log(8/\delta)}.
\]
\end{proof}
It remains to bound $\sum_{k=1}^K \sum_{h=1}^{H}b_{h}^{k}( s_h^k,a_h^k)$, for which we will exploit fact that $\cF$ has bounded eluder dimension. 
\begin{lem}
	\label{lem:sum-num-wk}
With probability at least $1-\delta/4$, for any $\varepsilon >0$,
\[
\sum_{k=1}^{K} \sum_{h=1}^{H} \mathbb{I}\left(b_h^k(s_h^k,a_h^k\big) > \varepsilon\right) \le \left(\frac{c\beta(\funclass, \delta)}{\varepsilon^2} + H\right)\cdot \dim_E(\cF, \varepsilon)
\]
for some absolute constant $c>0$. 
Here $\beta(\funclass, \delta)$ is as defined in \eqref{eqn:beta}.
\end{lem}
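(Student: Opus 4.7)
The strategy is to translate each event $\{b_h^k(s_h^k,a_h^k)>\varepsilon\}$ into an $\varepsilon$-independence statement with respect to $\pairs^k$ and then run the counting argument from the proof of Proposition~3 of~\citep{russo2013eluder}, suitably modified to accommodate the fact that $\pairs^k$ is frozen across all $H$ steps of episode~$k$.

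First, Proposition~\ref{prop:wbound}(1) together with a union bound over the $T=KH$ calls to \texttt{Bonus} gives, with probability at least $1-\delta/4$, that for every $(k,h)\in[K]\times[H]$ and every $(s,a)\in\states\times\actions$,
\[
b_h^k(s,a)\;\le\;w\bigl(\overline{\funclass}_h^k,s,a\bigr),\qquad\overline{\funclass}_h^k=\bigl\{f\in\funclass:\|f-f_h^k\|_{\pairs^k}^2\le 9\beta(\funclass,\delta)+12\bigr\}.
\]
Conditioning on this event, whenever $b_h^k(s_h^k,a_h^k)>\varepsilon$ the definition of the width function yields $f,f'\in\overline{\funclass}_h^k$ with $|f(s_h^k,a_h^k)-f'(s_h^k,a_h^k)|>\varepsilon$, and the triangle inequality in the $\|\cdot\|_{\pairs^k}$-norm gives $\|f-f'\|_{\pairs^k}^2\le 4(9\beta(\funclass,\delta)+12)=:\beta^{\star}=O(\beta(\funclass,\delta))$.

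Second, enumerate the widthful pairs chronologically as $\zeta_1,\ldots,\zeta_n$ with $\zeta_i=(s_{h_i}^{k_i},a_{h_i}^{k_i})$ and build disjoint subsequences $B_1,\ldots,B_m$ greedily: append $\zeta_i$ to the smallest-index $B_j$ for which $\zeta_i$ is $\varepsilon$-independent of $B_j\cap\pairs^{k_i}$ (with respect to $\funclass$), and open a new subsequence if no such $B_j$ exists. When a new subsequence is opened at $\zeta_i$ with witnesses $f,f'$, the $\varepsilon$-dependence of $\zeta_i$ on each $B_j\cap\pairs^{k_i}$ combined with $|f(\zeta_i)-f'(\zeta_i)|>\varepsilon$ forces $\|f-f'\|_{B_j\cap\pairs^{k_i}}^2>\varepsilon^2$. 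Since the sets $B_j\cap\pairs^{k_i}$ are disjoint subsets of $\pairs^{k_i}$, summing gives $m\varepsilon^2<\beta^{\star}$, hence $m\le\lceil\beta^{\star}/\varepsilon^2\rceil=O(\beta(\funclass,\delta)/\varepsilon^2)$.

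Third, bound $|B_j|$. This is where the proof departs from the standard bandit setting of~\citep{russo2013eluder}: inside $B_j$, two elements from the same episode share the dataset $\pairs^k$, so the greedy rule does not force them to be $\varepsilon$-independent of one another, and the clean bound $|B_j|\le\dim_E(\funclass,\varepsilon)$ no longer applies verbatim. The fix is to pick one representative per contributing episode inside $B_j$: since any later representative's dataset $\pairs^k$ already contains every earlier representative, the resulting ``backbone'' is a valid eluder sequence and therefore has length at most $\dim_E(\funclass,\varepsilon)$. Combined with the trivial bound that each episode contributes at most $H$ elements to any single $B_j$, a careful bookkeeping (separating the backbone representative from the within-episode overhang) delivers an additive $H\,\dim_E(\funclass,\varepsilon)$ correction, and combining with the bound on $m$ yields the stated $\bigl(c\beta(\funclass,\delta)/\varepsilon^2+H\bigr)\dim_E(\funclass,\varepsilon)$. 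The probabilistic content is consumed entirely by Proposition~\ref{prop:wbound}; everything afterward is deterministic combinatorial counting, and the main obstacle is the within-episode stagnation of $\pairs^k$, which the additive $H$ term in the bound absorbs.
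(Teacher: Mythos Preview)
Your first two steps are right and match the paper exactly: conditioning on Proposition~\ref{prop:wbound} to replace $b_h^k$ by $w(\overline{\funclass}_h^k,\cdot,\cdot)$, and then extracting witnesses $f,f'$ with $\|f-f'\|_{\pairs^k}^2\le\beta^\star$. You also correctly diagnosed the obstacle: $\pairs^k$ is frozen across the $H$ steps of episode~$k$, so the bandit counting argument cannot be imported verbatim.

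The gap is in step~5. With your greedy rule (independence relative to $B_j\cap\pairs^{k_i}$), a single $B_j$ can absorb up to $H$ elements from each contributing episode, because during episode $k$ the set $B_j\cap\pairs^k$ does not change and hence the independence test treats all same-episode elements identically. Your backbone argument correctly bounds the number of \emph{distinct episodes} in $B_j$ by $\dim_E(\funclass,\varepsilon)$, but the within-episode overhang is then up to $(H-1)$ per contributing episode, giving $|B_j|\le H\cdot\dim_E(\funclass,\varepsilon)$. Multiplying by your bound on $m$ yields $n\le H\cdot(\beta^\star/\varepsilon^2+1)\cdot\dim_E(\funclass,\varepsilon)$, which is a factor of $H$ too large in the leading term. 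The asserted ``additive $H\,\dim_E$ correction'' would require the \emph{total} overhang across all $B_j$ to be $O(H\,\dim_E)$, but summing over episodes only gives overhang $\le (H-1)\cdot(\text{number of episodes with a widthful pair})$, which is not controlled by $H\,\dim_E$.

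The paper's fix is to put the $H$-slack on the other side of the argument. It uses the \emph{full} current $\mathcal{L}_j$ (not $\mathcal{L}_j\cap\pairs^k$) in the independence test, so each $\mathcal{L}_j$ is a genuine eluder sequence and $|\mathcal{L}_j|\le\dim_E$ directly. It then fixes $N=L/\dim_E-1$ bins in advance; by pigeonhole some element $z=(s_h^k,a_h^k)$ is $\varepsilon$-dependent on all $N$ bins. The same-episode issue is handled at this stage: at most $H-1$ of the bins can contain an element from episode $k$, so at least $N-(H-1)=L/\dim_E-H$ bins are disjoint subsets of $\pairs^k$, and summing $\|f-f'\|^2$ over those bins gives $(L/\dim_E-H)\varepsilon^2\le\beta^\star$. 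This is where the additive $H$ (rather than multiplicative) enters.
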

\begin{proof}
Let $\cE$ be the event that or all $(k,h)\in [K]\times[H]$, 
\[
b^k_h(\cdot, \cdot) \le w(\overline{\funclass}^k_h, \cdot, \cdot)
\]
where 
\[\overline{\funclass}^k_h = \{f \in \funclass : \|f-f_h^k\|_{\pairs^k}^2  \le 9\beta + 12\}.\]
By Proposition~\ref{prop:wbound}, $\cE$ holds with probability at least $1 - \delta /4$.
In the rest of the proof, we condition on $\cE$.

Let $\mathcal{L} = \{(s_h^k, a_h^k) \mid b_h^k(s_h^k, a_h^k) > \varepsilon\}$ with $|\mathcal{L}| = L$.
We show that there exists $(s_h^k, a_h^k) \in \mathcal{L}$ such that
$(s_h^k, a_h^k)$ is $\varepsilon$-dependent on at least $L / \dim_E(\funclass, \varepsilon) - H$ disjoint subsequences in $\pairs^k \cap \mathcal{L}$.
We demonstrate this by using the following procedure.
Let $\mathcal{L}_1, \mathcal{L}_2, \ldots, \mathcal{L}_{L / \dim_E(\funclass, \varepsilon) - 1}$ be $L/ \dim_E(\funclass, \varepsilon) - 1$ disjoint subsequences of $\mathcal{L}$ which are initially empty.
We consider \[\{(s_1^k, a_1^k), (s_2^k, a_2^k), \ldots, (s_H^k, a_H^k)\} \cap \mathcal{L}\] for each $k \in [K]$ sequentially. 
For each $k \in [K]$, for each $z \in \{(s_1^k, a_1^k), (s_2^k, a_2^k), \ldots, (s_H^k, a_H^k)\} \cap \mathcal{L}$, we find $j \in [L/ \dim_E(\funclass, \varepsilon) - 1]$ such that $z$ is $\varepsilon$-independent of $\mathcal{L}_j$ and then add $z$ into $\mathcal{L}_j$. 
By the definition of $\varepsilon$-independence, $|\mathcal{L}_j| \le \dim_E(\funclass, \varepsilon)$ for all $j$ and thus we will eventually find some $(s_h^k, a_h^k) \in \mathcal{L}$ such that $(s_h^k, a_h^k)$ is $\varepsilon$-dependent on each of $\mathcal{L}_1, \mathcal{L}_2, \ldots, \mathcal{L}_{L / \dim_E(\funclass, \varepsilon) - 1}$.
Among $\mathcal{L}_1, \mathcal{L}_2, \ldots, \mathcal{L}_{L / \dim_E(\funclass, \varepsilon) - 1}$, there are at most $H - 1$ of them that contain an element in \[\{(s_1^k, a_1^k), (s_2^k, a_2^k), \ldots, (s_H^k, a_H^k)\} \cap \mathcal{L},\] and all other subsequences only contain elements in $\pairs^{k} \cap \mathcal{L}$.
Therefore, $(s_h^k, a_h^k)$ is $\varepsilon$-dependent on at least $L / \dim_E(\funclass, \varepsilon) - H$ disjoint subsequences in $\pairs^{k} \cap \mathcal{L}$.

On the other hand, since $(s_h^k, a_h^k) \in \mathcal{L}$, we have $ b_h^k(s_h^k, a_h^k) > \varepsilon$, which implies there exists $f, f' \in \funclass$ with $\|f-f_h^k\|_{\pairs^k}^2  \le 9\beta + 12$ and $\|f'-f_h^k\|_{\pairs^k}^2  \le 9\beta + 12$ such that $f(z) - f'(z) > \varepsilon$.
By triangle inequality, we have $\|f-f'\|_{\pairs^k}^2  \le 36\beta + 48$.
On the other hand, since $(s_h^k, a_h^k)$ is $\varepsilon$-dependent on at least $L / \dim_E(\funclass, \varepsilon) - H$ disjoint subsequences in $\pairs^{k} \cap \mathcal{L}$, we have
\[
(L / \dim_E(\funclass, \varepsilon) - H)\varepsilon^2 \le \|f-f\|_{\pairs^k}^2  \le 36\beta + 48,
\]
which implies \[L \le \left(\frac{36\beta + 48}{\varepsilon^2} + H\right) \dim_E(\funclass, \varepsilon).\]

\end{proof}
Lastly, we apply the above lemma to bound the overall regret.
\begin{lem}
	\label{lem:sum-b}
With probability at least $1-\delta/4$, 
\[
\sum_{k=1}^{K} \sum_{1}^{H}b^{k}_{h}(s_h^k,a_h^k\big) \le 1 + 4H^2 \ds_{E}(\cF, 1/T)  +  \sqrt{c \cdot \ds_{E}(\cF, 1/T) \cdot T\cdot \beta(\funclass, \delta)},
\]
for some absolute constant $c>0$.
Here $\beta(\funclass, \delta)$ is as defined in \eqref{eqn:beta}.
\end{lem}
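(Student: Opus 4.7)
The plan is to reduce the sum of bonuses to a per-threshold count using Lemma~\ref{lem:sum-num-wk} and then integrate the counts via a sort-and-layer-cake argument, mirroring the standard eluder-dimension summability argument.

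First I would condition on the event of Lemma~\ref{lem:sum-num-wk}, which holds with probability at least $1-\delta/4$; under this event, for every $\varepsilon>0$ the number of indices $(k,h)\in[K]\times[H]$ with $b_h^k(s_h^k,a_h^k)>\varepsilon$ is at most $(c\beta(\cF,\delta)/\varepsilon^2+H)\,\ds_E(\cF,\varepsilon)$. Then I would sort the $T=KH$ bonus values $b_h^k(s_h^k,a_h^k)$ in non-increasing order as $w_1\ge w_2\ge\ldots\ge w_T$. Since $b_h^k$ is a width function over a subset of $\cF$ and functions in $\cF$ take values in $[0,H+1]$, we have $w_t\le H+1$ for every $t$. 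Split the sum at the threshold $1/T$:
\[
\sum_{t=1}^T w_t \;\le\; \sum_{t:\,w_t\le 1/T} w_t \;+\; \sum_{t:\,w_t>1/T} w_t \;\le\; 1 \;+\; \sum_{t:\,w_t>1/T} w_t.
\]

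For the large-bonus part I set $d:=\ds_E(\cF,1/T)$; by monotonicity of the eluder dimension, $\ds_E(\cF,\varepsilon)\le d$ whenever $\varepsilon\ge 1/T$. Applying Lemma~\ref{lem:sum-num-wk} with $\varepsilon\uparrow w_t$ then gives $t\le(c\beta(\cF,\delta)/w_t^2+H)d$ for each $t$ in the large-bonus portion, which I rearrange, for $t>Hd$, to
\[
w_t \;\le\; \sqrt{\frac{c\,\beta(\cF,\delta)\,d}{t-Hd}}.
\]
Now split the large-bonus sum at $t=2Hd$: the first $2Hd$ terms contribute at most $2Hd(H+1)\le 4H^2 d$ (absorbing the additive $Hd$ into the $H^2 d$ factor up to constants), while the remaining terms are bounded using the integral estimate $\sum_{s=1}^{T} s^{-1/2}\le 2\sqrt{T}$:
\[
\sum_{t=2Hd+1}^{T}\sqrt{\frac{c\,\beta(\cF,\delta)\,d}{t-Hd}} \;\le\; 2\sqrt{c\,\beta(\cF,\delta)\,d\,T}.
\]
Adding the three pieces ($1$, $4H^2 d$, and $2\sqrt{c\,\beta\,d\,T}$) and adjusting the absolute constant $c$ yields the stated bound.

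The argument is essentially mechanical once Lemma~\ref{lem:sum-num-wk} is in hand; the only care needed is to invoke monotonicity of $\ds_E(\cF,\cdot)$ so that the same $d=\ds_E(\cF,1/T)$ appears both in the $H^2 d$ term and inside the square root, and to handle the small-bonus tail ($w_t\le 1/T$) separately to recover the additive $1$. I do not expect any serious obstacle beyond bookkeeping constants.
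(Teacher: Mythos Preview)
Your proposal is correct and follows essentially the same route as the paper: condition on Lemma~\ref{lem:sum-num-wk}, sort the bonuses, invert the count bound (using monotonicity of $\ds_E$ to replace $\ds_E(\cF,w_t)$ by $\ds_E(\cF,1/T)$), then split into a head of $O(H\cdot\ds_E)$ terms bounded trivially and a tail controlled by the integral $\sum_{s}s^{-1/2}\le 2\sqrt{T}$. The only cosmetic difference is that you split at $t=2Hd$ while the paper splits at $t=Hd$, which affects nothing beyond constants.
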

\begin{proof}
In the proof we condition on the event defined in Lemma~\ref{lem:sum-num-wk}.
We define $w_h^k := b^{k}_{h}\left( s_h^k,a_h^k\right)$.
Let $w_1\ge w_2\ge \ldots \ge w_T$ be a permutation of $\{w_h^k\}_{(k, h) \in [K] \times [H]}$.
By the event defined in Lemma~\ref{lem:sum-num-wk}, 
for any $w_t \ge 1 / T$, we have
\[
t \le \left(\frac{c\beta(\funclass, \delta)}{w_t^2} + H \right)\dim_E(\funclass, w_t) \le \left(\frac{c\beta(\funclass, \delta)}{w_t^2} + H \right)\dim_E(\funclass, 1/T),
\]
which implies
\[
w_t \le \left(\frac{t}{\dim_E(\funclass, 1/T)} - H\right)^{-1/2}\cdot \sqrt{c\beta(\funclass, \delta)}.
\]
Moreover, we have $w_t\le 4H$.
Therefore,
\begin{align*}
\sum_{t=1}^{T}w_t
\le &1 + 4H^2\dim_E(\funclass, 1/T) + \sum_{H\dim_E(\funclass, 1/T) < t \le T}
\left(\frac{t}{\dim_E(\funclass, 1/T)} - H\right)^{-1/2}\cdot \sqrt{c\beta(\funclass, \delta)}\\
\le& 
1 + 4H^2\dim_E(\funclass, 1/T)  +  2\sqrt{c \cdot \dim_E(\funclass, 1/T) \cdot T\cdot \beta(\funclass, \delta)}.
\end{align*}
\end{proof}
We are now ready to prove our main theorem.
\begin{proof}[Proof of Theorem~\ref{thm:main}]
By Lemma~\ref{lem:sum-reg} and Lemma~\ref{lem:sum-b}, 
with probability at least $1-\delta$,
\begin{align*}
\reg(K)&\le \min\left\{KH, \sum_{k=1}^{K}\sum_{h=1}^{H}  2b^{k}_{h}\big(s_h^k,a_h^k\big) + 4H\sqrt{KH\cdot\log(8/\delta)}\right\}\\
&\le 
c\cdot \min\left\{KH, \left(\ds_{E}(\cF, 1/T)\cdot H^2 +  \sqrt{\ds_{E}(\cF, 1/T)\cdot T\cdot \beta(\funclass, \delta)}
+H\sqrt{KH\cdot\log\delta^{-1}}\right)\right\}
\end{align*}
for some absolute constants $c>0$. 
Substituting the value of $\beta(\funclass, \delta)$ completes the proof.
\end{proof}
 \section{Model Misspecification}\label{sec:misspecify}
In this section, we study the case when there is a misspecification error.
Formally, we consider the following assumption.

\begin{assum}\label{assum:miss}
	There exists a set of functions $\funclass\subseteq\{f:\cS\times\cA\rightarrow [0, H + 1]\}$ and a real number $\zeta>0$, such that for any $V:\cS\rightarrow [0, H]$, there exists $f_V \in \funclass$ which satisfies
	\[
	\max_{(s,a)\in \cS\times \cA}\left|f_V(s, a) - r(s, a) + \sum_{s' \in \states}P(s' \mid s, a)V(s')\right| \le \zeta.
	\]
	We call $\zeta$ the \emph{misspecification error}.
\end{assum} 
Our algorithm for the misspecification case is identical the original algorithm except for the change of $\beta(\cF, \delta)$.
In particular, we change the definition of $\beta(\cF,\delta)$ (defined in \eqref{eqn:beta}) as follows.
\begin{align}
\label{eqn:beta-mis}
\beta(\cF, \delta) = & c'\left(H^2\cdot\log^2\left(\frac{T}{\delta}\right)\cdot
\ds_E\left(\funclass, \frac{\delta}{T^3}\right)\cdot  \ln\left(\frac{\coversize\left(\funclass, \frac{\delta}{T^2}\right)}{ \delta} \right)
\cdot \log\left(\frac{\coversize\left(\states \times \actions, \frac{\delta}{T}\right) \cdot T} { \delta}\right) + \zeta H T \right)
\end{align}
for some absolute constant $c'>0$.
With this, we can now reprove Lemma~\ref{lem:single-step-opt} in the misspecified case.
\begin{lem}[Misspecified Single Step Optimization Error]
	\label{lem:single-step-opt-mis}
	Suppose $\cF$ satisfies Assumption~\ref{assum:miss}.
	Consider a fixed $k\in[K]$.
	Let \[\pairs^k = \left\{\left(s^{\tau}_{h'}, a^{\tau}_{h'}\right)\right\}_{(\tau, h') \in [k-1] \times [H]}\] as defined in Line~\ref{line:state-action-pair} in Algorithm~\ref{alg:main}.
	For any $V:\cS\rightarrow [0,H]$,
	define \[\cD^{k}_V:= \left\{\left(s^{\tau}_{h'}, a^{\tau}_{h'}, r_{h'}^\tau + V(s_{h'+1}^\tau)\right)\right\}_{(\tau, h') \in [k - 1] \times [H]}\]
	and
	\[
	\wh{f}_{V}:=\arg\min_{f\in \cF}\|f\|_{\cD^k_V}^2. \] 
	For any $V:\cS\rightarrow [0,H]$ and $\delta\in(0,1)$, there is an event $\cE_{V,\delta}$ which holds with probability at least $1 - \delta$, 
	such that conditioned on $\cE_{V,\delta}$, 
	for any $V':\cS\rightarrow[0,H]$ with $ \|V'-V\|_{\infty}\le 1 / T$, we have
	\[
	\left\|\wh{f}_{V'}(\cdot, \cdot) - r(\cdot, \cdot) - \sum_{s' \in \states}P(s' \mid \cdot, \cdot) V'(s') \right\|_{\pairs^k} \le c'\cdot \left(H\sqrt{\log(2/\delta) +  \log\coversize(\cF, 1/T)+TH\zeta}
	\right)
	\]
	for some absolute constant $c'>0$.
\end{lem}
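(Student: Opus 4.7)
The plan is to mirror the structure of the proof of Lemma~\ref{lem:single-step-opt}, but replace the exact Bellman backup $g_V(\cdot,\cdot) := r(\cdot,\cdot) + \sum_{s'} P(s' \mid \cdot,\cdot) V(s')$ (which may no longer lie in $\cF$) with an in-class surrogate $\wt f_V \in \cF$ that Assumption~\ref{assum:miss} guarantees satisfies $\|\wt f_V - g_V\|_\infty \le \zeta$. The surrogate will serve as a proxy ``ground truth'' in the class, while the residual $\wt f_V - g_V$ is treated as a deterministic bias of magnitude at most $\zeta$.

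First I would set up the martingale exactly as before, but carefully chosen so as to remain mean-zero despite the misspecification:
\[
\xi_h^\tau(f) := 2\bigl(f(s_h^\tau, a_h^\tau) - \wt f_V(s_h^\tau, a_h^\tau)\bigr)\bigl(g_V(s_h^\tau, a_h^\tau) - r_h^\tau - V(s_{h+1}^\tau)\bigr).
\]
The key point is that the noise factor uses the \emph{true} Bellman backup $g_V$, so $\EE[\xi_h^\tau(f)\mid\filt_h^\tau]=0$ and $|\xi_h^\tau(f)|\le 2(H+1)|f-\wt f_V|$. Azuma--Hoeffding plus a union bound over $\cover(\cF,1/T)$ defines the event $\cE_{V,\delta}$, and an extra triangle-inequality step extends the bound to all $f\in\cF$, exactly as in the original proof.

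Next I would use optimality of $\wh f_{V'}$: since $\wt f_{V'}\in\cF$, we have $\|\wh f_{V'}\|_{\cD^k_{V'}}^2 \le \|\wt f_{V'}\|_{\cD^k_{V'}}^2$. Expanding and letting $d := \wh f_{V'} - \wt f_{V'}$, this gives
\[
\|d\|_{\pairs^k}^2 \le -2\sum_{(\tau,h')\in[k-1]\times[H]} d(s_{h'}^\tau,a_{h'}^\tau)\cdot\bigl(\wt f_{V'}(s_{h'}^\tau,a_{h'}^\tau) - r_{h'}^\tau - V'(s_{h'+1}^\tau)\bigr).
\]
Then I would decompose the parenthesized term as $(\wt f_{V'} - g_{V'}) + (g_{V'} - r - V'(s_{h'+1}^\tau))$: the first bracket is a pointwise bias bounded by $\zeta$, contributing at most $2\zeta(H+1)T$ to the sum (using $|d|\le H+1$); the second bracket is the martingale noise. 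Swapping $V'$ for $V$ (using $\|V'-V\|_\infty\le 1/T$) and $\wt f_{V'}$ for $\wt f_V$ (using $\|\wt f_V - \wt f_{V'}\|_\infty \le 2\zeta + 1/T$, obtained by triangle inequality through $g_V,g_{V'}$) reduces the noise sum to $\sum\xi_h^\tau(\wh f_{V'})$ plus cross terms of total size $O(HT\zeta + H)$. The Azuma bound from Step~1 then dominates $\sum\xi_h^\tau(\wh f_{V'})$ by $C(H+1)(\|d\|_{\pairs^k} + \zeta\sqrt T + 1)\sqrt{A}$, where $A := \log(2/\delta) + \log\coversize(\cF,1/T)$.

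Collecting everything yields a quadratic inequality of the form
\[
\|d\|_{\pairs^k}^2 \le C_1 H\,\|d\|_{\pairs^k}\sqrt{A} + C_2 HT\zeta + C_3 H,
\]
and the AM-GM step $C_1 H\|d\|_{\pairs^k}\sqrt{A}\le \tfrac12\|d\|_{\pairs^k}^2 + C_1^2 H^2 A$ gives $\|d\|_{\pairs^k}\le O(H\sqrt{A} + \sqrt{HT\zeta})$. Since $H\ge 1$, this is in turn bounded by $O\bigl(H\sqrt{A + TH\zeta}\bigr)$. A final triangle inequality $\|\wh f_{V'} - g_{V'}\|_{\pairs^k} \le \|d\|_{\pairs^k} + \|\wt f_{V'} - g_{V'}\|_{\pairs^k}\le \|d\|_{\pairs^k} + \zeta\sqrt T$ produces the claimed bound, the additive $\zeta\sqrt T$ being absorbed into $\sqrt{TH\zeta}$.

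The main obstacle will be the careful accounting in Step~3: several cross terms arise when passing from $(\wt f_{V'},V')$ to $(\wt f_V,V)$, and each must be bounded at the right scale so that the combined contribution stays $O(HT\zeta)$ rather than inflating to something like $\zeta^2 T H^2 A$. Once these cross terms are controlled, the rest of the argument proceeds in direct analogy with Lemma~\ref{lem:single-step-opt}.
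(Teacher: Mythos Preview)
Your plan is correct and will yield the stated bound (modulo the harmless assumption $\zeta \le H$, needed to absorb the final $\zeta\sqrt{T}$ into $\sqrt{TH\zeta}$; without it the bound is vacuous anyway). However, the paper takes a somewhat more direct route. Rather than introducing an in-class surrogate $\wt f_V$ into the martingale, the paper keeps the true Bellman backup $f_V := g_V$ (possibly outside $\cF$) as the reference throughout: the martingale $\xi_h^\tau(f) = 2(f - f_V)(f_V - r - V)$ and the expansion of $\|\wh f_{V'}\|_{\cD^k_{V'}}^2 - \|f_{V'}\|_{\cD^k_{V'}}^2$ are exactly as in Lemma~\ref{lem:single-step-opt}. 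The only new issue is that $\|\wh f_{V'}\|_{\cD^k_{V'}}^2 - \|f_{V'}\|_{\cD^k_{V'}}^2 \le 0$ can fail because $f_{V'} \notin \cF$. The paper handles this by observing $\|\wh f_{V'}\|_{\cD^k_{V'}} \le \|f_{V'}\|_{\cD^k_{V'}} + \sqrt{T}\zeta$ (via the best in-class approximator) and $\|\wh f_{V'}\|_{\cD^k_{V'}} + \|f_{V'}\|_{\cD^k_{V'}} \le 4H\sqrt{T}$, which together give $\|\wh f_{V'}\|_{\cD^k_{V'}}^2 - \|f_{V'}\|_{\cD^k_{V'}}^2 \le 4HT\zeta$. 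This single inequality replaces your bias decomposition, surrogate swap $\wt f_{V'}\to\wt f_V$, and final triangle inequality, and directly produces the quadratic in $\|\wh f_{V'} - f_{V'}\|_{\pairs^k}$ with one extra $4HT\zeta$ term. The advantage of the paper's route is precisely that it sidesteps the cross-term accounting you flagged as the main obstacle; the advantage of yours is that the optimality comparison and the quadratic involve only in-class functions, which can be conceptually cleaner when $\cF$ carries additional structure.
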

\begin{proof}
	In our proof, we consider a fixed $V:\cS\to[0,H]$, and define
\[
	f_V(\cdot,\cdot)  := r(\cdot, \cdot)+\sum_{s'\in\cS}P(s'\mid\cdot, \cdot)V(s').\]
	Note that unlike Lemma~\ref{lem:single-step-opt}, we may have $f_V\not\in \cF$.
By Assumption~\ref{assum:miss}, we immediately have
	\[
	\min_{f\in \cF}\|f-f_V\|_{\pairs^k}^2 \le |\pairs^k|\zeta^2\le T\zeta^2.
	\]
	For any $f \in \funclass$, we consider $\sum_{(\tau, h) \in [k-1] \times [H]}\xi_{h}^{\tau}(f)$ where 
	\[\xi_{h}^\tau(f) := 2(f(s_{h}^\tau, a^\tau_{h}) - f_V(s_{h}^\tau, a^\tau_{h}))\cdot (f_V(s_{h}^\tau, a^\tau_{h}) -r_{h}^\tau- V(s_{h+1}^\tau)).\]
	Similar to  Lemma~\ref{lem:single-step-opt}, we still have,
with probability at least $1-\delta$, 
	for all $f\in \cover(\funclass, 1 / T)$,
	\[
	\left|\sum_{(\tau, h) \in [k - 1] \times [H]} \xi^\tau_{h}(f)\right|
	\le 4(H + 1)\|f-f_V\|_{\pairs^k}\cdot  \sqrt{\log(2/\delta) +  \log\coversize(\cF, 1/T)}.
	\]
	We define the above event to be $\cE_{V, \delta}$, and we condition on this event for the rest of the proof.
	Similarly, we have, for all $f\in \cF$, 
	\begin{align*}
	\left|\sum_{(\tau, h) \in [k - 1] \times [H]} \xi^\tau_{h}(f)\right|
	\le 4(H + 1)(\|f-f_V\|_{\pairs^k} + 1)\cdot  \sqrt{\log(2/\delta) +  \log\coversize(\cF, 1/T)} + 2(H + 1)
	.
	\end{align*}
	Consider $V':\cS\rightarrow [0,H]$ with $\|V'-V\|_{\infty }\le 1/T$. 
	We still have
	\[
	\|{f_{V'}} - f_V\|_{\infty}
\le 
	\|V'-V\|_{\infty} \le 1/T.
	\]
	Again by the same argument as in the proof of Lemma~\ref{lem:single-step-opt}, we have
	for any $f \in \funclass$, 
	\begin{align*}
	\|f\|_{\cD^k_{V'}}^2 -\|f_{V'}\|_{\cD^k_{V'}}^2
	&\ge\|f-f_{V'}\|_{\pairs^k}^2-4(H + 1)(\|f-f_{V'}\|_{\pairs^k}+2) \cdot  \sqrt{\log(2/\delta) +  \log\coversize(\cF, 1/T)} - 6(H + 1).
	\end{align*}
Let $\wt{f}_{V'}=\arg\min_{f\in \cF}\|f-f_{V'}\|_{\pairs^k}^2$.
	Recall that $\wh{f}_{V'} =\arg\min_{f\in \cF}\|f\|_{\cD^k_{V'}}^2$.
	We have \[
	\|\wh{f}_{V'}\|_{\cD^k_{V'}} \le  \|\wt{f}_{V'}\|_{\cD^k_{V'}}
	\le \|{f}_{V'}\|_{\cD^k_{V'}}
	 + \|\wt{f}_{V'} - f_{V'}\|_{\pairs^k}
	 \le 
	 \|{f}_{V'}\|_{\cD^k_{V'}} + \sqrt{T}\zeta,
	\]
	 which implies, 
	\begin{align*}
	&\Big(\|\wh{f}_{V'}\|_{\cD^k_{V'}} + \|f_{V'}\|_{\cD^k_{V'}}\Big)\cdot \sqrt{T}\zeta\ge \|\wh{f}_{V'}\|_{\cD^k_{V'}}^2 - \|f_{V'}\|_{\cD^k_{V'}}^2\\
	=&\|\wh{f}_{V'}-f_{V'}\|_{\pairs^k}^2 + 2\sum_{(s^{\tau}_{h'}, a^{\tau}_{h'})\in \pairs^k}(\wh{f}(s^{\tau}_{h'}, a^{\tau}_{h'}) - {f_{V'}}(s^{\tau}_{h'}, a^{\tau}_{h'}))\cdot ({f_{V'}}(s^{\tau}_{h'}, a^{\tau}_{h'}) -  r_{h'}^{\tau}- V'(s_{h'+1}^\tau))\\
	\ge& 
	\|\wh{f}_{V'}-f_{V'}\|_{\pairs^k}^2 -4 ( H + 1)(\|\wh{f}_{V'}-f_{V'}\|_{\pairs^k}+2)\cdot  \sqrt{\log(2/\delta) +  \log\coversize(\cF, 1/T)} - 6(H + 1).
	\end{align*}
	Since $\|\wh{f}_{V'}\|_{\cD^k_{V'}} + \|f_{V'}\|_{\cD^k_{V'}}\le 4H\sqrt{T}$, 
	solving the above inequality, we have,
	\[
	\|\wh{f}_{V'}-f_{V'}\|_{\pairs^k}
\le c'\cdot \sqrt{H^2\left(\log\delta^{-1} + \log\coversize(\cF, 1/T) \right)+ TH\zeta}
	\]
	for an absolute constant $c'>0$.
\end{proof}
Similar to Lemma~\ref{lem:confidence-ball}, we have the following lemma. 

\begin{lem}[Misspecified Confidence Region]
	\label{lem:confidence-ball-miss}
	Suppose $\cF$ satisfies Assumption~\ref{assum:miss}.
	In Algorithm~\ref{alg:main}, let $\cF_h^k$ be a confidence region defined as
\[
	\cF^{k}_h=\Big\{f\in \cF \mid  \|f-f_h^k\|_{\pairs^k}^2 \le \beta(\cF, \delta)\Big\}.
	\]
	Then with probability at least $1-\delta/8$, for all $k, h\in [K]\times [H]$, 
	\[
	r(\cdot, \cdot)+\sum_{s'\in \cS}P(s'\mid \cdot, \cdot) V_{h+1}^k(s')\in \cF_{h}^k,
	\]
	provided
	\[
	\beta(\cF, \delta) 
	\ge c'\cdot \left(H^2\left(\log(T/\delta) + \log(|\cW|) + \log\coversize(\cF, 1/T)\right) + TH\zeta\right)
	\]
	for some absolute constant $c'>0$.
	Here $\cW$ is given as in Propostion~\ref{prop:wbound}.
\end{lem}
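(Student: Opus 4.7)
The plan is to follow the proof of Lemma~\ref{lem:confidence-ball} essentially verbatim, replacing the application of Lemma~\ref{lem:single-step-opt} with its misspecified counterpart, Lemma~\ref{lem:single-step-opt-mis}. The only conceptual change is that each per-$V$ fitting error now carries an additional $TH\zeta$ term that must be absorbed into the new definition of $\beta(\cF,\delta)$ given in~\eqref{eqn:beta-mis}.

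Concretely, first I would fix $(k,h)\in[K]\times[H]$ and construct the same $(1/T)$-cover of the candidate value functions $V_{h+1}^k$ as in the proof of Lemma~\ref{lem:confidence-ball}: take
\[
\mathcal{Q} := \{\min\{f(\cdot,\cdot)+w(\cdot,\cdot),H\}\mid w\in\cW,\ f\in\cover(\cF,1/T)\}\cup\{0\}
\]
and $\cV := \{\max_{a\in\cA} q(\cdot,a) \mid q\in\mathcal{Q}\}$, so that $\log|\cV|\le \log|\cW|+\log\coversize(\cF,1/T)+1$ and for every realized $V_{h+1}^k$ there is some $V\in\cV$ with $\|V-V_{h+1}^k\|_{\infty}\le 1/T$. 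Note this step depends only on the complexity bound for $\cW$ from Proposition~\ref{prop:wbound}, which is unchanged under misspecification.

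Next, for each $V\in\cV$ I would invoke Lemma~\ref{lem:single-step-opt-mis} with failure probability $\delta/(8|\cV|T)$, and apply a union bound over $V\in\cV$ and over $(k,h)\in[K]\times[H]$; conditioning on the resulting intersection event $\bigcap_{V\in\cV}\cE_{V,\delta/(8|\cV|T)}$, which has probability at least $1-\delta/8$, gives for the specific $V\in\cV$ within $1/T$ of $V_{h+1}^k$ the bound
\[
\bigl\|f_h^k(\cdot,\cdot)-r(\cdot,\cdot)-\sum_{s'\in\cS}P(s'\mid\cdot,\cdot)V_{h+1}^k(s')\bigr\|_{\pairs^k}^2 \le c'\bigl(H^2(\log(T/\delta)+\log|\cV|+\log\coversize(\cF,1/T))+TH\zeta\bigr).
\]
By choice of $\beta(\cF,\delta)$ as in the statement, the right-hand side is at most $\beta(\cF,\delta)$, so $r(\cdot,\cdot)+\sum_{s'} P(s'\mid\cdot,\cdot)V_{h+1}^k(s')\in\cF_h^k$ for all $(k,h)$ simultaneously.

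The only potential subtlety, and the place where the argument genuinely differs from the well-specified case, is that $f_V(\cdot,\cdot):=r(\cdot,\cdot)+\sum_{s'} P(s'\mid\cdot,\cdot)V(s')$ need not lie in $\cF$; this is exactly the issue already resolved inside Lemma~\ref{lem:single-step-opt-mis} by comparing $\wh f_{V'}$ to the best in-class approximation $\wt f_{V'}$, which costs an extra $\sqrt{T}\zeta$ term that, after solving the quadratic, contributes the $TH\zeta$ summand. I do not anticipate any other obstacle: the cover construction, union bound accounting, and substitution into $\beta(\cF,\delta)$ are mechanical once the misspecified single-step lemma is in hand.
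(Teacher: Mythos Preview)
Your proposal is correct and takes exactly the approach the paper intends: the paper's own proof of Lemma~\ref{lem:confidence-ball-miss} is the single sentence ``The proof is nearly identical to that of Lemma~\ref{lem:confidence-ball},'' and you have faithfully reconstructed that argument, swapping Lemma~\ref{lem:single-step-opt} for Lemma~\ref{lem:single-step-opt-mis} and absorbing the resulting $TH\zeta$ term into $\beta(\cF,\delta)$.
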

\begin{proof}
The proof is nearly identical to that of Lemma~\ref{lem:confidence-ball}.
\end{proof}
Combining Lemma~\ref{lem:confidence-ball-miss} with Lemma~\ref{lem:Q_bound}--\ref{lem:sum-b}, we obtain the following theorem.
\begin{thm}
	\label{thm:main-miss}
	Under Assumption~\ref{assum:miss}, after interacting with the environment for $T=KH$ steps, with probability at least $1-\delta$, Algorithm~\ref{alg:main} achieves a regret bound of
	\[
	\reg(K)
	\le 
\sqrt{\iota\cdot H^2 \cdot T}+ \sqrt{\ds_E\left(\funclass, 1 / T\right)\cdot H\cdot \zeta}\cdot T,
	\]
	where
	\[
	\iota
	\le  C\cdot \log^2\left(\frac{T}{\delta}\right)\cdot
	\ds_E^2\left(\funclass, \frac{\delta}{T^3}\right)\cdot  \ln \left( \frac{\coversize\left(\funclass, \frac{\delta} {T^2}\right)}{ \delta} \right)\cdot \log\left(\frac{\coversize\left(\states \times \actions, \frac{\delta} {T}\right) \cdot T}{ \delta}\right)
	\]
	for some absolute constants $C>0$.
\end{thm}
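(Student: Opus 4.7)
The plan is to carry over the proof of Theorem~\ref{thm:main} almost verbatim, substituting the misspecified confidence region Lemma~\ref{lem:confidence-ball-miss} in place of Lemma~\ref{lem:confidence-ball} and using the updated value $\beta(\cF,\delta)$ from~\eqref{eqn:beta-mis}. First I would verify that Lemma~\ref{lem:Q_bound} still applies in this setting. Its proof uses only (i) the fact that the one-step Bellman backup $r(\cdot,\cdot)+\sum_{s'}P(s'\mid\cdot,\cdot)V_{h+1}^k(s')$ lies in $\cF_h^k$ with high probability, now supplied by Lemma~\ref{lem:confidence-ball-miss}, and (ii) the fact that the bonus $b_h^k$ upper bounds $w(\cF_h^k,\cdot,\cdot)$, which comes from Proposition~\ref{prop:wbound} and is insensitive to Assumption~\ref{assum:elude} versus Assumption~\ref{assum:miss}. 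Consequently both the optimism $Q_h^*\le Q_h^k$ and the upper bound $Q_h^k(s,a)\le r(s,a)+\sum_{s'}P(s'\mid s,a)V_{h+1}^k(s')+2b_h^k(s,a)$ go through unchanged.

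Next, I would note that Lemma~\ref{lem:sum-reg} is a purely probabilistic regret decomposition using Azuma--Hoeffding on the martingale $\{\xi_h^k\}$ and the optimism guarantee from Lemma~\ref{lem:Q_bound}, so it applies as stated. Lemmas~\ref{lem:sum-num-wk} and~\ref{lem:sum-b} then bound $\sum_{k,h}b_h^k(s_h^k,a_h^k)$ by the combinatorial eluder-dimension argument of~\citet{russo2013eluder}, using only the size of the confidence region (now captured by the new $\beta(\cF,\delta)$) and Proposition~\ref{prop:wbound}; both apply verbatim after replacing $\beta$ by its misspecified value. Chaining these yields
\[
\reg(K)\;\le\;2\sum_{k,h}b_h^k(s_h^k,a_h^k)+4H\sqrt{KH\log(8/\delta)}\;\le\;C'\Bigl(H^2\ds_E(\cF,1/T)+\sqrt{\ds_E(\cF,1/T)\cdot T\cdot\beta(\cF,\delta)}+H\sqrt{KH\log(1/\delta)}\Bigr)
\]
for some absolute constant $C'>0$.

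The only new ingredient relative to the well-specified case is the additive $\zeta H T$ term inside $\beta(\cF,\delta)$. Plugging this term into $\sqrt{\ds_E(\cF,1/T)\cdot T\cdot\beta(\cF,\delta)}$ yields a contribution of $\sqrt{\ds_E(\cF,1/T)\cdot T\cdot\zeta HT}=\sqrt{\ds_E(\cF,1/T)\cdot H\zeta}\cdot T$, which is exactly the second summand in the claimed regret bound. The first summand $\sqrt{\iota\cdot H^2\cdot T}$ arises from the ``clean'' part of $\beta(\cF,\delta)$ in precisely the same way as in the proof of Theorem~\ref{thm:main}, after substituting the bounds on $\ds_E$, $\coversize(\cF,\cdot)$ and $\coversize(\states\times\actions,\cdot)$.

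The main obstacle here is essentially already absorbed: the genuinely new technical content lives in Lemma~\ref{lem:single-step-opt-mis}, where the least-squares martingale argument had to accommodate $f_V\notin\cF$ by comparing $\|\wh f_{V'}\|_{\cD_{V'}^k}$ with $\|\wt f_{V'}\|_{\cD_{V'}^k}$ for the best in-class approximant $\wt f_{V'}=\argmin_{f\in\cF}\|f-f_{V'}\|_{\pairs^k}$, incurring a cross-term of order $\sqrt{T}\,\zeta\cdot(\|\wh f_{V'}\|_{\cD_{V'}^k}+\|f_{V'}\|_{\cD_{V'}^k})$; this is what produces the $TH\zeta$ inside the radius. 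The one subtlety to check is that this term is additive in $\beta(\cF,\delta)$ rather than multiplicative, so that it neither inflates the log-covering number controlling $\cW$ nor disrupts the eluder-dimension potential argument of Lemma~\ref{lem:sum-num-wk}, which only uses $\beta(\cF,\delta)$ as a scalar budget. Once this is confirmed, the regret bound follows by direct substitution.
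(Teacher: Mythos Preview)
Your proposal is correct and follows essentially the same route as the paper: the paper's proof is literally the one-line remark ``Combining Lemma~\ref{lem:confidence-ball-miss} with Lemma~\ref{lem:Q_bound}--\ref{lem:sum-b}, we obtain the following theorem,'' and you have unpacked exactly this chain, correctly identifying that the only change is the additive $\zeta H T$ term in $\beta(\cF,\delta)$ and that splitting $\sqrt{\dim_E(\cF,1/T)\cdot T\cdot\beta(\cF,\delta)}$ via $\sqrt{a+b}\le\sqrt{a}+\sqrt{b}$ produces the two summands in the claimed bound.
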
 \section{Conclusion}
In this paper, we give the first provably efficient value-based RL algorithm with general function approximation. 
Our algorithm achieves a regret bound of $\widetilde{O}(\mathrm{poly}(dH)\sqrt{T})$ where $d$ is a complexity measure that depends on the eluder dimension and log-covering numbers of the function class.
One interesting future direction is to extend our results to policy-based methods, by combining our techniques with, e.g., those in~\citep{cai2019provably}.

\section*{Acknowledgments}
The authors would like to thank Jiantao Jiao, Sham M. Kakade and Csaba Szepesv\'{a}ri for insightful comments on an earlier version of this paper.
RW and RS were supported in part by NSF IIS1763562,
AFRL CogDeCON FA875018C0014, and DARPA SAGAMORE HR00111990016.
\bibliographystyle{abbrvnat}

\newpage
\appendix
\section{Estimating the Sensitivity}\label{sec:estimate}
In this section, we present a computationally efficient algorithm to estimate the $\lambda$-sensitivity of all state-action pairs in a give set $\pairs \subseteq \states \times \actions$ with respect to a given function class $\funclass$. 
The algorithm is formally described in Algorithm~\ref{alg:estimate}. 
\begin{algorithm}[htb!]
	\caption{\texttt{Estimate}($\cF$, $\cZ$, $\lambda$)\label{alg:estimate}}
	\begin{algorithmic}[1]
	\State \textbf{Input}: function class $\cF$, set of state-action pairs $\cZ \subseteq \cS\times\cA$, accuracy parameter $\lambda$
	\State Initialize $\mathsf{sensitivity}^{\mathrm{est}}_{\pairs, \funclass, \lambda}(z) \gets 0$ for all $z \in \pairs$
	\For{$\alpha \in \{0, 1, \ldots, \log((H + 1)^2|\pairs|/\lambda) - 1\}$}
	\State Set $N_{\alpha} \gets |\pairs| / \ds_E(\funclass, (H + 1)^2 \cdot 2^{-\alpha - 1})$
	\State Initialize $\pairs^{\alpha}_j \gets \{\}$ for each $j \in [N_{\alpha}]$
	\For{$z \in \pairs$}
	\If{$z$ is dependent on $\pairs_j^{\alpha}$ for all $j \in [N_{\alpha}]$}
	\State $j^{\alpha}(z) \gets N_{\alpha} + 1$
	\Else
	\State $j^{\alpha}(z) \gets \min\{j \in [N_{\alpha}] \mid \text{$z$ is independent of $\pairs_j^{\alpha}\}$}$
	\EndIf
	\State $\mathsf{sensitivity}^{\alpha}_{\pairs, \funclass, \lambda}(z) \gets  \frac{2}{j^{\alpha}(z)}$
	\EndFor
	\EndFor
	\For{$z \in \pairs$}
	\State $\mathsf{sensitivity}^{\mathrm{est}}_{\pairs, \funclass, \lambda}(z) \gets \frac{1}{|\pairs|} + \sum_{0 \le \alpha < \log((H + 1)^2|\pairs|/\lambda)} \mathsf{sensitivity}^{\alpha}_{\pairs, \funclass, \lambda}(z)$
	\EndFor
	\State \Return $\{\mathsf{sensitivity}^{\mathrm{est}}_{\pairs, \funclass, \lambda}(z)\}_{z \in \pairs}$
	\end{algorithmic}
\end{algorithm}

Given a function class $\funclass$, a set of state-action pairs $\pairs$ and an accuracy parameter $\lambda$, Algorithm~\ref{alg:estimate} returns an estimate of the $\lambda$-sensitivity for each $z \in \pairs$. 
With Algorithm~\ref{alg:estimate}, we can now implement Algorithm~\ref{alg:sampling} computationally efficiently by replacing~\eqref{eqn:sensitivity} in Algorithm~\ref{alg:sampling} with 
\[
	p_z \ge \min\{1, \sen_{\pairs, \funclass, \lambda}^{\mathrm{est}}(z) \cdot 72\ln (4 \coversize(\funclass, \varepsilon / 72 \cdot \sqrt{\lambda  \delta / (|\pairs|)})/ \delta ) / \varepsilon^2\}
\]
where for each $z \in \pairs$, $\sen_{\pairs, \funclass, \lambda}^{\mathrm{est}}(z)$ is the estimated $\lambda$-sensitivity returned by Algorithm~\ref{alg:estimate}. 
According to the analysis in Section~\ref{sec:analysis_bonus}, to prove the correctness of Algorithm~\ref{alg:sampling} after this modification, it suffices to prove that
\[\mathsf{sensitivity}^{\mathrm{est}}_{\pairs, \funclass, \lambda}(z) \ge \mathsf{sensitivity}_{\pairs, \funclass, \lambda}(z)\]
 for each $z \in \pairs$ and 
 \[\sum_{z \in \pairs} \mathsf{sensitivity}^{\mathrm{est}}_{\pairs, \funclass, \lambda}(z) \le 4 \ds_E(\funclass, \lambda / |\pairs|)\log((H + 1)^2|\pairs|/\lambda) \ln|\pairs|,\] which we prove in the remaining part of this section. 

\begin{lem}
For each $z \in \pairs$, $\mathsf{sensitivity}^{\mathrm{est}}_{\pairs, \funclass, \lambda}(z) \ge \mathsf{sensitivity}_{\pairs, \funclass, \lambda}(z)$.
\end{lem}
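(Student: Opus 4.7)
Fix $z \in \pairs$ and let $f,f' \in \funclass$ attain the maximum defining $\sen_{\pairs,\funclass,\lambda}(z)$, so $\|f-f'\|_{\pairs}^2 \ge \lambda$; set $L := (f(z)-f'(z))^2 \in [0,(H+1)^2]$. The plan is to bound $\sen_{\pairs,\funclass,\lambda}(z)$ by a single summand of $\mathsf{sensitivity}^{\mathrm{est}}_{\pairs,\funclass,\lambda}(z)$; since every summand is non-negative, this suffices. If the defining maximum is over an empty set of $(f,f')$, the lemma is vacuous.

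First I would handle the trivial range: if $L \le \lambda/|\pairs|$, then $\sen_{\pairs,\funclass,\lambda}(z) = L/\|f-f'\|_{\pairs}^2 \le L/\lambda \le 1/|\pairs|$, which is already absorbed by the additive $1/|\pairs|$ term built into the estimator. Otherwise $L > \lambda/|\pairs|$, and there is a unique $\alpha \in \{0,1,\ldots,\log((H+1)^2|\pairs|/\lambda)-1\}$ with $L \in ((H+1)^2 2^{-\alpha-1},\,(H+1)^2 2^{-\alpha}]$. For this $\alpha$ I will establish $\sen_{\pairs,\funclass,\lambda}(z) \le 2/j^{\alpha}(z) = \mathsf{sensitivity}^{\alpha}_{\pairs,\funclass,\lambda}(z)$, which is itself one summand of the estimator.

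The level-$\alpha$ bound follows the dyadic argument from Lemma~\ref{lem:sum_sensitivity}. By construction $j^{\alpha}(z)$ is the smallest index at which $z$ is independent of $\pairs^{\alpha}_{j}$ (or $N_{\alpha}+1$ if no such index exists), with the independence threshold taken to be $(H+1)\cdot 2^{-(\alpha+1)/2}$ so that the eluder-dimension bound underlying $N_{\alpha}$ applies. Hence $z$ is $(H+1)\cdot 2^{-(\alpha+1)/2}$-dependent on $\pairs^{\alpha}_j$ for every $j$ strictly below $\min(j^{\alpha}(z),N_{\alpha}+1)$. Since $|f(z)-f'(z)|^2 = L > (H+1)^2 2^{-\alpha-1}$, the contrapositive of $\varepsilon$-dependence forces $\|f-f'\|_{\pairs^{\alpha}_j}^2 > (H+1)^2 2^{-\alpha-1}$ for each such $j$. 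The buckets $\pairs^{\alpha}_j$ are disjoint subsets of $\pairs$, and by construction $z$ lies in none of the earlier ones, so
\[
\|f-f'\|_{\pairs}^2 \;\ge\; L \;+\; \sum_{j\, <\, \min(j^{\alpha}(z),\,N_{\alpha}+1)} \|f-f'\|_{\pairs^{\alpha}_j}^2 \;>\; j^{\alpha}(z)\cdot (H+1)^2 2^{-\alpha-1}.
\]
Combining with $L \le (H+1)^2 2^{-\alpha}$ yields $\sen_{\pairs,\funclass,\lambda}(z) < 2/j^{\alpha}(z)$, completing the argument.

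\textbf{Main obstacle.} The only delicate step is synchronizing the pseudocode's implicit conventions with this analysis: one must read the independence test in Algorithm~\ref{alg:estimate} at the threshold $(H+1)\cdot 2^{-(\alpha+1)/2}$ so that the eluder-dimension bound defining $N_{\alpha}$ is the right one, and one must interpret the classification step as inserting $z$ into $\pairs^{\alpha}_{j^{\alpha}(z)}$ whenever $j^{\alpha}(z) \le N_{\alpha}$ so that disjointness and the non-membership of $z$ in earlier buckets are valid. Once these conventions are fixed in line with Lemma~\ref{lem:sum_sensitivity}, the dyadic accounting above closes the inequality and no further work is needed.
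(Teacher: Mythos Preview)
Your proposal is correct and follows essentially the same approach as the paper: split into the trivial case $L \le \lambda/|\pairs|$ (handled by the additive $1/|\pairs|$ term) and the dyadic case, then for the appropriate level $\alpha$ use the contrapositive of $\varepsilon$-dependence on each of the first $j^{\alpha}(z)-1$ buckets to lower bound $\|f-f'\|_{\pairs}^2$ and conclude $\sen_{\pairs,\funclass,\lambda}(z) \le 2/j^{\alpha}(z)$. You are in fact more explicit than the paper about the implicit conventions (the independence threshold being $(H+1)\cdot 2^{-(\alpha+1)/2}$, the insertion of $z$ into $\pairs^{\alpha}_{j^{\alpha}(z)}$, and the non-membership of $z$ in earlier buckets), all of which are indeed needed and consistent with how Lemma~\ref{lem:sum_sensitivity} is argued.
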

\begin{proof}
In our proof we consider a fixed $z \in \pairs$.
Let $f, f' \in F$ be an arbitrary pair of functions such that $\|f - f'\|_{\pairs}^2 \ge \lambda$ and \[\frac{(f(z) - f'(z))^2}{\|f - f'\|_{\pairs}^2}\] is maximized and we define $L(z) = (f(z) - f'(z))^2$ for such $f$ and $f'$.
 If $L(z) \le \lambda / |\pairs|$, then we have $\mathsf{sensitivity}^{\mathrm{est}}_{\pairs, \funclass, \lambda}(z) \ge 1 / |\pairs| \ge \mathsf{sensitivity}_{\pairs, \funclass, \lambda}(z)$.
 Otherwise, there exists $0 \le \alpha < \log((H + 1)^2|\pairs|/\lambda)$ such that $L(z) \in ((H + 1)^2 \cdot 2^{-\alpha - 1}, (H + 1)^2 \cdot 2^{-\alpha}]$. 
Since $L(z) > (H + 1)^2 \cdot 2^{-\alpha - 1}$ and $z$ is dependent on each of $\pairs^{\alpha}_1, \pairs^{\alpha}_2, \ldots, \pairs^{\alpha}_{j^{\alpha}(z) - 1}$, we have
\begin{align*}
&\frac{(f(z) - f'(z))^2}{\|f - f'\|_{\pairs}^2} \le \frac{(H + 1)^2 \cdot 2^{-\alpha}}{(H + 1)^2 \cdot 2^{-\alpha - 1} (j^{\alpha}(z) - 1) + (f(z) - f'(z))^2} \\
\le &\frac{2}{j^{\alpha}(z)} 
= \mathsf{sensitivity}^{\alpha}_{\pairs, \funclass, \lambda}(z) \le \mathsf{sensitivity}^{\mathrm{est}}_{\pairs, \funclass, \lambda}(z).
\end{align*}
\end{proof}
\begin{lem}
$\sum_{z \in \pairs} \mathsf{sensitivity}^{\mathrm{est}}_{\pairs, \funclass, \lambda}(z) \le 4 \ds_E(\funclass, \lambda / |\pairs|)\log((H + 1)^2|\pairs|/\lambda) \ln|\pairs|$.
\end{lem}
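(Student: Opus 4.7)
The plan is to mirror almost verbatim the combinatorial argument used to prove Lemma~\ref{lem:sum_sensitivity}, now reading off the bound directly from the structure imposed by Algorithm~\ref{alg:estimate}. Observe that the outer loop of the algorithm produces, for each scale $\alpha \in \{0, 1, \ldots, \log((H+1)^2|\pairs|/\lambda) - 1\}$, an explicit partition $\pairs = \bigcup_{j=1}^{N_\alpha + 1} \pairs_j^\alpha$ where $\pairs_j^\alpha$ is built greedily so that for $j \le N_\alpha$ every element of $\pairs_j^\alpha$ is $(H+1)^2 \cdot 2^{-\alpha-1}$-independent of its predecessors in $\pairs_j^\alpha$. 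This is precisely the dyadic/independent-decomposition construction in Lemma~\ref{lem:sum_sensitivity}, but now baked into the algorithm.

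First I would rewrite the quantity of interest using linearity:
\[
\sum_{z \in \pairs} \sen^{\mathrm{est}}_{\pairs,\funclass,\lambda}(z)
= 1 + \sum_{\alpha = 0}^{\log((H+1)^2|\pairs|/\lambda) - 1}\,\sum_{z \in \pairs}\frac{2}{j^\alpha(z)},
\]
so it suffices to bound each inner $\alpha$-sum by $3\, \ds_E(\funclass, (H+1)^2 \cdot 2^{-\alpha-1}) \ln|\pairs|$, after which monotonicity of the eluder dimension in its second argument lets us replace $(H+1)^2 \cdot 2^{-\alpha-1}$ by $\lambda/|\pairs|$ and absorb the constant.

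The key step is the per-scale bound. Regrouping the sum by which subsequence each $z$ landed in,
\[
\sum_{z \in \pairs} \frac{2}{j^\alpha(z)}
= \sum_{j=1}^{N_\alpha} |\pairs_j^\alpha|\cdot\frac{2}{j} + |\pairs_{N_\alpha+1}^\alpha| \cdot \frac{2}{N_\alpha+1}.
\]
By the definition of $(H+1)^2 \cdot 2^{-\alpha-1}$-independence, $|\pairs_j^\alpha| \le \ds_E(\funclass, (H+1)^2 \cdot 2^{-\alpha-1})$ for each $j \in [N_\alpha]$, so the first sum is at most $2\, \ds_E(\funclass, (H+1)^2 \cdot 2^{-\alpha-1}) \ln(N_\alpha) \le 2\, \ds_E(\funclass, (H+1)^2 \cdot 2^{-\alpha-1}) \ln|\pairs|$. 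For the tail term we use only the trivial bound $|\pairs_{N_\alpha+1}^\alpha| \le |\pairs|$ together with the choice $N_\alpha = |\pairs|/\ds_E(\funclass,(H+1)^2 \cdot 2^{-\alpha-1})$, giving $|\pairs|\cdot 2/N_\alpha \le 2\, \ds_E(\funclass,(H+1)^2 \cdot 2^{-\alpha-1})$. Combined, the inner $\alpha$-sum is at most $3\, \ds_E(\funclass,(H+1)^2 \cdot 2^{-\alpha-1}) \ln|\pairs|$.

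Finally, summing over the $\log((H+1)^2|\pairs|/\lambda)$ scales and invoking monotonicity of $\ds_E(\funclass,\cdot)$ (each argument $(H+1)^2 \cdot 2^{-\alpha-1}$ is at least $\lambda/|\pairs|$ over the range of $\alpha$), then folding the additive $1$ into the constant, yields the stated bound
\[
\sum_{z \in \pairs}\sen^{\mathrm{est}}_{\pairs,\funclass,\lambda}(z)
\le 4\, \ds_E(\funclass, \lambda/|\pairs|) \log((H+1)^2|\pairs|/\lambda) \ln|\pairs|.
\]
There is essentially no obstacle here that was not already overcome in Lemma~\ref{lem:sum_sensitivity}; the only point requiring care is making sure that the $(N_\alpha+1)$-st ``overflow'' bucket does not blow up the bound, which is resolved by the specific choice of $N_\alpha$ that makes $|\pairs|/N_\alpha$ itself equal to $\ds_E(\funclass,(H+1)^2 \cdot 2^{-\alpha-1})$.
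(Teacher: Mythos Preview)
Your proposal is correct and follows essentially the same approach as the paper's own proof: both regroup $\sum_{z}2/j^\alpha(z)$ by bin, use $|\pairs_j^\alpha|\le \ds_E(\funclass,(H+1)^2\cdot 2^{-\alpha-1})$ for $j\le N_\alpha$ together with the harmonic bound, handle the overflow bucket via $|\pairs|/N_\alpha = \ds_E$, and then sum over scales using monotonicity of the eluder dimension. The only cosmetic difference is that the paper applies monotonicity $\ds_E(\funclass,(H+1)^2\cdot 2^{-\alpha-1})\le \ds_E(\funclass,\lambda/|\pairs|)$ already at the per-$\alpha$ stage, whereas you defer it to the final summation.
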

\begin{proof}
For each $0 \le \alpha < \log((H + 1)^2|\pairs|/\lambda)$, by the definition of $(H + 1)^2 \cdot 2^{-\alpha - 1}$-independence, we have \[|\pairs^{\alpha}_j| \le \ds_E(\funclass, (H + 1)^2 \cdot 2^{-\alpha - 1}) \le \ds_E(\funclass, \lambda / |\pairs|)\] for each $j \in [N_{\alpha}]$.
Therefore,  
\begin{align*}
&\sum_{z \in \pairs} \mathsf{sensitivity}^{\alpha}_{\pairs, \funclass, \lambda}(z) \le \sum_{z \in \pairs} \frac{2}{j^{\alpha}(z) }
= \sum_{j^{\alpha}(z) = N_{\alpha}} \frac{2}{j^{\alpha}(z) } +  \sum_{j^{\alpha}(z) > N_{\alpha}} \frac{2}{j^{\alpha}(z) }\\
\le &2 \ds_E(\funclass, \lambda / |\pairs|) \ln |\pairs| + 2\ds_E(\funclass, (H + 1)^2 \cdot 2^{-\alpha - 1}) \le 3 \ds_E(\funclass, \lambda / |\pairs|) \ln |\pairs|.
\end{align*}
Therefore, 
\begin{align*}
&\sum_{z \in \pairs} \mathsf{sensitivity}^{\mathrm{est}}_{\pairs, \funclass, \lambda}(z)   \le 3 \ds_E(\funclass, \lambda / |\pairs|) \log((H + 1)^2|\pairs|/\lambda)\ln |\pairs| + 1 \\
\le & 4\ds_E(\funclass, \lambda / |\pairs|) \log((H + 1)^2|\pairs|/\lambda)\ln |\pairs|.
\end{align*}
\end{proof}
 \end{document}